\newcommand{\vb}{\mathbf{v}}
\newcommand{\cL}{\mathcal{L}}
\newcommand{\cV}{\mathcal{V}}
\newcommand{\RR}{\mathbb{R}}
\newcommand{\argmax}{\mathop{\mathrm{argmax}}}
\newcommand{\norm}[1]{\left\|#1\right\|}
\def\abs#1{\left| #1 \right|}
\def\va{{\bm{a}}}
\def\vb{{\bm{b}}}
\newcommand{\ve}{\@ifnextchar\bgroup{\velong}{{\bm{e}}}}
\newcommand{\velong}[1]{{\bm{#1}}}
\def\vk{{\bm{k}}}
\def\vm{{\bm{m}}}
\def\vp{{\bm{p}}}
\def\vq{{\bm{q}}}
\def\vr{{\bm{r}}}
\def\vs{{\bm{s}}}
\def\vu{{\bm{u}}}
\def\vv{{\bm{v}}}
\def\vw{{\bm{w}}}
\def\vx{{\bm{x}}}
\def\vy{{\bm{y}}}
\def\vz{{\bm{z}}}
\def\vtheta{{\bm{\theta}}}
\def\mA{{\bm{A}}}
\def\mB{{\bm{B}}}
\def\mE{{\bm{E}}}
\def\mI{{\bm{I}}}
\def\mK{{\bm{K}}}
\def\mQ{{\bm{Q}}}
\def\mS{{\bm{S}}}
\def\mV{{\bm{V}}}
\def\mW{{\bm{W}}}
\newcommand{\aux}[1]{#1_{\text{aux}}}
\newcommand{\simu}[1]{#1_{\text{sim}}}
\newcommand{\Loss}{\mathcal{L}}
\newcommand{\attsplit}{\textsc{Split}}
\newcommand{\attmerge}{\textsc{Vectorize}}
\newcommand{\attnfn}{f_{\mathrm{attn}}}
\newcommand{\vgamma}{ \mathbf{\gamma} }
\newcommand{\losspartial}[1]{\partial_{#1}}
\newcommand{\act}{\sigma_{\textrm{act}}}
\newcommand{\normalize}{f}
\newcommand{\ap}[1]{\textcolor{blue}{} }
\newcommand{\sm}[1]{\textcolor{red}{} }
\newcommand{\xm}[1]{\textcolor{green}{} }
\newcommand{\simulator}{\textsc{TinT}}
\newcommand{\simuw}[1]{ {#1}^{\textsc{TinT}}}
\newcommand{\gelu}{\textsc{GeLU}}
\newcommand{\layernorm}{f_{\text{ln}}}
\newcommand{\lora}{\textsc{LoRA}}
\newcommand{\ia}{\textsc{IA3}}
\newcommand{\apnote}[1]{{\color{blue}}[]}  
\newcommand{\snote}[1]{{\color{red}}[]} 
\newcommand{\mz}[1]{{\color{orange}}[]}  
\theoremstyle{plain}
\newtheorem{theorem}{Theorem}[section]
\newtheorem{lemma}[theorem]{Lemma}
\theoremstyle{definition}
\newtheorem{definition}[theorem]{Definition}
\theoremstyle{remark}
\newtheorem{remark}[theorem]{Remark}
\icmltitlerunning{Trainable Transformer in Transformer}
\begin{document}

\twocolumn[
\icmltitle{Trainable Transformer in Transformer}



\icmlsetsymbol{equal}{*}

\begin{icmlauthorlist}
\icmlauthor{Abhishek Panigrahi}{}
\icmlauthor{Sadhika Malladi}{}
\icmlauthor{Mengzhou Xia}{}
\icmlauthor{Sanjeev Arora}{}
\end{icmlauthorlist}

\begin{center} 
{\mbox{\{ap34,smalladi,mengzhou,arora\}@cs.princeton.edu}}
\end{center} 
\begin{center} 
{\centering\mbox{Department of Computer Science, Princeton University}}
\end{center} 



\icmlkeywords{Machine Learning, ICML}

\vskip 0.3in
]




\begin{abstract}
 Recent works attribute the capability of in-context learning (ICL) in large pre-trained language models to implicitly simulating and fine-tuning an internal model (e.g., linear or 2-layer MLP) during inference. 
 However, such constructions require large memory overhead, which makes simulation of more sophisticated internal models intractable.
In this work, we propose a new efficient construction, {\em Transformer in Transformer} (in short, \textsc{TinT}), that allows a transformer to simulate and fine-tune more complex models during inference (e.g., pre-trained language models). 
 In particular, we introduce  innovative approximation techniques that allow a \textsc{TinT} model with less than 2 billion parameters to simulate and fine-tune a 125 million parameter transformer model within a single forward pass.
 \textsc{TinT} accommodates many common transformer variants and its design ideas also improve the efficiency of past instantiations of simple models inside transformers.  
We conduct end-to-end experiments to validate the internal fine-tuning procedure of \textsc{TinT} on various language modeling and downstream tasks. For example, even with a limited one-step budget, we observe \textsc{TinT} for a \textsc{OPT-125M} model improves performance by $4-16\%$ absolute on average compared to \textsc{OPT-125M}. These findings suggest that large pre-trained language models are capable of performing intricate subroutines. To facilitate further work, a modular and extensible \ifthenelse{\boolean{arxiv}}{ \href{https://github.com/abhishekpanigrahi1996/transformer_in_transformer}{codebase \footnote{\url{https://github.com/abhishekpanigrahi1996/transformer_in_transformer}}}}{codebase}  for \textsc{TinT} is included. 
\end{abstract}

\section{Introduction}
\label{sec:construction}

\begin{figure*}[t] 
    \centering
    \fbox{ 
        \begin{minipage}{\textwidth} 
            \centering
            \vspace{2pt}
            \textbf{\simulator{} can efficiently perform simulated gradient descent of an auxiliary model.} \\ 
            \vspace{4pt} 
            \begin{theorem}
    Consider an auxiliary transformer with  $L$ layers, $\aux{D}$ embedding dimension, $\aux{H}$ attention heads, and a maximum sequence length of $\aux{T}$. 
    Given a hyperparameter $S$ (see~\Cref{sec:stack}), \simulator{} can perform an efficient forward pass~(\Cref{sec:exposition_linear_forward}), compute the simulated gradient~(\Cref{sec:modification}), and evaluate the updated auxiliary model with a total of $$\left(\frac{(c_1 S^2 + c_3) \aux{D}^2}{\min(\aux{H}, S^2)} \cdot \aux{D}^2 + c_2 S \aux{D} \min(S^2, \aux{H}) + c_3 \frac{\aux{T} \aux{D} S}{ \min(\aux{H}, S^2) } \right) L$$ parameters, with constants $c_1, c_2, c_3 < 150$.
    The \simulator{} model has $\simu{D} = S\aux{D}$ embedding dimension and $\simu{H}=\min(S^2, \aux{H})$  attention heads. 
    See~\Cref{tab:construction} for a detailed breakdown of the parameters.
    \label{thm:main}
\end{theorem}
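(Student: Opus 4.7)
The plan is to prove this by a modular decomposition: an auxiliary transformer layer factors into a small fixed set of primitive operations (linear projections for queries/keys/values/outputs, attention score computation, softmax, residual, layer-norm, MLP with activation), and for each primitive I would exhibit a \simulator{} sub-module that simulates its forward evaluation, its backward gradient computation, and its parameter update. The total parameter count then follows by adding the contributions of each sub-module and multiplying by the depth $L$. The stacking hyperparameter $S$, introduced in~\Cref{sec:stack}, controls how many auxiliary-token embeddings are packed into a single \simulator{} token; this is what produces the trade-off between the $S^2 \aux{D}^2$ term (cost of emulating an $\aux{D}\times\aux{D}$ linear map when $S$ auxiliary slots share a token) and the $S \aux{D}\min(S^2,\aux{H})$ term (cost of emulating attention when the $\aux{H}$ heads are redistributed across $\min(S^2,\aux{H})$ \simulator{} heads).

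The concrete steps I would carry out, in order, are as follows. First, I would invoke the efficient linear-forward construction of~\Cref{sec:exposition_linear_forward} to obtain, for each auxiliary linear projection $W\in\RR^{\aux{D}\times\aux{D}}$, a \simulator{} block of width $\simu{D}=S\aux{D}$ whose parameter count is $\Theta(S^2\aux{D}^2/\min(\aux{H},S^2))\cdot \aux{D}^2$ up to an absolute constant; this accounts for the $c_1$ term and is the dominant cost. Second, I would simulate the attention score/softmax/value-aggregation pipeline using the \simulator{}'s own attention heads, set to $\simu{H}=\min(S^2,\aux{H})$, which contributes the $c_2 S \aux{D}\min(S^2,\aux{H})$ term. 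Third, I would handle layer-norm, residuals, positional bookkeeping and the MLP activation using the fixed-size modules of~\Cref{sec:modification}, whose parameter count grows only as $\aux{T}\aux{D}S/\min(\aux{H},S^2)$ and which contribute the $c_3$ term. Fourth, I would reuse each of these modules three times across the forward pass, the simulated-gradient pass, and the updated-model evaluation; since the backward construction in~\Cref{sec:modification} reuses the same weight layouts (only transposed where needed) this introduces only a constant blow-up absorbable into $c_1,c_2,c_3$. Finally, I would sum the three contributions and multiply by $L$, verify that each absolute constant is below $150$ by reading off the breakdown in~\Cref{tab:construction}, and conclude.

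The main obstacle I anticipate is the backward/update pass: computing the simulated gradient of an attention layer requires materializing Jacobians of the softmax and of the query/key bilinear form, and packing these into the \simulator{}'s token stream without inflating the embedding dimension beyond $S\aux{D}$ or the head count beyond $\min(S^2,\aux{H})$. This is where the stacking parameter $S$ plays its subtle role: for $S^2\le \aux{H}$ one can devote a whole \simulator{} head to each pair of auxiliary slots and the Jacobian emulation is essentially free, but for $S^2>\aux{H}$ one must serialize the computation across tokens, which is precisely what drives the $\min(\aux{H},S^2)$ factor in the denominator. Verifying that this serialization can still be implemented with constants below $150$, rather than merely $\Osm{1}$, is the tightest quantitative step and would require careful bookkeeping against the per-module counts of~\Cref{tab:construction}; the remaining pieces (linear projections, MLP, layer-norm) are comparatively routine once the attention case is handled.
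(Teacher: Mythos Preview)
Your high-level skeleton (decompose each transformer block into primitives, build forward/backward/descent sub-modules for each, sum and multiply by $L$, then read constants off \Cref{tab:construction}) matches the paper's strategy. However, several of your attributions of where the terms come from are off, and one of these misunderstandings would actually block the argument.

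First, the role of $S$. In the paper (\Cref{sec:stack}), $S$ is the number of \emph{rows of an auxiliary weight matrix} stacked into a single \emph{prefix} embedding, not the number of auxiliary token embeddings packed into a \simulator{} token. This is why $\simu{D}=S\aux{D}$: each prefix token holds $S$ rows of length $\aux{D}$. The $S^2\aux{D}^2/\min(\aux{H},S^2)$ contribution is then exactly the cost of the $\simu{H}$-split linear operation of \Cref{sec:H-split_operation_app}, namely $Q_{\text{split}}=\simu{D}^2/\simu{H}+\simu{H}\simu{D}$ with $\simu{D}=S\aux{D}$ and $\simu{H}=\min(S^2,\aux{H})$; the second summand of $Q_{\text{split}}$ is the $c_2 S\aux{D}\min(S^2,\aux{H})$ term. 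The $\aux{T}$ term does not come from layer-norm or activations at all: it is the cost $3\simu{T}\simu{D}/\simu{H}$ of the positional-embedding parameters $\mW^p_Q,\mW^p_K,\mW^p_V$ in the \simulator{} attention module (\Cref{def:self-attn_construct}), which appears once per Linear Forward module (see the definition of $Q$ in \Cref{tab:construction}).

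Second, and more importantly, your anticipated obstacle---materializing the softmax and query/key Jacobians in the backward pass---is precisely what the paper \emph{does not do}. The whole point of the ``simulated gradient'' in \Cref{sec:modification} is to avoid that computation: backpropagation through layer-norm is replaced by a first-order Taylor surrogate (\Cref{sec:first_order_approx}), and backpropagation through self-attention places a stop-gradient on the attention scores so that only $\mW_V^\top\losspartial{\vv_t}$ is propagated and only $\mW_V,\vb_V$ are updated (\Cref{sec:stop_gradients}, \Cref{def:self-attn-appr-backprop,def:self-attn-value-update}). Without these approximations the constants would not stay below $150$ (indeed the paper argues exact backprop would need $\Theta(\aux{T}\aux{H})$ extra embedding dimension and several additional attention/MLP layers). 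So your plan to ``work through'' the full Jacobian is the wrong route: you need instead to invoke the approximate backward modules, whose parameter counts are then no larger than the forward modules, which is what lets you simply triple the forward cost and absorb the factor into $c_1,c_2,c_3$.
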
 
\vspace{2pt}
        \end{minipage}
    }
\end{figure*}
Large transformers~\citep{vaswani2017attention} have brought about a revolution in language modeling, with scaling yielding significant advancements in capabilities~\citep{brown2020language, chowdhery2022palm}. 
These capabilities include performing in-context learning or following natural language instructions at inference time.

Researchers have tried to understand how these models can learn new tasks without parameter updates~\citep{garg2022can,von2023uncovering,xie2022an,nanda2023progress}. 
A popular hypothesis is that in-context learning corresponds to the transformer (referred to as the {\em simulator} from now on)  simulating gradient-based learning of a smaller model (called \emph{auxiliary} model) that is embedded within it. 

From perspective of AI safety and alignment~\citep{amodei2016concrete, leike2018scalable, askell2021general}, the ability of a larger model to use input data (which could be arbitrary in a deployed setting) to implicitly train an auxiliary model  feels worrisome. 
This concern felt minor
due to efficiency considerations: previous analyses and experiments required the auxiliary model to be quite tiny compared to the simulator. 
For instance, simulating and training an auxiliary model that is a linear layer  requires tens of millions of parameters in the simulator~\citep{akyurek2022learning}. 
This scaling is even more dramatic if the auxiliary model is a multi-layer fully-connected net~\cite{giannou2023looped}. 


Our primary contribution is an explicit and nontrivial construction of a simulator called \simulator{} that explicitly adapts to the context without parameter updates.
In particular, we show that a forward pass through a modestly sized \simulator{} can involve gradient-based training of an auxiliary model that is itself a  large transformer.
For example, we show that \simulator{} with 2B parameters can faithfully simulate fine-tuning a 125M parameter auxiliary transformer in a single forward pass.
(Prior constructions would have required trillions of parameters in the simulator for a far simpler auxiliary model.) 

Our main result is described in \Cref{thm:main}, which details how the size of \simulator{} depends on the auxiliary model. Our construction is generally applicable to diverse variants of pre-trained language models.
The rest of the paper is structured to highlight the key design choices and considerations in \simulator{}. 
\begin{enumerate}[leftmargin=*]
    \item \Cref{sec:design} discusses the overall design decisions required to make \simulator{}, including how the simulator can read from and write to the auxiliary model and how the data must be formatted.
    \item \Cref{sec:exposition_linear_forward} uses the linear layer as an example to describe how highly parallelized computation and careful rearrangement of activations enable \simulator{} to efficiently simulate the forward pass of the auxiliary model.
    \item \Cref{sec:modification} describes how \simulator{} uses first-order approximations and stop gradients to compute the \emph{simulated gradient} of the auxiliary model. 
    \item \Cref{sec:experiments} performs experiments comparing \simulator{} to suitable baselines in language modeling and in-context learning settings. Our findings validate that the simulated gradient can effectively update large pre-trained auxiliary models. Notably, we instantiate \simulator{} in a highly extensible codebase, making \simulator{} the first such construction to undergo end-to-end evaluation.
\end{enumerate}

Due to the complexity of the construction, we defer the formal details of \simulator{} to the appendix.

\begin{figure*}[t]
    \centering
    \includegraphics[width=\textwidth]{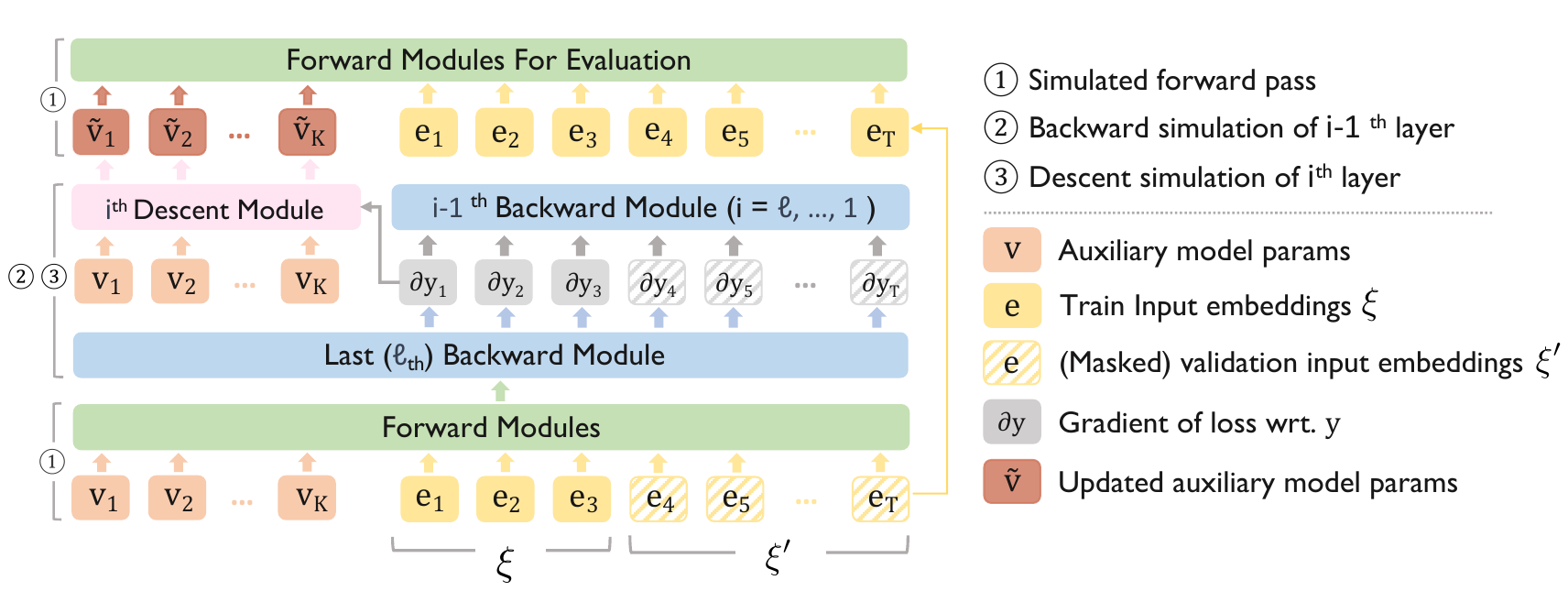}
    \caption{The overall structure of \textsc{TinT} (see~\Cref{sec:design} for an overview). Each forward, backward, and descent module is represented using combinations of linear, self-attention, layernorm, and activation layers.
    The input consists of prefix embeddings (\Cref{def:prefix_embs}) that represent relevant auxiliary model parameters in each layer followed by natural language input. A prefix mask separates the train and test segments of the input (\textsection\ref{sec:input_structure}). 
    }
    \label{fig:general_structure_simulator}
\end{figure*}

\begin{figure*}[t]
    \centering
    \includegraphics[width=\textwidth]{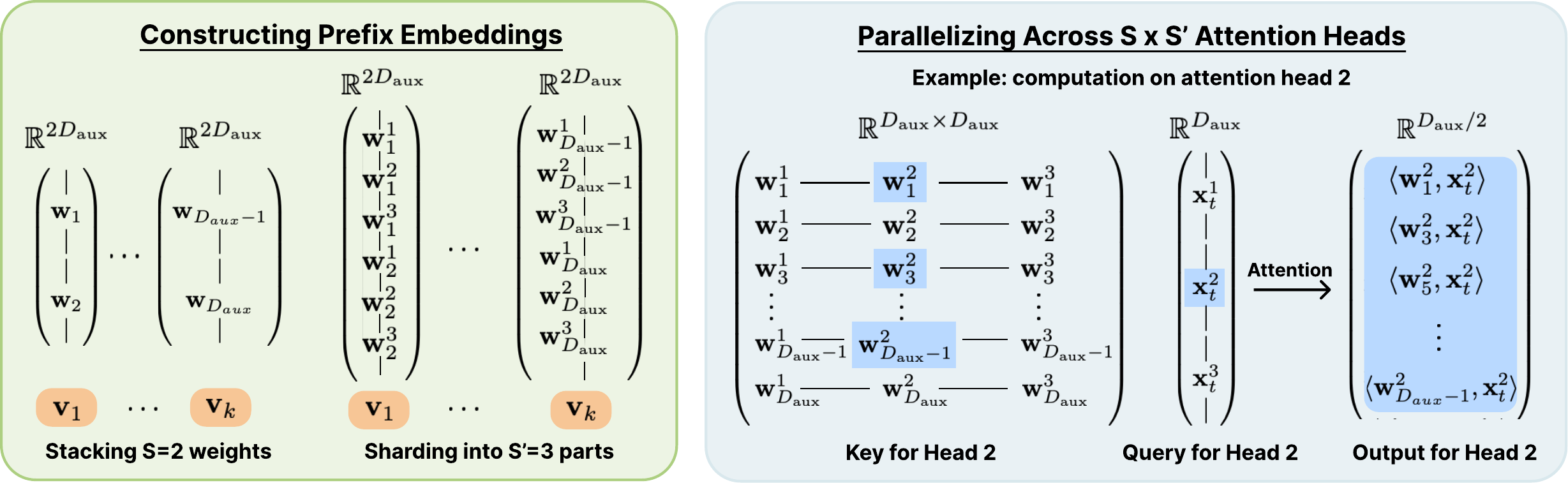}
    \caption{\simulator{} simulates the forward pass of a linear layer with a $\simu{H}$-head attention layer ($\simu{H}=6$ here). We stack $S$ weights per prefix embedding to reduce the number of prefix embeddings required ($S=2$ here). We furthermore shard each weight and token embedding $\vx_t$ into $S'$ shards and compute inner products of each shared in parallel using $S \times S'$  attention heads ($S'=3$ here). Please see~\Cref{sec:stack}.}
    \label{fig:linear_forward}
\end{figure*}

\section{Design Considerations}\label{sec:design}
Our goal is to construct a simulator that can train an auxiliary model over the course of an inference pass. 
This procedure requires four steps:
\begin{enumerate}[leftmargin=*]
	\item \textbf{Forward Pass}: A forward pass to compute the auxiliary model output $f(\xi;\aux{\vtheta})$ on training input $\xi$ and a loss $\Loss$.
	\item \textbf{Backward Pass}: Backpropagation to compute the gradient of the auxiliary model $\nabla_{\aux{\vtheta}} \Loss(f(\xi;\aux{\vtheta}))$.
	\item \textbf{Parameter Update}: Update the auxiliary model using gradient descent, setting $\aux{\vtheta}' = \aux{\vtheta} -\eta\nabla_{\aux{\vtheta}} \Loss(f(\xi;\aux{\vtheta}))$.
	\item \textbf{Output}: Output next-token predictions $f(\xi';\aux{\vtheta}')$ on a test input $\xi'$ using the updated auxiliary model. 
\end{enumerate}
Note that steps 1-3 can be looped to train the auxiliary model for a few steps\footnote{Looping steps 1-3 scales the depth of the simulator model.}, either on the same training data or on different training data for each step, before evaluating it on the test input~\citep{giannou2023looped}. 
The above method highlight two crucial features of the simulator: (1) it has access to some amount of training data, and (2) it can use (i.e., read) and update (i.e., write) the auxiliary model.
Below, we discuss how to design a modest-sized simulator around these two considerations.

\subsection{Input structure}\label{sec:input_structure}
For simplicity, we describe only one update step on a single batch of training data $\xi$ but note that our formal construction and our experiments handle multiple training steps (see~\Cref{def:finetune}).
Steps 1 and 4 show that the simulator must access some training data $\xi$ to train the auxiliary model and some testing data $\xi'$ on which it evaluates the updated auxiliary model. 
For the sake of illustration we consider the following simple setting: given a sequence of input tokens $\ve_1,...,\ve_T$, we split it into training data $\xi=\ve_1,...,\ve_r$ and testing data $\xi'=\ve_{r+1},...,\ve_T$.

Suppose $\xi$ contains an in-context input-output exemplar and $\xi'$ contains a test input.
Then, the simulator performs a very natural operation of training the auxiliary model on a task-specific example and outputs results for the test example. 

On the other hand, if the input is not specially formatted, $\xi$ and $\xi'$ may simply contain some natural language tokens.
In this case, the simulator is using the first part of the context tokens to do a quick fine-tune of the auxiliary for some task before outputting the subsequent tokens with the auxiliary model. In a worst-case scenario, users might provide harmful contents, leading the model to implicitly fine-tune on them and potentially output even more harmful content.

Our experiments consider many options for splitting a sequence into $\xi$ and $\xi'$, and we defer a more detailed discussion of possible setups to~\Cref{sec:experiments}.

\looseness-1\paragraph{Accessing Training Labels.} 
The simulator must be able to see the labels of the training tokens 
in order to compute the loss $\cL$ (usually, the autoregressive cross-entropy loss) in step 1. 
For example, in \cref{fig:general_structure_simulator}, when we compute the loss for the token $\ve_2$ in the second position, we need to use its label $\ve_3$ in the third position.
However, this is not possible if the simulator uses strictly autoregressive attention (\cref{sec:lm_head} contains a more general discussion).
We thus use a bidirectional attention mask on the training tokens and autoregressive attention on the evaluation portion.
We note that encoding relevant (e.g., retrieved) context with bidirectional attention is a popular way to improve autoregressive capabilities in language modeling and natural language tasks~\citep{raffel2020exploring,borgeaud2022improving,izacard2020leveraging,izacard2023atlas,wang2023shall,tay2022ul2}.
This empirical approach is similar in motivation to how \simulator{} uses a few context tokens to adapt the auxiliary model to a given input.
Having established the training and testing data, we can now move to discussing how the simulator can access (i.e., read) and update (i.e., write to) the auxiliary model at inference time.  

\subsection{Read and write access to auxiliary model}
As discussed in the start of this section, the simulator must have read and write access to the parameters of the auxiliary model.
Crucially, the simulator must do at least two forward passes through the auxiliary model, one with the current parameters $\aux{\vtheta}$ and one with the updated parameters $\aux{\vtheta}'$.

The straightforward  way to simulate the forward pass of the auxiliary model would be to store its weights in the simulator's weights and run a forward pass as usual.
One can analogously simulate the backward pass according to the loss $\Loss$ to compute the gradients.
However, \textbf{the simulator cannot update its own weights at inference time}, so this strategy would not permit the model to write the updated parameters $\aux{\vtheta}'$ and later read them when simulating the second forward pass.
Therefore, the auxiliary model $\aux{\vtheta}$ must be available in the activations of the simulator. 

To this end, \citet{wei2022statistically,perez2021attention} model the simulator after a Turing machine, where the activation $\ve_t^{(\ell)} \in\RR^{\simu{D}}$ in each layer acts as a workspace for operations, and computation results are copied to and from memory using attention operations.
In this paradigm, if $\aux{D} = 768$, computing a dot product $\langle \vw, \vx_t^{(\ell)}\rangle$ with weight $\vw\in\RR^{768}$ requires at least $6.4$ million parameters in the simulator\footnote{
Using a feedforward module to mimic the dot product (as in \citet{akyurek2022learning}, see thm. \ref{thm:gelu_multiplication}), where the simulator embedding comprises $[\vw, \vx_t] \in \RR^{1536}$, necessitates a minimum of $4.7$ million parameters. Using an attention module to copy the weight from memory adds another $1.7$ million parameters.}.
Given the pervasiveness of dot products in neural network modules, this strategy would yield a simulator with trillions of parameters.

Alternatively, one can store parameters in the first few context tokens and allow the attention modules to attend to those tokens~\citep{giannou2023looped}. 
This removes the need for copying and token-wise operations. 
Then, the same dot product requires only a self-attention module with $1.7$ million parameters.
We thus adopt this strategy to provide relevant auxiliary model weights as \emph{prefix embeddings}.
\begin{definition}[Prefix Embeddings]
    $\{ \vv_j^{(\ell)} \}_{j=1}^K$ denotes the $K$ prefix embeddings at the $\ell$th layer  in \simulator . These contain \emph{relevant} auxiliary model weights or simulated activations.
    \label{def:prefix_embs}
\end{definition}
We now consider how to efficiently simulate the building block of neural networks: matrix-vector multiplication.
In the next section, we demonstrate that a careful construction of the prefix embeddings enables efficient parallelizaton of matrix-vector products across attention heads.

\section{Efficient Forward Propagation}\label{sec:exposition_linear_forward}
We now discuss how \simulator{} performs a highly efficient forward pass through the auxiliary model.
Here, we focus on the linear layer  because it is repeated many times in various transformer modules (e.g., in self-attention), so improving the efficiency dramatically reduces \simulator{}'s size.
\begin{definition}[Linear layer]\label{def:linear_mainpaper}
    For a weight $\mW\in\RR^{\aux{D} \times \aux{D}}$, a linear layer takes $\vx\in\RR^{\aux{D}}$ as input and outputs $\vy = \mW\vx$.\footnote{Linear layers are applied token-wise, so we can consider a single position $t$ without loss of generality.}
\end{definition}
We compute $\vy$ coordinate-wise, i.e., $\langle \vw_i, \vx_t\rangle$ for all $i \in [\aux{D}]$, where $\vw_i$ is the $i$th row of $\mW$. 
The simulator represents $\langle \vw_i, \vx_t\rangle$ as an attention score between the row $\vw_i$ and the input $\vx_t$. 
So, the input embeddings $\ve_t$ contain $\vx_t$ in the first $\aux{D}$ coordinates, and the rows $\{\vw_i\}$ of the weight matrix $\mW$ are in prefix embeddings $\{\vv_j\}$ (def. \ref{def:prefix_embs}).

We strategically distribute the weights (\textsection\ref{sec:stack}) and aggregate the parallelized computation results (\textsection\ref{sec:aggregate}).
As we briefly mentioned in the previous section, a straightforward construction of the linear layer would use the context and attention heads inefficiently.
Our construction instead parallelizes the computation across attention heads in such a way that aggregating the output of the linear operation can also be conducted efficiently.

\subsection{Stacking and Sharding}\label{sec:stack}
We partition the inner product computation across attention heads by carefully rearranging the weights and activations via stacking and sharding (\Cref{fig:linear_forward}).

Instead of representing each weight $\vw_i$ as its own prefix token $\vv_i$, we \emph{stack} $S$ weights on top of each other to form each prefix embedding $\vv_i$. $S$ drives a trade-off between the embedding dimension of the \simulator, $\simu{D}= \aux{D}S$, and the context length to the \simulator, $\simu{T} = K+\aux{T}$. 
We set $S=4$.

A simple strategy now would be to use different attention heads to operate on different rows; however, this would still use only $S$ attention heads whereas we could parallelize across many more heads.
We instead parallelize across more attention heads, where each head is responsible for computing the inner product on a subset of the coordinates. 
We \emph{shard} each individual weight and the activation into $S'$ parts and compute the inner product on each of the $S'$ parts in parallel 
We set $S$ and $S'$ such that $\simu{H} = S\times S'$, thereby using all of \simulator{} heads to efficiently compute the dot products. 

\subsection{Efficient Aggregation}\label{sec:aggregate}
The attention module outputs a sparse matrix with shape $(\simu{D} / \simu{H})\times \simu{H}$ containing the inner products on various subsets of the coordinates in its entries.
To complete the linear forward pass, we need to sum the appropriate terms to form a $\simu{D}$-length vector with $\mW\vx$ in the first $\aux{D}$ coordinates.
Straightforwardly summing along an axis aggregates incorrect terms, since the model was sharded.
On the other hand, rearranging the matrix would require an additional $\simu{D}\times\simu{D}$ linear layer.
Instead, \simulator{} saves a factor of $\simu{H}\times$ parameters 
by leveraging the local structure of the attention output.
We illustrate this visually in \cref{sec:H-split_operation_app}.
This procedure requires $\simu{D}^2/\simu{H} + \simu{D}\simu{H}$ parameters. 
This efficient aggregation also compresses the constructions for the \simulator's backpropagation modules for layer normalization and activations (\cref{sec:LNappendix,sec:act_appendix}).

\section{Simulated Gradient} \label{sec:modification}
\simulator{} adapts backpropagation to compute gradients 
~(\Cref{fig:general_structure_simulator}).
We aim to train a capable (i.e., pre-trained) auxiliary model for just a few steps, so high precision gradients may be unnecessary. Instead, \simulator{} performs an approximate backpropagation.  
\simulator{} then uses this \emph{simulated gradient} to update the auxiliary model.
Prior works computed similar approximate gradients in hopes of more faithfully modeling neurobiology~\citep{scellier2017equilibrium,hinton2022forwardforward} or improving the efficiency of training models~\citep{hu2021lora,malladi2023finetuning}.
We note that the approximations in the simulated gradients can be made stronger at the cost of enlarging \simulator{}.
Indeed, one could construct a simulator to \emph{exactly} perform the procedure outlined in \textsection{\ref{sec:design}}, though it would be orders of magnitude larger than \simulator{}.
For brevity's sake, we focus on the key approximations and design choices and defer formal details to the appendix.

\subsection{First-order approximations} \label{sec:first_order_approx}
We use first-order approximations of gradients to backpropagate through 
the layer normalization layer.\footnote{We discuss a layer normalization layer $\layernorm$ without scale and bias parameters, but \cref{sec:LNappendix} contains a general construction.} 
It normalizes the input using its mean and standard deviation across the input dimensions. Since the operation is token-wise, we can consider a single position $t$ without loss of generality.

\begin{definition}[Layer normalization]
     A layer normalization layer $\layernorm$  takes input $\vx\in\RR^{\aux{D}}$ and outputs $\vy = (\vx - \mu) / \sigma$, where $\mu$ and $\sigma$ denote its mean and standard deviation.
\end{definition}
\textbf{High precision gradients:} 
Formally, for input-output pair $(\vx, \vy)$, we can compute the  gradients $\partial_{\vy}$, $\partial_{\vx}$ with  chain rule:
\begin{gather}
    \partial_{\vx} = \left(\frac{\partial \layernorm (\vx)}{\partial \vx} \right)^{\top} \losspartial{\vy}  \nonumber \\= \frac{1}{\sigma} \left(  \langle \losspartial{\vy}, \vy \rangle \vy + \losspartial{\vy} - \frac{1}{\aux{D}} \sum_{i=1}^{\aux{D}} \losspartial{y_i} \right). \label{eq:ln_informal}
\end{gather}
\textbf{Inefficiency of exact computation:} 
A \simulator{} layer simulating backpropagation through an auxiliary's layer normalization layer receives  $\losspartial{{\vy}_t}$ and $\vx_t$ in its input embeddings. We go through the exact gradient and why it is inefficient.

For exact computation one could first compute $\vy_t$ using a normalization layer and store in the embeddings. However, inefficiency arises from computing the term  $\langle \losspartial{{{\vy}_t}}, \vy_t \rangle \vy_t$.
To calculate $\langle \losspartial{{{\vy}_t}}, \vy_t \rangle \vy_t$ at each token position $t$, we could either: (1) use a two-layer MLP that focuses on each token separately, or (2) a single self-attention module to treat the operation as a sequence-to-sequence task. 

For (1) we could initially compute $\langle \losspartial{{\vy}_t}, \vy_t \rangle$ via an MLP, followed by computation of $\langle \losspartial{{\vy}_t}, \vy_t \rangle \vy_t$ using another MLP. The element-wise multiplication in embeddings would be facilitated with a nonlinear activation function like \gelu{}~\citep{akyurek2022learning} (refer to thm. \ref{thm:gelu_multiplication} for details). However, this approach would need  substantial number of simulator parameters to represent the MLPs.

\looseness-1Alternatively, we could  use a single self-attention module. Constructing such a module would require careful engineering to make sure the input tokens only attend to themselves while keeping an attention score of $0$ to others. If we used a linear attention, we would need to space out the gradient $\losspartial{{\vy}_t}$ and $\vx_t$ in each position $t$, such that the attention score is $0$ between different tokens. This would require an embedding dimension proportional to the context length. 
On the other hand, if we used a softmax attention module, we would need an additional superfluous token in the sequence. Then, a token at position $t$ would attend to itself with attention $\langle \partial{\vy}_t, \vy_t\rangle$ and to the extra token with an attention score of $1-\langle \partial{\vy}_t, \vy_t\rangle$. The extra token would return a value vector $0$. \sm{Rephrase the prior sentence...I can't understand how this would work} To avoid such inefficiency, we opt for a first-order approximation instead.

\textbf{Efficient approximation:} 
Instead of explicitly computing each term in the chain rule of $\left(\frac{\partial \layernorm (\vx)}{\partial \vx} \right)^{\top} \losspartial{\vy}$ in Eq. \ref{eq:ln_informal}, we instead use a first order Taylor expansion of $\layernorm$.
\begin{align*}
    \layernorm(\vx + \epsilon \losspartial{\vy}) = \layernorm(\vx) + \epsilon \left(\frac{\partial \layernorm (\vx)}{\partial \vx} \right) \losspartial{\vy} + \mathcal{O}(\epsilon^2).
\end{align*}
Rearranging allows us to write
\begin{align*}
    \left(\frac{\partial \layernorm (\vx)}{\partial \vx} \right) \losspartial{\vy} =  \frac{1}{\epsilon} \left( \layernorm (\vx + \epsilon \partial_{\vy}) - \layernorm (\vx) \right) + \mathcal{O}(\epsilon).
\end{align*}
Similar to the computation of Eq. \ref{eq:ln_informal}, we can show  
\begin{align*}
    \frac{\partial \layernorm (\vx)}{\partial \vx} = \frac{1}{\sigma} \left( (1-{\aux{D}}^{-1})\mI - \layernorm(\vx) \layernorm(\vx)^{\top} \right).
\end{align*}
Because  $\partial \layernorm (\vx) / \partial \vx$ is symmetric\footnote{For a linear function $f$ with matrix $\mW$, $\frac{\partial f (\vx)}{\partial \vx} = \mW$. Since $\mW$ may not be a symmetric matrix, this method can't be generally applied to approximately backpropagate linear layers or causal self-attention layers.}, we can write
\begin{align*}
    \left(\frac{\partial \layernorm (\vx)}{\partial \vx} \right)^{\top} \losspartial{\vy} & =  \left(\frac{\partial \layernorm (\vx)}{\partial \vx} \right) \losspartial{\vy} \\& =  \frac{1}{\epsilon} \left( \layernorm (\vx + \epsilon \partial_{\vy}) - \layernorm (\vx) \right) + \mathcal{O}(\epsilon).
\end{align*}

Then, ignoring the small error term, we can use just two linear layers, separated by a normalization layer, to simulate the approximation.

\subsection{Fuzzy backpropagation via stop gradients} \label{sec:stop_gradients}
Self-attention is inherently quadratic, because it uses the keys and queries to compute attention scores between every possible pair of tokens in the sequence.
These scores then linearly combine the value vectors (see def. \ref{def:self-attn_auxiliary_single}). 
Computing the gradient exactly is thus a very complex operation.
Instead, we stop the gradient computation through attention scores in the self-attention layer. 
For similar reasons, we only update the value parameter in the self-attention module.

\textbf{Gradient backpropagation:}
For an input, output sequence pair $\{\vy_t\}, \{\vy_t\}$, if $\{\vq_t, \vk_t, \vv_t\}$ denote the intermediate query, key, value vectors, on gradients $\{ \partial_{\vy_t} \}$, $\{ \partial_{\vx_t} \}$ is given via the chain rule:
\begin{align}
    \losspartial{\vx_t} &= \mQ^{\top} \losspartial{\vq_t} + \mK^{\top} \losspartial{\vk_t} + \mV^{\top} \losspartial{\vv_t}.
    \label{eq:attn_backprop_informal}
\end{align}
Here, $\mV, \mK, \mQ$ denote the query, key, and value matrices. 

\textbf{Inefficiency in exact computation:} 
Here, we demonstrate that simulating computation of the three terms in Eq. \ref{eq:attn_backprop_informal} is inefficient, because
$\losspartial{\vq_t}, \losspartial{\vk_t}$ depend on the derivatives w.r.t. the attention scores. 
As an example, we focus on $\losspartial{\vk_t}$:
\begin{align*}
    \losspartial{\vk_t} = \sum_j a_{t, j} ( (\losspartial{\vy_t})^{\top} \vv_j ) (\vk_j - \sum_{j'} a_{t, j'} \vk_{j'}).
\end{align*}

Computing this term would require us at least 2 self-attention layers and an MLP layer. The first attention layer would compute $(\losspartial{\vy_t})^{\top} \vv_j$ for different token pairs, similar to the forward simulation of a linear layer with linear attention (\textsection{\ref{sec:exposition_linear_forward}}). These would be then multiplied to the pair-wise attention scores $a_{t, j}$  with an MLP to compute $a_{t, j} ( (\losspartial{\vy_t})^{\top} \vv_j )$, with elementwise product would be facilitated by GeLU non-linearity (thm. \ref{thm:gelu_multiplication}). 
These would be finally used by an attention layer to combine the different key vectors. A similar simulation would be necessary to compute $\losspartial{\vq_t}$.

\textbf{Stop gradients through query and key vectors:}
In order to reduce the necessary resources, we ignore the query and key gradients in Eq. \ref{eq:attn_backprop_informal}. When we ignore these gradient components,  $\{ \partial_{\vx_t} \}$ can be simplified as
\begin{align}
    \losspartial{\vx_t} \approx \mV^{\top} \losspartial{\vv_t} = \mV^{\top} \sum_{j} a_{j, t} \losspartial{\vy_t}.
    \label{eq:approx_attn_gradient}
\end{align}
A single self-attention layer can compute this by using the attention scores to combine the token-wise gradients.

\textbf{Why won't it hurt performance?} Estimating $\partial_{\vx_t}$ as described is motivated by recent work \citep{malladi2023finetuning} showing that fuzzy gradient estimates don't adversely affect fine-tuning of pre-trained models. Furthermore, we theoretically show that when the attention head for each position pays a lot of attention to a single token (i.e., behaves like hard attention \citep{perez2021attention}), the approximate gradient in Eq. \ref{eq:approx_attn_gradient} is entry-wise close to the true gradients (thm. \ref{thm:attn_backprop}).

The other approximation is to update only the value parameters $\mV$ of the auxiliary model (\textsection{\ref{sec:self-attnt-backprop_appendix}}). 
This is motivated by parameter efficient fine-tuning methods like \lora{} \cite{hu2021lora} and \ia{} \cite{liu2022few}, which restrict the expressivity of the gradient updates without degrading the quality of the resulting model.
We similarly show in the next section that the simulated gradients in \simulator{} can effectively tune large pre-trained transformers.

\begin{table}[!t]
\centering
\caption{\looseness-1Language modeling results on \textsc{WikiText-103}. We use $30\%, 50\%, 70\%$ and $90\%$ of sequences for training in the language modeling setting (\textsection{\ref{setting:lm}}). $\textsc{TinT}$
improves the auxiliary model perplexities by $0.3-0.7$ absolute on average.
The small perplexity difference between the \simulator{} and explicitly updating the auxiliary model suggests that the simulated gradient (\cref{sec:modification}) can still effectively fine-tune the auxiliary model. 
}
\label{tab:language_modeling}
\resizebox{\columnwidth}{!}{%
\begin{tabular}{@{}llccccc@{}} \toprule
& & \multicolumn{4}{c}{Training proportion} \\
\cmidrule(lr){3-6} 
& Evaluating with& $30\%$   & $50\%$   & $70\%$   & $90\%$ \\
 \cmidrule(r){1-2} \cmidrule(lr){3-6}
\multirow{3}{*}{GPT-2} & \text{Auxiliary Model}    & 25.6 & 24.9  & 24.5 & 23.3 \\
 &\text{Fine-tuning}     & 24.9 & 24.0    & 23.5 & 22.2  \\
 & \simulator{}           & 25.1 & 24.3 & 23.8 & 22.6 \\  
\cmidrule(r){1-2} \cmidrule(lr){3-6} 
\multirow{3}{*}{OPT-125M} & \text{Auxiliary Model} & 29.6 & 28.8 & 28.0 & 28.0  \\
& \text{Fine-tuning} & 29.0 & 28.2 & 27.4 & 27.4 \\
& \simulator{}  & 29.3 & 28.4  & 27.5 & 27.4  \\
 \bottomrule
\end{tabular}}
\end{table}





 \begin{figure*}[t]
    \centering
    \includegraphics[width=0.75\textwidth]{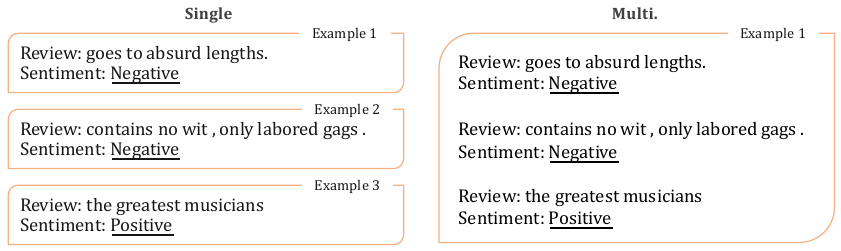}
    \caption{Different settings in few-shot learning ($k=3$) using \simulator{}. The \textbf{Single} mode (left) treats each example as a training datapoint, and the auxiliary model is updated with a batch of inputs (see def.~\ref{def:finetune}). The \textbf{Multi.} mode (right) concatenates all examples to form a single input and uses batch size $1$ in def.~\ref{def:finetune}. For \textbf{Label loss}, only underlined label words are used as training signal, while \textbf{full context loss} includes all tokens.}
    \label{fig:example}
\end{figure*}

\section{Experiments}\label{sec:experiments}

\begin{table*}[t]
  \centering
  \caption{Zero-shot and few-shot in-context learning results across $7$ downstream tasks. All the few-shot results are averaged over three training seeds. \textsc{TinT} consistently surpasses its auxiliary model and achieves comparable performance to one-off dynamic evaluation. \textsc{TinT} outperforms auxiliary models by $3-4\%$ and $12-16\%$ absolute points on average in $0$-shot and  $32$-shot experiments respectively. \textsc{TinT} performs competitively with a similar-sized pre-trained model (\textsc{opt-1.3b}) in both $0$-shot and $32$-shot settings. We show the standard deviation for few-shot settings in parentheses.
  }
  \label{tab:main_result}
  \resizebox{\textwidth}{!}{
  \begin{tabular}{lc|cccccccc}
    \toprule
    \textbf{Model} & \textbf{Shots} & \textbf{Subj} & \textbf{AGNews} & \textbf{SST2} & \textbf{CR} & \textbf{MR} & \textbf{MPQA} & \textbf{Amazon} & \textbf{Avg.} \\
    \midrule
\textsc{OPT-125m} & $0$ & $64.0$ & $66.0$ & $70.5$ & $64.5$ & $71.0$ & $68.0$ & $76.5$ & $68.6$ \\
\textsc{OPT-1.3b} & $0$ & $59.0$ & $55.5$ & $54.0$ & $50.5$ & $52.5$ & $74.0$ & $57.0$ & $57.5$ \\
\textsc{OPT-125m} Fine-tuning  & $0$ & $71.0$ & $67.0$ & $79.5$ & $71.5$ & $70.0$ & $68.0$ & $85.5$ & $73.2$ \\
\rowcolor{gray!10}\textsc{OPT-125m TinT} & $0$ & $67.5$ & $66.0$ & $76.5$ & $69.0$ & $76.0$ & $70.5$ & $78.5$ & $72.0$ \\
 \midrule
\textsc{OPT-125m} & $32$ & $58.7_{(4.9)}$ & $33.7_{(8.4)}$ & $50.8_{(1.2)}$ & $51.3_{(1.9)}$ & $50.0_{(0.0)}$ & $54.3_{(2.5)}$ & $55.0_{(6.7)}$ & $50.5_{(1.9)}$ \\
\textsc{OPT-1.3b} & $32$ & $74.2_{(6.1)}$ & $71.3_{(5.3)}$ & $89.8_{(3.6)}$ & $71.5_{(4.5)}$ & $68.3_{(6.1)}$ & $81.7_{(3.3)}$ & $70.3_{(9.9)}$ & $75.3_{(0.4)}$  \\ 
\textsc{OPT-125m} Fine-tuning & $32$ & $78.0_{(1.4)}$ & $66.7_{(1.6)}$ & $71.5_{(1.4)}$ & $73.7_{(3.3)}$ & $72.0_{(0.0)}$ & $80.7_{(0.6)}$ & $79.8_{(0.2)}$ & $74.6_{(2.7)}$ \\
\rowcolor{gray!10}\textsc{OPT-125m TinT} & $32$ & 
$82.3_{(2.7)}$ & $69.3_{(0.9)}$ & $73.7_{(0.8)}$ & $75.7_{(1.9)}$ & $72.3_{(1.2)}$ & $83.2_{(1.0)}$ & $78.2_{(0.2)}$ & $76.4_{(0.7)}$ \\ 
\bottomrule
  \end{tabular}}
\end{table*}

We evaluate the performance of the \simulator{}s constructed using GPT2 and \textsc{OPT-125M} as auxiliary models. 
The findings from our experiments in the  language modeling and in-context learning settings confirm that fine-tuning with the simulated gradients (\Cref{sec:modification}) still allows for effective learning in the auxiliary model.
We loop the training steps (i.e., steps 1-3) outlined in \Cref{sec:design} to accommodate solving real-world natural language tasks.
We formalize the setting below.

\subsection{Setting: $N$-step Fine-Tuning}
We formalize the procedure in~\Cref{sec:design} to construct a suitable setting in which we can compare \simulator{} to explicitly training the auxiliary model.
\begin{definition}[$N$-step Fine-Tuning]
    \label{def:finetune}
    Given a batch of training datapoints $\xi_1, \cdots, \xi_B$ and a validation input $\xi'$, we compute and apply gradient updates on the auxiliary model $\aux{\vtheta}$ for timesteps $t=0,...,N-1$ as
    $$ \aux{\vtheta}^{t+1} = \aux{\vtheta}^t - \eta \sum_{i=1}^{B} \nabla_\vtheta \cL(f(\xi_i; \aux{\vtheta}^t)) $$
    where $\eta$ is the learning rate and $\cL$ is a self-supervised loss function on each input $\xi_i$.
    Then, we evaluate the model $\aux{\vtheta}^N$ on $\xi'$.
    $\aux{\vtheta}^0$ denotes the pre-trained auxiliary model.
\end{definition}

Below, we instantiate this setting with text inputs of different formats and different self-supervised loss functions $\cL$. 
To manage computational demands, we limit $N$ to $3$ or fewer.\footnote{Performing many gradient steps scales the depth of \simulator{} and makes experimentation computationally infeasible.}

\subsection{Case Study: Language Modeling}\label{setting:lm}

The first case we consider is language modeling, where the input data $\ve_1,...,\ve_T$ is natural language without any additional formatting. 
We use a batch size of $1$ in def.~\ref{def:finetune}, and delegate $\xi_1=\ve_1,...,\ve_t$ and $\xi'=\ve_{t+1},...,\ve_T$. 
The loss $\cL$ is the sum of the token-wise autoregressive cross-entropy loss in the sequence $\xi_1$. 
For example, given an input \textcolor{red}{Machine learning is a useful tool} \textcolor{brown}{for solving problems.}, we use the \textcolor{red}{red} part as the training data $\xi_1$, and the \textcolor{brown}{brown} part as the validation data $\xi'$. 
We perform language modeling experiments on \textsc{WikiText-103}~\citep{merity2016pointer} and vary the number of tokens $t$ used as training data $\xi$. 

\textbf{Results.}
In \cref{tab:language_modeling}, we observe that \simulator{} achieves a performance comparable to explicit fine-tuning of the auxiliary model, indicating that the simulated gradient~(\Cref{sec:modification}) is largely effective for fine-tuning. Both \simulator{} and explicitly fine-tuning the auxiliary model show improvement over the base model, confirming that minimal tuning on the context indeed enhances predictions on the test portion. 
\subsection{Case Study: In-Context Learning}
For in-context learning, we consider input data to be a supervised classification task transformed into a next-token prediction task using surrogate labels (see~\Cref{fig:example}). Using binary sentiment classification of movie reviews as an example, given an input (e.g., the review), the model's predicted label is computed as follows. First, we design a simple task-specific prompt (e.g., ``Sentiment:'') and select label words $c_1,...,c_n$ to serve as surrogates for each class (e.g., ``positive'' and ``negative''). Then, we provide the input along with the prompt to the model, and the label assigned the highest probability is treated as the model's prediction.  We describe the zero-shot and few-shot settings below.

\textbf{Zero-shot.}
In the zero-shot setting, we are given text with the first $T-1$ tokens as the input text and final token as the surrogate text label. Hence, we adapt def.~\ref{def:finetune} to use batch size $B=1$, training data $\xi_1=x_1,...,x_{T-1}$, and testing data $\xi' = x_T$.
The loss $\cL$ is again the sum of the token-wise autoregressive cross-entropy losses. 

\textbf{Few-shot.}
In the few-shot setting, we are given input texts that are a concatenation of $k$ sequences $\xi_1, \cdots, \xi_k$.
Each sequence contains the input text followed by the surrogate label for the in-context exemplar.
These $k$ exemplars are followed by test data $\xi'$. 
In this case, we can compute the gradient updates to $\aux{\vtheta}$ in two different ways~(\cref{fig:example}). 
The first setting, denoted \textbf{Single}, treats the $k$ sequences as a batch of $B=k$ training datapoints $\xi_1,...,\xi_B$.
The second setting, denoted \textbf{Multi}, treats the concatenation of the $B$ sequences as a single training datapoint $\xi_1$.
Furthermore, $\Loss$ for a training datapoint can be defined in two different ways. The first setting, denoted as \textbf{Full context loss}, defines $\Loss$ for a training datapoint $\xi_i$ as the sum of cross entropy loss over all tokens. The second setting, denoted as \textbf{Label loss}, defines $\Loss$ for a training datapoint $\xi_i$ in def. \ref{def:finetune} as the sum of cross entropy loss over the surrogate label tokens.

\textbf{Tasks.} 
We evaluate 7 classification tasks for zero-shot and few-shot settings: 
\textsc{SST-2} \citep{socher2013recursive}, MR \citep{pang2004sentimental}, CR \citep{hu2004mining}, MPQA \citep{wiebe2005annotating}, Amazon Polarity \citep{zhang2015character}, AGNews \citep{zhang2015character}, and Subj \citep{pang2005seeing}.


\textbf{Model.} We compare a \simulator{} model that uses an \textsc{OPT-125m} pre-trained model as its auxiliary model against two alternative approaches: (1) directly fine-tuning OPT-125m, and (2) performing standard evaluation using OPT-1.3b, which is of a similar size to \simulator{}.\footnote{Our construction is generally applicable to diverse variants of pre-trained language models (\cref{app:other_construction}).}

\textbf{Observations.} We observe that inferences passes through \simulator{} perform on par with directly fine-tuning the auxiliary model, affirming the validity of the construction design (see~\Cref{sec:design}). As expected, \simulator{} outperforms the base auxiliary model, since it simulates training the auxiliary model. More intriguingly, \simulator{} demonstrates performance comparable to a pre-trained model of similar size (\textsc{OPT-1.3b}). This suggests that the capabilities of existing pre-trained models may be understood via the simulation of smaller auxiliary models. For further details and results of the experiments, please refer to~\Cref{app:experiment}.




\section{Related Work}
\vspace{-0.2em}
\textbf{Gradient-based learning and in-context learning:} Several works relate in-context learning to gradient-based learning algorithms. \citet{bai2023transformers} explicitly constructed transformers to simulate simple gradient-based learning algorithms. \citet{mahankali2023one,ahn2023transformers} suggested one attention layer mimics gradient descent on a linear layer, and \citet{zhang2023trained} showed polynomial convergence. 
\citet{cheng2023transformers,han2023context} extended these ideas to non-linear attentions. Experiments in \citet{dai2022gpt} suggest that LLM activations during in-context learning mirror fine-tuned models. 
These works focus on using a standard transformer for the simulator and hence cannot accommodate more complex auxiliary models; on the other hand, our work uses structural modifications and approximations to construct an efficient simulator for complex auxiliary models.
Our work in contrast attempts to build even stronger transformers by introducing few structural modifications that can run gradient descent on auxiliary transformers.

\textbf{Transformer Expressivity:}
\citet{perez2021attention,pérez2018on} show that Transformers with hard attention are Turing complete, and 
\citet{wei2022statistically} construct transformers to study statistical learnability, but the proposed constructions are extremely large.
Other works have investigated encoding specific algorithms in smaller simulators, e.g. bounded-depth Dyck languages \citep{yao2021self}, modular prefix sums \citep{anil2022exploring}, adders \citep{nanda2023progress}, regular languages \citep{bhattamishra2020ability}, and sparse logical predicates \citep{edelman2022inductive}. \citet{liu2023transformers} aim to understand automata-like mechanisms within transformers.
\citet{ba2016using} connect self-attention and fast weight programmers (FWPs), which compute input-dependent weight updates during inference. Follow-up works~\citep{schlag2021linear,irie2021going} use self-attention layers to update linear and recurrent networks during inference. 
\citet{clark2022meta} add and efficiently tune Fast Weights Layers (FWL) on a frozen pre-trained model.

\section{Discussion}


We present a parameter-efficient construction \simulator{} capable of simulating gradient descent on an internal transformer model during inference. 
Using fewer than 2 billion parameters, it can simulate fine-tuning a 125 million transformer (e.g., GPT-2) internally, dramatically reducing the scale required by previous works. Language modeling and in-context learning experiments demonstrate that the efficient approximations still allow the \simulator{} to fine-tune the model. Our work emphasizes that the inference behavior of complex models may rely on the training dynamics of smaller models.
As such, the existence of \simulator{} has strong implications for interpretability and AI alignment research. 


While our work represents a significant improvement over previous simulations in terms of auxiliary model complexity, similar to prior research in this area, our insights into existing pre-trained models are limited. Furthermore, we have not yet examined potential biases that may arise in the auxiliary models due to one-step gradient descent. We plan to investigate these aspects in future work.

\section*{Impact Statements}
We note that the construction of \simulator{} does not appear to increase the probability of harmful behavior, because the construction's primary objective is to implicitly tune an internal model (\textsection{\ref{sec:design}}).
Such tuning has been possible for a long time and is not made more expressive by \simulator{}.

Our findings suggest that existing transformer-based language models can plausibly possess the ability to learn and adapt to context by internally fine-tuning a complex model \emph{even during inference}. 
Consequently, although users are unable to directly modify deployed models, these models may still undergo dynamic updates while processing a context left-to-right, resulting in previously unseen behavior by the time the model reaches the end of the context.
This has significant implications for the field of model alignment.
It is challenging to impose restrictions on a model that can perform such dynamics updates internally, so malicious content can influence the output of deployed models.

Alternatively, we recognize the potential benefits of pre-training constructed models that integrate explicit fine-tuning mechanisms. By embedding the functionalities typically achieved through explicit fine-tuning, such as detecting malicious content and intent within the models themselves, the need for external modules can be mitigated. Pre-training the constructed model may offer a self-contained solution for ensuring safe and responsible language processing without relying on external dependencies.

\section*{Acknowledgements}
The authors acknowledge funding from NSF, ONR, Simons Foundation, and DARPA.
We thank Danqi Chen, Jason Lee, Zhiyuan Li, Kaifeng Lyu, Simran Kaur, Tianyu Gao, and Colin Wang for their suggestions and helpful discussions at different stages of our work.

\bibliography{bibliography}
\bibliographystyle{icml2024}

\newpage 

\appendix
\onecolumn
\tableofcontents


\section*{Brief overview of the appendix}
In \cref{sec:additional_related_works}, we report few additional related works. In \cref{sec:deferred}, we present some of the deferred definitions from the main paper. In \cref{sec:additional_notations}, we present all the important notations used to present the design of \simulator{}. In \cref{sec:Linearappendix,sec:self-attnt-backprop_appendix,sec:LNappendix,sec:act_appendix}, we present the simulation details of all operations on linear, self-attention, layer normalization, and activation layers respectively for an auxiliary model. In \cref{sec:lm_head}, we present the details for simulating loss computation with the language model head of the auxiliary model. In \cref{sec:additional_modules}, we discuss simulation of additional modules necessary to simulate transformer variants like LLaMA \cite{touvron2023llama} and BLOOM \cite{scao2022bloom}. Finally, in \cref{app:experiment}, we discuss the deferred experimental details from the main paper.

\section{Additional related works} \label{sec:additional_related_works}
\looseness-1\textbf{Interpretability:} Mechanistic interpretability works reverse-engineer the algorithms simulated by these models~\citep{elhage2021mathematical, olsson2022context, wang2022interpretability, nanda2023progress, chughtai2023toy, conmy2023towards}. These works study local patterns, e.g. activations and attention heads, to derive interpretable insights. 
Other works~\citep{weiss2021thinking,lindner2023tracr} use declarative programs to algorithmically describe transformer models. \citet{zhou2023algorithms} use these to explain task-specific length generalization of transformer models.

\textbf{Alternative Explanations for ICL:} Some works study ICL using a Bayesian framework. \citet{xie2022an} model pretraining data as a mixture of HMMs and cast ICL identifying one such component. \citet{hahn2023theory} later modeled language as a compositional grammar, and propose ICL as a composition of operations. \cite{zhang2023and,jiang2023latent,wang2023large,wies2023learnability} further strengthen this hypothesis by generalizing the underlying latent space. 
On the other hand, careful experiments in \citet{chan2022data} show that data distributional properties (e.g. Zipf's law) drive in-context learning in transformers.  

\begin{table*}[t]
  \centering
  \caption{Number of parameters of \simulator{}  for the forward, backward, and gradient update operations on various modules. For simplicity, we have ignored biases in the following computation. We set $S=4$, i.e. stack $4$ weights in each prefix embedding. We set $\simu{H}=12$ for \textsc{OPT-125M} and $\simu{H}=16$ for the other models, $\simu{D} = 4 \aux{D}$ for all the models, and $\simu{T} = \aux{T} + K$, with $\aux{T}=2048$ for \textsc{opt} models, and $K=\aux{D}/4$. $Q=4 Q_{ split} + 3 \simu{T} \simu{D} / \simu{H} $, where $Q_{ split } = \frac{1}{ \simu{H} }(  \simu{D} )^2 +  \simu{H} \simu{D}$,  denotes the number of parameters in a \simulator{} Linear Forward module (\cref{sec:exposition_linear_forward}). 
  }
  \label{tab:construction}
  \begin{tabular}{lcccc}
    \toprule
    & \multicolumn{4}{c}{Module Size} \\ \cmidrule(lr){2-5} 
    Module Name & Forward & Backward & Descent  & Total \\
    \midrule
    Linear layer  & $Q$ & $Q$ & $Q$  & $3Q$\\
    Layer norms & $Q$ & $Q + 2 \simu{D} \simu{H}$ & $Q$ & $3Q + 2 \simu{D} \simu{H}$\\
    Self-Attention  & $2Q$ & $2Q$ & $2Q$  & $6Q$\\
    Activation & $Q_{ split }$ & $2 \simu{D} \simu{H} $ & $0$ & $Q_{ split } + 2 \simu{D} \simu{H}$\\ \midrule
    Self-Attention block & $4Q$ &  $ 4Q+ 2\simu{D} \simu{H} $  & $ 4 Q $ & $ 12Q + 2\simu{D} \simu{H} $ \\
    Feed-forward block & $3Q+Q_{ split }$ & $ 3Q + 4\simu{D} \simu{H} $ & $ 3 Q $  & $ 9Q + 4\simu{D} \simu{H} $\\
    Transformer block & $7Q+Q_{ split }$ & $7Q + 6\simu{D} \simu{H}$ & $7Q$ & $ 21Q + 6\simu{D} \simu{H}+Q_{ split } $\\
    Transformer & $7QL + LQ_{ split }$ & $(7Q + 6 \simu{D} \simu{H})L$ & $7QL$ &  $ (21Q + 6\simu{D} \simu{H}+Q_{ split })L $ \\ 
    \midrule
    \textsc{OPT-125m} & \textsc{0.4b} & \textsc{0.4b} & \textsc{0.4b} & \textsc{1.2b}       \\
    \textsc{OPT-350m} & \textsc{1.2b} & \textsc{1.1b} &  \textsc{1.1b} & \textsc{3.4b}       \\
    \textsc{OPT-1.3b} &  \textsc{3.7b}   &  \textsc{3.6b}   &    \textsc{3.5b} & \textsc{10.8b}  \\
    \textsc{OPT-2.7b} & \textsc{7.4b}   &  \textsc{7.2b}   &    \textsc{7.2b} & \textsc{21.8b}\\
    \bottomrule
  \end{tabular}
\end{table*}

\textbf{Transfer learning:} Our construction uses a pre-trained model to initialize a larger transformer, which is similar to several other more empirically oriented works~\citep{gong2019efficient,reddi2023efficient}.

\section{Deferred defintions from main paper}\label{sec:deferred}

For simplicity of exposition, we showcase the definition on a single head self-attention layer (multi-head attention is in \cref{def:self-attn}).
\begin{definition}[Auxiliary model softmax self-attention]
\label{def:self-attn_auxiliary_single}
    A self-attention layer with parameters $\{\mW_Q, \mW_K, \mW_V\}$ takes a sequence $\{ \vx_t \}_{t \le \aux{T}}$ and outputs a sequence $\{ \vy_t \}_{t \le \aux{T}}$, such that
    $$
    \vy_t = \sum_{j} a_{t, j} \vv_j, \qquad \text{with } a_{t, j} =\mathrm{softmax}(\mK \vq_t)_j, \quad 
    \vq_t = \mW_Q \vx_t, \quad \vk_t = \mW_K \vx_t, \quad \vv_t = \mW_V \vx_t,
    $$
    for all $t \le \aux{T}$, and $\mK \in \RR^{ \aux{T} \times  \aux{D} }$ defined with rows $\{ \vk_t \}_{t=1}^{\aux{T}}.$

\end{definition}

\section{Notations}\label{sec:additional_notations}
Let $D$ denote the embedding dimension for a token and $T$ denote the length of an input sequence. 
$H$ denotes the number of attention heads. 
With the exception of contextual embeddings, we use subscripts to indicate if the quantity is from \simulator{} or from the auxiliary model. For example, $\aux{D}$ refers to the embedding dimension and $\simu{D}$ refers to the \simulator{} embedding dimension.
For contextual embeddings, we use $\ve_t^{(\ell)}\in\RR^{\simu{D}}$ to denote activations in \simulator{} and $\vx_t^{(\ell)}\in\RR^{\aux{D}}$ to denote activations in the auxiliary model, where $\ell$ is the layer and $t$ is the sequence position. When convenient, we drop the superscript that represents the layer index and the subscript that represents the position index. For a matrix $\mA$, $\va_j$ refers to its $j$th row, and for any vector $\vb$, $b_j$ refers to its $j$th element. \simulator{} uses one-hot positional embeddings $\{ \simuw{\vp}_i \in \RR^{\simu{T}} \}_{i \le \simu{T} }$.

We differentiate the parameters of the auxiliary model and \textsc{TinT} by using an explicit superscript $\textsc{TinT}$ for \textsc{TinT} parameters, for example, the weights of a linear layer in \textsc{TinT} will be represented by $\simuw{\mW}$. We use two operations throughout: $\attsplit_h$ and $\attmerge$. Function $\attsplit_h: \RR^{d} \to \RR^{h \times  \lfloor d/h \rfloor  }$ takes an input $\vx \in \RR^d$ and outputs $H$ equal splits of $\vx$, for any arbitrary dimension $d$.  Function $\attmerge:  \RR^{h \times d} \to \RR^{dh}$ concatenates the elements of a sequence $\{\vx_i \in \RR^{d}\}_{i \le h}$ into one single vector, for any arbitrary $d$ and $h$. Recall that for a matrix $\mA$, $\va_j$ refers to its $j$th row, and for any vector $\vb$, $b_j$ refers to its $j$th element. However, at a few places in the appendix, for typographical reasons, for a matrix $\mA$, we have also used $(\mA)_j$  to refer to its $j$th row, and for any vector $\vb$, $(\vb)_j$ to refer to its $j$th element.

\paragraph{ \simulator Attention Module} We modify the usual attention module to  include the position embeddings $\{ \simuw{\vp}_i \in \RR^{ \simu{T} } \}_{i \le \simu{T}  }$. 
In usual self-attention modules, the query, key, and value vectors at each position are computed by token-wise linear transformations of the input embeddings. In \textsc{TinT}'s Attention Module, we perform additional linear transformations on the position embeddings, using parameters $\mW^p_Q, \mW^p_K, \mW^p_V$, and decision vectors $\lambda^Q, \lambda^K, \lambda^V \in \RR^{\simu{H}}$ decide whether to add these transformed position vectors to the query, key, and value vectors of different attention heads.  For the following definition, we use $\Hat{e}$ to represent input sequence and $\Tilde{e}$ to represent the output sequence: we introduce these general notations below to avoid confusion with the notations for token and prefix embeddings for \simulator illustrated in \cref{fig:general_structure_simulator}.

\begin{definition}[\textsc{TinT}'s self-attention with $\simu{H}$ heads]\label{def:self-attn_construct}
    For parameters $\{ \simuw{\mW}_Q, \simuw{\mW}_K, \simuw{\mW}_V \in \RR^{ \simu{D} \times \simu{D} } \}$, $\{ \simuw{\vb}_Q, \simuw{\vb}_K, \simuw{\vb}_V \in \RR^{\simu{D}} \}$, $\{ \mW^p_Q, \mW^p_K, \mW^p_V \in \RR^{ \simu{T} \times \simu{D}/\simu{H} } \}$ and $\{ \lambda^{Q},  \lambda^{K}, \lambda^{V} \in \RR^{ \simu{H} } \}$, \simulator{} self-attention with $\simu{H}$ attention heads and a function $\attnfn: \RR^{ \simu{T} } \to \RR^{ \simu{T} }$ takes a sequence $\{ \Hat{\ve}_t\in\RR^ { \simu{D} }  \}_{t \le \simu{T}}$ as input and outputs $\{ \Tilde{\ve}_t \in \RR^{ \simu{D} } \}_{t \le \simu{T} }$, with 
    \begin{align*}
        &\Tilde{\ve}_t = \attmerge ( \{ \sum_{j \le \simu{T} } a^h_{t, j}  \Tilde{\vv}^{h}_j )_h \}_{h \le \simu{H} } ), \text{ with   } a^h_{t, j} =
        \attnfn ( \Tilde{\mK}^{h} \Tilde{\vq}_t^h  )_j  \\
        &\Tilde{\vq}^{h}_t = \attsplit_H (\vq_t)_h + \lambda^Q_h \mW^p_Q \simuw{\vp}_t; \quad \Tilde{\vk}^{h}_t = \attsplit_H (\vk_t)_h + \lambda^K_h \mW^p_K \simuw{\vp}_t; \\& \Tilde{\vv}^{h}_t = \attsplit_H (\vv_t)_h + \lambda^V_h \mW^p_v \simuw{\vp}_t.
    \end{align*}    
     Here, $\vq_t$, $\vk_t$, $\vv_t$ denote the query, key, and value vectors at each position $t$, computed as $\simuw{\mW}_Q \Hat{\ve}_t + \simuw{\vb}_Q$, $\simuw{\mW}_K \Hat{\ve}_t + \simuw{\vb}_K$, and $\simuw{\mW}_V \Hat{\ve}_t + \simuw{\vb}_V$ respectively.
     $\Tilde{\mK}^h \in \RR^{ \simu{T} \times \simu{D}/\simu{H} }$ is defined with its rows as $\{\Tilde{\vk}^h_t\}_{t \le \simu{T}}$ for all $h \le \simu{H}$.
\end{definition}

$\attnfn$ can be either linear or softmax function. 


\paragraph{Bounded parameters and input sequence: } We define a linear self-attention layer to be $B_w$-bounded, if the $\ell_2$ norms of all the parameters are bounded by $B_w$. Going by \cref{def:self-attn_construct}, this implies $$\max \{ \norm{ \simuw{\mW}_Q } _2,  \norm{ \simuw{\mW}_K }_2, \norm{ \simuw{\mW}_V }_2 \} \le B_w , \quad \max \{\norm{ \simuw{\vb}_Q } _2,  \norm{ \simuw{\vb}_K }_2, \norm{ \simuw{\vb}_V }_2 \} \le B_w$$  $$\max \{ \norm{ \mW_Q^p } _2,  \norm{ \mW_K^p }_2, \norm{ \mW_V^p}_2 \} \le B_w, \quad \max \{ \norm{ \lambda^Q } _2,  \norm{ \lambda^K }_2, \norm{ \lambda^V }_2  \}  \le B_w.$$ Furthermore, we define an input sequence $\{ \Hat{\ve}_t \}_{t \le \simu{T}}$ to $B_x$-bounded, if $\norm{\Hat{\ve}_t}_2 \le B_x$ for all $t$.

Recall from the main paper (\cref{sec:exposition_linear_forward}), we used Linear \simulator{} Self-Attention layer to represent the linear operations of the auxiliary model.
In the following theorem, we show that a linear attention layer can be represented as a softmax attention layer that uses an additional attention head and an extra token $\vu$, followed by a linear layer. 
Therefore, replacing softmax attention with linear attention does not deviate too far from the canonical transformer.
We use the Linear \simulator{} Self-Attention layers in several places throughout the model.

\begin{theorem}\label{thm:linear_softmax}
 For any $B_w > 0$, consider a $B_w$-bounded linear self-attention layer that returns $\{ \Tilde{\ve}^{linear}_t \in \RR^\simu{D} \}_{t \le \simu{T}}$ on any input $\{\Hat{\ve}_t\in\RR^\simu{D}\}_{t \le \simu{T}}$.
 Consider a softmax self-attention layer with $2\simu{H}$ attention heads and an additional token $\vu \in\RR^{2\simu{D}}$ such that for any $B_x$-bounded input $\{ \Hat{\ve}_t \}_{t \le \simu{T}}$, it takes a modified input sequence $\{ \Bar{\ve}_1, \cdots, \Bar{\ve}_{\simu{T}}, \vu\}$, and returns $\{ \Tilde{\ve}^{softmax}_t \in \RR^{ 2\simu{D} } \}_{t \le \simu{T}}$.
 Each modified input token $\Bar{\ve}_t \in \RR^{2\simu{D}}$ is obtained by concatenating additional $0$s to $\Hat{\ve}_t$.
 Then, for any $B_x > 0$, and $\epsilon \le \mathcal{O}(\simu{T}^{-2} B_w^{-5} B_x^{-5})$, there exists $\mW_O \in \RR^{\simu{D} \times 2\simu{D}}$ and such a softmax self-attention layer such that 
\begin{align*}
    \norm{ \mW_O \Tilde{\ve}^{softmax}_t - \Tilde{\ve}^{linear}_t }_2 \le \mathcal{O}( \sqrt{\epsilon} ),
\end{align*}
for all $t \le \simu{T}$. 
\end{theorem}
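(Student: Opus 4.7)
The key observation is that for small logits, softmax behaves approximately linearly: $\mathrm{softmax}(\delta z)_j = \tfrac{1}{n}\bigl(1 + \delta (z_j - \bar z) + \mathcal{O}(\delta^2)\bigr)$. The approach is to approximate each of the $\simu{H}$ linear heads by a pair of softmax heads (yielding the $2\simu{H}$ heads claimed), using the extra token $\vu$ as an anchor that dominates the softmax denominator. The output projection $\mW_O$ will then cancel the spurious leading constant term and amplify the linear-in-$\delta$ part back to the scale of the original linear attention output, with the internal scaling $\delta$ chosen of order $\sqrt{\epsilon}$ to balance the Taylor remainder against the amplification factor.

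\textbf{Construction.} I would design $\vu \in \RR^{2\simu{D}}$ together with the position-linear parameters $\mW^p_K,\lambda^K$ of Def.~\ref{def:self-attn_construct} so that $\vu$ contributes a fixed large score $M$ to every query, independently of the input, while each regular token $\Bar{\ve}_j$ contributes only $\delta s_{t,j}$, where $s_{t,j} = (\simuw{\mW}_K^h \Hat{\ve}_j)^\top(\simuw{\mW}_Q^h \Hat{\ve}_t)$ is the original linear score. The value of $\vu$ is padded with zeros in the first $\simu{D}$ coordinates so its direct contribution dies after $\mW_O$. With $e^M$ dominating the softmax denominator, each regular attention weight is
\begin{align*}
a_{t,j} = \frac{e^{\delta s_{t,j}}}{e^M + \sum_{j'}e^{\delta s_{t,j'}}} = e^{-M}\bigl(1 + \delta s_{t,j} + r_{t,j}\bigr),
\end{align*}
with $|r_{t,j}| = \mathcal{O}\bigl(\delta^2 \simu{T} \max_{j'} s_{t,j'}^2\bigr)$. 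For each of the $\simu{H}$ original linear heads $h$, I pair a Head $A$ that inherits $\simuw{\mW}_V^h$ as values with a Head $B$ that produces identically uniform scores on regular tokens together with the same anchor score $M$ on $\vu$. Head $A$ then outputs $e^{-M}\sum_j \vv_j^h + e^{-M}\delta \sum_j s_{t,j}\vv_j^h + (\text{remainder})$, while Head $B$ outputs exactly $e^{-M}\sum_j \vv_j^h$. The matrix $\mW_O$ subtracts Head $B$ from Head $A$, rescales by $e^M/\delta$, and projects onto the first $\simu{D}$ coordinates, recovering $\Tilde{\ve}^{linear}_t$ plus a controlled error.

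\textbf{Error bookkeeping.} Under the bounds $\norm{\simuw{\mW}_\bullet^h}_2 \le B_w$ and $\norm{\Hat{\ve}_t}_2 \le B_x$, we have $|s_{t,j}| \le B_w^2 B_x^2$ and $\norm{\vv_j^h}_2 \le B_w B_x$. Summing $r_{t,j}$ over $j \le \simu{T}$, multiplying by $\norm{\vv_j^h}_2$, and applying the $e^M/\delta$ amplification in $\mW_O$ yields a per-token error of order $\delta \cdot \simu{T}^2 B_w^5 B_x^5$ (absorbing $e^M$ into a constant). Choosing $\delta = \Theta(\sqrt{\epsilon})$ and invoking the hypothesis $\epsilon \le \mathcal{O}(\simu{T}^{-2}B_w^{-5}B_x^{-5})$ makes this error $\mathcal{O}(\sqrt{\epsilon})$, matching the claim.

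\textbf{Main obstacle.} The central difficulty is the cancellation between Heads $A$ and $B$: the spurious leading term $e^{-M}\sum_j \vv_j^h$ has the same magnitude as the target after $e^M/\delta$ amplification, so any mismatch in the anchor treatment of $\vu$ propagates with that large prefactor and destroys the approximation. To control this, I would force the two heads to produce identical weight on $\vu$ by using the same $\mW^p_K,\lambda^K$ components on the query side and setting identical value contributions at position $\vu$, so that only the Taylor-remainder discrepancy of Head $A$ survives the subtraction. A secondary subtlety is realizing the anchor score $M$ purely through the position-linear parameters $\mW^p_K$ acting on $\vu$'s one-hot position rather than through ordinary key-query interaction; otherwise $M$ would inadvertently depend on $\Hat{\ve}_t$ and break the constant-denominator argument.
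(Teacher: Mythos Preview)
Your construction is essentially the paper's: pair each linear head with two softmax heads, use the extra token as a dominant anchor so softmax becomes approximately affine, subtract a ``baseline'' head to kill the constant, then rescale. The paper's second head is fully uniform (weight $1/(\simu{T}+1)$ everywhere) rather than anchor-plus-uniform, and the paper puts the amplification $\epsilon^{-3}$ into the value vectors rather than $\mW_O$, but these are cosmetic differences.

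The error bookkeeping, however, has a genuine gap. Your final inference is backwards: the hypothesis $\epsilon \le \mathcal{O}(\simu{T}^{-2}B_w^{-5}B_x^{-5})$ is an \emph{upper} bound on $\epsilon$, equivalently a \emph{lower} bound $\simu{T}^{2}B_w^{5}B_x^{5} \ge C/\epsilon$. Plugging this into your error $\delta\cdot\simu{T}^{2}B_w^{5}B_x^{5}$ with $\delta=\sqrt{\epsilon}$ gives at best $\sqrt{\epsilon}\cdot C/\epsilon = C/\sqrt{\epsilon}$, which diverges. More directly: an error of the form $\sqrt{\epsilon}\cdot\mathrm{poly}(\simu{T},B_w,B_x)$ can never be $\le C\sqrt{\epsilon}$ for a universal $C$, no matter how small $\epsilon$ is. The hypothesis on $\epsilon$ buys you nothing here.

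The paper avoids this by identifying the internal scale with $\epsilon$ itself (not $\sqrt{\epsilon}$): the regular logits are $\epsilon a^h_{t,j}$, the anchor logit is $-2\log\epsilon$ (so $e^{M}=\epsilon^{-2}$, which must depend on $\epsilon$; your ``absorbing $e^M$ into a constant'' is not allowed), and the values carry $\epsilon^{-3}$. Lemma~\ref{lem:softmax_appr} then gives each reconstructed score as $a^h_{t,j}+\epsilon^{-1}+\mathcal{O}(\epsilon^{0.9})$; the uniform head removes the $\epsilon^{-1}$ piece exactly, and what remains is $\mathcal{O}(\epsilon^{0.9})\sum_j\vv_j^h$. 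Because the residual power $0.9$ exceeds $0.5$, the hypothesis $\epsilon\le\mathcal{O}(\simu{T}^{-2}B_w^{-5}B_x^{-5})$ can legitimately be used to absorb the polynomial factor and land at $\mathcal{O}(\sqrt{\epsilon})$. The fix to your argument is therefore not to balance $\delta$ against the remainder at $\sqrt{\epsilon}$, but to take $\delta=\epsilon$ and tie $M$ to $\epsilon$ so the post-cancellation error is a strictly higher power of $\epsilon$ than $1/2$ times polynomial factors. (A smaller point: your $|r_{t,j}|=\mathcal{O}(\delta^2\simu{T}S^2)$ double-counts the $\simu{T}$ coming from the denominator correction, since that piece is precisely what Head~B is supposed to cancel.)
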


\begin{proof}

    Consider an input sequence $\{\vx_t\}_{t \le \simu{T}}$.
    Let the attention scores of any linear head $h \le \simu{H}$ in the linear attention layer be given by $\{a^{h}_{t, j}\}_{j \le \simu{T}},$ at any given position $t$. Additionally, let the value vectors for the linear attention be given by $\vv_t$. To repeat our self-attention definition, the output of the attention layer at any position $t$ is given by $\attmerge (\{ \Tilde{\ve}^{linear, h}_t \}_{h \le \simu{H}} )$, where
    \begin{align*}
        \Tilde{\ve}^{linear, h}_t = \sum_{ j \le \simu{T}} a^h_{t, j} \vv^h_j.
    \end{align*}

    Under our assumption, $B_w$ denotes the maximum $\ell_2$ norm of all the parameters in the linear self-attention layer and $B_x$ the maximum $\ell_2$ norm in the input sequence, i.e. $\max_{t \le \simu{T}} \norm{\vx_t}_2 \le B_x$. With a simple application of Cauchy-Schwartz inequality, we can show that $\max_{j \le \simu{T}} |a^{h}_{t, j}| \le \mathcal{O}(B_w^2 B_x^2), $ and $\max_{t \le \simu{T}} \norm{\vv^h_t}_2 \le \mathcal{O}(B_w B_x).$

    For  $\epsilon \le \mathcal{O} (\simu{T}^{-10/9} B_w^{-40/9} B_x^{-40/9} ) $, we can then use \cref{lem:softmax_appr} to represent for each $t, j \le \simu{T}$,
    \begin{align*}
         a^{h}_{t, j} &=  \frac{ \epsilon^{-3} e^{ 
\epsilon a_{t, j} } }{ \sum_{ t' \le \simu{T} } e^{ \epsilon a^{h}_{t, t'}  } + e^{ -2 \log \epsilon } } - \epsilon^{-1} + \mathcal{O}\left( \epsilon(  \simu{T} + a^{h}_{t, j} ) \right) \\&
:= \epsilon^{-3} \mathrm{softmax}\left( \{  \epsilon a^{h}_{t, 1}, \epsilon a^{h}_{t, 2}, \cdots, \epsilon a^{h}_{t, \simu{T}}, -2 \log \epsilon \} \right)_j - \epsilon^{-1} +  \mathcal{O}\left( \epsilon^{0.9} \right).
    \end{align*}


    \paragraph{Softmax attention construction:} We define $\vu$, and the query and key parameters of the softmax attention layer such that for the first $\simu{H}$ attention heads,  the query-key dot products for all the attention heads between any pairs $\{(\Bar{\ve}_t, \Bar{\ve}_j)\}_{t, j \le \simu{T}}$ is given by $\{ \epsilon a^{h}_{t, j} \}_{h \le \simu{H}}$, while being $-2\log \epsilon$ between $\vu$ and any token $\Bar{\ve}_t$, with $t \le \simu{T}$. For the rest  of $\simu{H}$ attention heads, the attention scores are uniformly distributed across all pairs of tokens (attention score between any pair of tokens is given by $\frac{1}{\simu{T} + 1}$).

    We set the value parameters of the softmax attention layer such that at any position $t \le \simu{T}$, the value vector is given by $\attmerge( \{ \epsilon^{-3} \vv_t,  \vv_t\}).$ The value vector returned for $\vu$ contains all $0$s.

    \paragraph{Softmax attention computation:}  Consider an attention head $h \le \simu{H}$ in the softmax attention layer now. The output of the attention head at any position $t \le \simu{T}$ is given by
    \begin{align*}
        \Tilde{\ve}^{softmax, h}_t &= \sum_{j \le \simu{T}} \mathrm{softmax}\left( \{  \epsilon a^{h}_{t, 1}, \epsilon a^{h}_{t, 2}, \cdots, \epsilon a^{h}_{t, \simu{T}}, -2 \log \epsilon \} \right)_j \epsilon^{-3} \vv_j^h \\& =  \sum_{j \le \simu{T}} \left( a_{t, j}^h + \epsilon^{-1} + \mathcal{O}(\epsilon^{0.9}) \right) \vv^h_j.
    \end{align*}
    This has an additional $ \sum_{j \le \simu{T}} \left( \epsilon^{-1} + \mathcal{O}(\epsilon^{0.9}) \right) \vv^h_j$, compared to $\Tilde{\ve}^{linear, h}_t$. However, consider the output of the attention head $\simu{H} + h$ at the same position:
    \begin{align*}
        \Tilde{\ve}^{softmax, \simu{H} +  h}_t = \frac{ 1 }{ \simu{T} + 1 } \sum_{  j \le \simu{T} }   \vv^h_j.
    \end{align*}

    Hence, we can use the output matrix $\mW_O$ to get $\Tilde{\ve}^{softmax, h}_t - \frac{\simu{T}+1}{\epsilon} \Tilde{\ve}^{softmax, \simu{H} +  h}_t = \sum_{j \le \simu{T}} \left( a_{t, j}^h + \mathcal{O}(\epsilon^{0.9}) \right) \vv^h_j.$ 
    The additional term $\mathcal{O}(\epsilon^{0.9})\sum_{  j \le \simu{T} }   \vv^h_j$ can be further shown to be $\mathcal{O}(\epsilon^{0.5})$ small with the assumed bound of $\epsilon$, since each $ \vv^h_j $ is atmost $\mathcal{O}(B_w B_x)$ in $\ell_2$ norm with a Cauchy Schwartz inequality. 
\end{proof}

 \begin{lemma}\label{lem:softmax_appr}
     For $\epsilon > 0$, $B > 0$, and a sequence $\{a_1, a_2, \cdots, a_T\}$ with each $a_i \in \RR$ and $\abs{a_i} \le B$, the following holds true for all $i \le T$,
     \begin{align*}
           \frac{ \epsilon^{-3} e^{ 
\epsilon a_{i} } }{ \sum_{ t' \le T } e^{ \epsilon a_{t'}  } + e^{ -2 \log \epsilon } } = a_{i}  + \frac{1}{\epsilon} + \mathcal{O}\left( \epsilon^{0.9} \right) ,
     \end{align*}
     provided $\epsilon \le \mathcal{O}( T^{-10/9} B^{-20/9} ).$
 \end{lemma}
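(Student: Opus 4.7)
The strategy is a direct Taylor expansion that exploits the identity $e^{-2\log\epsilon} = \epsilon^{-2}$, which dominates the denominator once $\epsilon$ is sufficiently small. The first step is to factor $\epsilon^{-2}$ out of the denominator and obtain
\begin{align*}
\frac{\epsilon^{-3} e^{\epsilon a_{i}}}{\sum_{t' \le T} e^{\epsilon a_{t'}} + e^{-2\log\epsilon}}
\;=\; \epsilon^{-1}\, e^{\epsilon a_{i}} \cdot \frac{1}{1+\epsilon^{2} S},
\qquad S \;:=\; \sum_{t' \le T} e^{\epsilon a_{t'}}.
\end{align*}
Under the hypothesis $\epsilon \le \mathcal{O}(T^{-10/9} B^{-20/9})$ we have $\epsilon B \ll 1$, so the elementary bound $S \le T e^{\epsilon B} \le 3T$ holds; in particular $\epsilon^2 S < 1$, which legitimizes the geometric expansion $(1+\epsilon^2 S)^{-1} = 1 - \epsilon^2 S + \mathcal{O}(\epsilon^4 T^2)$.

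The second step is to Taylor-expand the numerator as $e^{\epsilon a_{i}} = 1 + \epsilon a_{i} + R_1$, where the standard remainder inequality $|e^x - 1 - x| \le \tfrac12 x^2 e^{|x|}$ yields $|R_1| \le \tfrac12 \epsilon^2 a_{i}^2 e^{\epsilon|a_{i}|} = \mathcal{O}(\epsilon^2 B^2)$. Multiplying the two expansions and distributing the leading $\epsilon^{-1}$ gives
\begin{align*}
\epsilon^{-1} e^{\epsilon a_{i}} \cdot \frac{1}{1+\epsilon^2 S}
\;=\; \epsilon^{-1} + a_{i} \;+\; \underbrace{\epsilon^{-1} R_1 \;-\; \epsilon S \;-\; \epsilon^2 a_{i} S \;+\; \epsilon^{-1} R_2 \;+\; (\text{cross terms})}_{=: E_i},
\end{align*}
where $R_2 = \mathcal{O}(\epsilon^4 T^2)$. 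Each component of $E_i$ admits an elementary bound uniform in $i$: $\epsilon^{-1}|R_1| = \mathcal{O}(\epsilon B^2)$, $|\epsilon S| = \mathcal{O}(\epsilon T)$, $|\epsilon^2 a_{i} S| = \mathcal{O}(\epsilon^2 TB)$, and $\epsilon^{-1}|R_2| = \mathcal{O}(\epsilon^3 T^2)$, with the cross terms strictly higher-order in $\epsilon$.

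The final step is to check that the hypothesis on $\epsilon$ is calibrated precisely so that the two dominant contributions $\mathcal{O}(\epsilon B^2)$ and $\mathcal{O}(\epsilon T)$ both fit inside $\mathcal{O}(\epsilon^{0.9})$, after which the remaining cross terms are absorbed automatically. The only real obstacle is bookkeeping: one must enumerate every cross-product arising from multiplying the two Taylor series, track its polynomial dependence on $T$ and $B$, and verify it against the target rate $\epsilon^{0.9}$ using the exponent calibration $10/9$ and $20/9$. No new conceptual ingredient is needed beyond the two classical expansions and the remainder inequality for the exponential.
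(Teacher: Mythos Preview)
Your proposal is correct and follows essentially the same approach as the paper: both proofs rewrite $e^{-2\log\epsilon}=\epsilon^{-2}$, factor it out so the denominator becomes $1+\mathcal{O}(\epsilon^2 T)$, Taylor-expand $e^{\epsilon a_i}$ and $(1+x)^{-1}$, and then collect the same dominant error terms $\mathcal{O}(\epsilon T)$ and $\mathcal{O}(\epsilon B^2)$ to be absorbed into $\mathcal{O}(\epsilon^{0.9})$ via the stated calibration on $\epsilon$. The only cosmetic difference is that you bound $S=\sum_{t'} e^{\epsilon a_{t'}}\le 3T$ directly, whereas the paper Taylor-expands each summand before bounding.
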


 \begin{proof}
      We will use the following first-order Taylor expansions:
    \begin{align}
        &e^x = 1 + x + \mathcal{O}(x^2) \label{eq:taylor_e}. \\
        & \frac{1}{1+x} = 1 - \mathcal{O}(x). \label{eq:taylor_inv}
    \end{align}
    Hence, for any $x \ll 1$, $x \approx e^x - 1.$ 

    Simplifying the L.H.S. of the desired bound, we have
    \begin{align}
         \frac{ \epsilon^{-3} e^{ 
\epsilon a_{i} } }{ \sum_{ t' \le T } e^{ \epsilon a_{t'}  } + e^{ -2 \log \epsilon } }  &= \frac{ \epsilon^{-3}  (1 + \epsilon a_i + \mathcal{O}(\epsilon^2 a_i^2) ) }{ \sum_{ t' \le T } (1 + \epsilon a_{t'} + \mathcal{O} (\epsilon^2 a_{t'}^2)  ) + e^{ -2 \log \epsilon } } \label{eq:softmax_step1} \\&
= \frac{ \epsilon^{-1} +  a_i + \mathcal{O}(\epsilon a_i^2)  }{ \sum_{ t' \le T } (\epsilon^{2} + \epsilon^3 a_{t'} + \mathcal{O} (\epsilon^4 a_{t'}^2)  ) + 1 } \label{eq:softmax_step2} \\&
= \left(  \epsilon^{-1} +  a_i + \mathcal{O}(\epsilon a_i^2) \right) \left( 1 + \mathcal{O}( \epsilon^2 T)  \right) \label{eq:softmax_step3} \\&
= \epsilon^{-1} + a_i + \mathcal{O}( \epsilon T + a_i^2 T \epsilon^2 + a_i^2 T \epsilon^3 + \epsilon a_i^2 ) = \epsilon^{-1} + a_i +  \mathcal{O}( \epsilon^{0.9} ). \nonumber
    \end{align}
    We used taylor expansion of exponential function( \cref{eq:taylor_e} ) in \cref{eq:softmax_step1} to get \cref{eq:softmax_step2}, and taylor expansion of inverse function(\cref{eq:taylor_inv}) to get \cref{eq:softmax_step3} from \cref{eq:softmax_step2}. Furthermore, with the lower bound assumption on $\epsilon$, $\sum_{ t' \le T } (\epsilon^{2} + \epsilon^3 a_{t'} + \mathcal{O} (\epsilon^4 a_{t'}^2)  )$ can be shown to be atmost $3\epsilon^2T$, which amounts to $\mathcal{O}(\epsilon^2 T)$ error in \cref{eq:softmax_step3}. The final error bound has again been simplified using the lower bound assumption on $\epsilon$.
 \end{proof}

\subsection{Simulating Multiplication from \cite{akyurek2022learning}}

We refer to the multiplication strategy of \cite{akyurek2022learning} at various places.

\begin{lemma}\label{thm:gelu_multiplication}[Lemma 4 in \cite{akyurek2022learning}]
The $GeLU$ \cite{zzhendrycks2016gaussian} nonlinearity can be used to perform multiplication: specifically,
\begin{align*}
    \sqrt{\pi/2} ( GeLU(x+y) - GeLU(y) ) = xy + \mathcal{O}(x^3y^3).
\end{align*}
\end{lemma}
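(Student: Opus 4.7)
The plan is a two-variable Taylor expansion of $\mathrm{GeLU}$ around the origin. The essential analytical facts are that $\mathrm{GeLU}(x)=x\Phi(x)$ is smooth, and its low-order derivatives at $0$ have clean closed forms: $\mathrm{GeLU}(0)=0$, $\mathrm{GeLU}'(0)=\tfrac12$, $\mathrm{GeLU}''(0)=2\phi(0)=\sqrt{2/\pi}$, and crucially $\mathrm{GeLU}'''(0)=0$ (since $\mathrm{GeLU}'''(y)=(y^3-4y)\phi(y)$ vanishes at $y=0$). The vanishing third derivative is what makes the approximation tight to higher order than naive Taylor accounting would suggest.

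I would then expand $\mathrm{GeLU}(x+y)$ as a polynomial in $x$ and $y$ (grouping by total degree) and compare with the analogous expansions of $\mathrm{GeLU}(y)$ and, as is standard and clearly intended here (cf.\ Lemma~4 of Akyürek et al., to which the excerpt attributes the result), $\mathrm{GeLU}(x)$. After subtraction, every pure power $x^n$ or $y^n$ cancels. At total degree~$2$ only the single cross monomial $2xy$ survives, weighted by $\tfrac12\mathrm{GeLU}''(0)=\tfrac{1}{\sqrt{2\pi}}$, so the leading contribution is $\sqrt{2/\pi}\,xy$; multiplying by $\sqrt{\pi/2}$ yields exactly $xy$. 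At total degree~$3$ the coefficient $\tfrac16\mathrm{GeLU}'''(0)=0$ annihilates all cross monomials, so no cubic mixed term remains.

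For the error, I would invoke Taylor's theorem with Lagrange remainder, using that every derivative of $\mathrm{GeLU}$ is a polynomial in its argument times $\phi$ and therefore globally bounded. After the cancellations of pure powers and cubic cross terms, the surviving error is a sum of mixed monomials of total degree $\ge 4$, each carrying at least one factor of $x$ and one factor of $y$. Factoring $xy$ out of every surviving mixed monomial and using the symmetry $(x,y)\leftrightarrow(y,x)$, one can write the remainder as $xy\cdot O(x^2+y^2)\cdot O(x^2+y^2)$ in the small-$x,y$ regime, which the statement abbreviates as $O(x^3y^3)$.

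The main obstacle is not the identity itself --- which is forced term by term by the Taylor coefficients of $\mathrm{GeLU}$ at the origin --- but the last bit of bookkeeping: tracking precisely which mixed monomials survive the symmetric subtraction and matching them to the statement's compact $O(x^3y^3)$ remainder shape in the regime where $x$ and $y$ are of comparable small magnitude. All other ingredients are closed-form evaluations of $\Phi$ and $\phi$ at $0$ and an elementary application of Taylor's theorem.
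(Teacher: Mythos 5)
The paper offers no proof of this lemma at all: it is imported verbatim from Akyürek et al.\ (2022) and used as a black box, so there is no in-paper argument to compare against and I can only assess your proposal on its own terms.

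Your core computation is sound. The Taylor data $\mathrm{GeLU}(0)=0$, $\mathrm{GeLU}'(0)=\tfrac12$, $\mathrm{GeLU}''(0)=\sqrt{2/\pi}$, $\mathrm{GeLU}'''(0)=0$ are all correct; you are right that the identity only makes sense with the additional $-\mathrm{GeLU}(x)$ term (setting $y=0$ shows the version as printed fails); and the degree-$2$ bookkeeping does yield exactly $xy$ after multiplying by $\sqrt{\pi/2}$. The genuine gap is in the remainder. The fourth derivative does not vanish: $\mathrm{GeLU}''''(0)=-4\varphi(0)=-4/\sqrt{2\pi}$, so the quartic Taylor coefficient of $\mathrm{GeLU}$ is $-1/(6\sqrt{2\pi})$, and the symmetric subtraction leaves $-\tfrac{1}{6\sqrt{2\pi}}\left(4x^3y+6x^2y^2+4xy^3\right)$ at total degree $4$, i.e.\ $-(x^3y/3+x^2y^2/2+xy^3/3)$ after normalization. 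These monomials each contain a factor of $x$ and a factor of $y$, as you observe, but they are $xy\cdot O(x^2+y^2)$, not $xy\cdot O\bigl((x^2+y^2)^2\bigr)$; the step where you factor the remainder as $xy\cdot O(x^2+y^2)\cdot O(x^2+y^2)$ is where the argument breaks. In particular the $x^2y^2$ term alone already dominates $x^3y^3$ as $x,y\to 0$, so the printed $O(x^3y^3)$ bound cannot be recovered from this (or any) expansion; what your argument honestly proves is an error of order $O(|xy|(x^2+y^2))$. Since the paper only uses the lemma qualitatively (a GeLU MLP can approximate a product), this does not damage anything downstream, but your proof should state the remainder it actually establishes rather than bending the final bookkeeping to match the printed exponent.
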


Thus, to represent an element-wise product or a dot product between two sub-vectors in a token embedding, we can use a MLP with a $GeLU$ activation.



\section{Linear layer}\label{sec:Linearappendix}
In the main paper, we defined the linear layer without the bias term for simplicity (\cref{def:linear_mainpaper}). In this section, we will redefine the linear layer with the bias term and present a comprehensive construction of the Linear Forward module.

\begin{definition}[Linear layer]\label{def:linear_appendix}
    For a weight $\mW\in\RR^{\aux{D} \times \aux{D}}$ and bias $\vb \in \RR^{ \aux{D} }$, a linear layer takes $\vx\in\RR^{\aux{D}}$ as input and outputs $\vy = \mW\vx + \vb$.
\end{definition}

In the discussions below, we consider a linear layer in the auxiliary model with parameters $\{\mW, \vb\}$ that takes in input sequence $\vx_1, \cdots, \vx_{\aux{T}}$ and outputs $\vy_1, \cdots, \vy_{\aux{T}}$, with $\vy_t = \mW \vx_t + \vb$ for each $t \le \aux{T}$. Since this involves a token-wise operation, we will present our constructed modules with a general token position $t$ and the prefix tokens $\{\vv_j\}.$

\paragraph{\simulator{} Linear Forward module} Continuing our discussion from \cref{sec:exposition_linear_forward}, we represent $S$ stacked rows of $\mW$ as a prefix embedding. In addition, we store the bias $\vb$ in the first prefix embedding ($\vv_1$).  

Using a set of $S'$ unique attention heads in a \simulator{} attention module (\cref{def:self-attn_construct}), we copy the bias $\vb$ to respective token embeddings and use a \simulator{} linear layer to add the biases to the final output.

\paragraph{Auxiliary's backpropagation through linear layer} For a linear layer as defined in \cref{def:linear_appendix}, the linear backpropagation layer takes in the loss gradient w.r.t. output ($\losspartial{\vy}$) and computes the loss gradient  w.r.t. input ($\losspartial{\vx}$).

\begin{definition}[Linear backpropagation ]\label{def:linear_backprop}
    For a weight $\mW\in\RR^{ \aux{D}  \times \aux{D} }$ , the linear backpropagation layer takes $\losspartial{ \vy } \in\RR^{ \aux{D} }$ as input and outputs $\losspartial{\vx} = \mW^{\top} \losspartial{\vy}$.
\end{definition}

\paragraph{\simulator{} Linear backpropagation module} 
This module will aim to simulate the auxiliary's linear backpropagation.
The input embedding $\ve_t$ to this module will contain the gradient of the loss w.r.t. $\vy_t$, i.e. $\losspartial{  \vy_t }$. As given in \cref{def:linear_backprop}, this module will output the gradient of the loss w.r.t. $\vx_t$, given by $\losspartial{\vx_t} = \mW^{\top} \losspartial{\vy_t}.$ 

We first use the residual connection to copy the prefix embeddings $\{ \vv_j \}$ (i.e., the rows of $\mW$) from the forward propagation module.
A straightforward construction would be to use the Linear Forward module but with the columns of $\mW$ stored in the prefix tokens, thereby simulating multiplication with $\mW^\top$.
However, such a construction requires applying attention to the prefix tokens, which increases the size of the construction substantially.

We instead perform the operation more efficiently by splitting it across attention heads. In particular, once we view the operation as $\losspartial { \vx_t } = \sum_{i} \left(\losspartial { \vy_t } \right)_i \vw_i$, we can see that the attention score between the current token and the prefix token containing $\vw_i$ must be $\left(\losspartial {\vy_t } \right)_i$. Using value vectors as rows of $\mW$ returns the desired output.
Similar to the Linear Forward module, we shard the weights into $S'$ parts to parallelize across more attention heads. Please see \cref{fig:linear_back}.



\begin{figure}[t]
    \centering
    \includegraphics[width=0.7\textwidth]{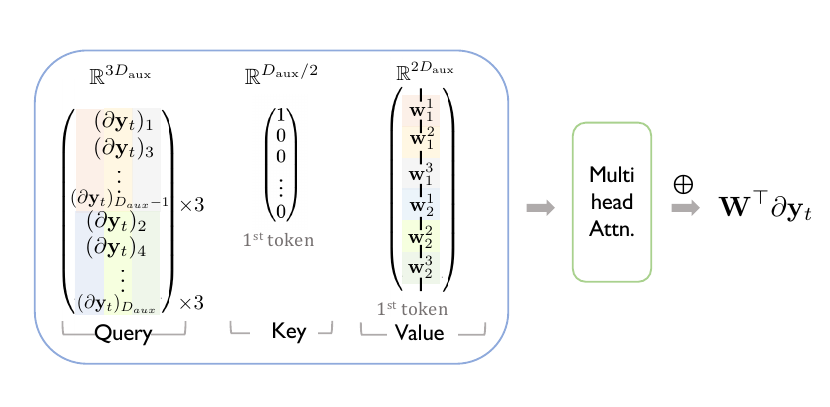}
    \caption{\simulator{} simulates the backward pass of a linear layer as a $H$-head attention layer ($H=6$ pictured), with the gradient of the loss w.r.t. linear layer output ($\losspartial{\vy_t}$) as the query, the positional one-hot vector of prefix embeddings as the key, and the parameters of the auxiliary model stored in the prefix embeddings as the value. Similar to the Linear Forward module (\cref{fig:linear_forward}), we distribute the dot product computations across all attention heads by sharding the vectors into $S'$ ($S'=3$ here) parts. We omitted the identical transformation for query, and value matrices, and permutation-based transformation for key matrix for illustration purposes.}
    \label{fig:linear_back}
\end{figure}

\begin{figure}[t]
    \centering
    \includegraphics[width=\textwidth]{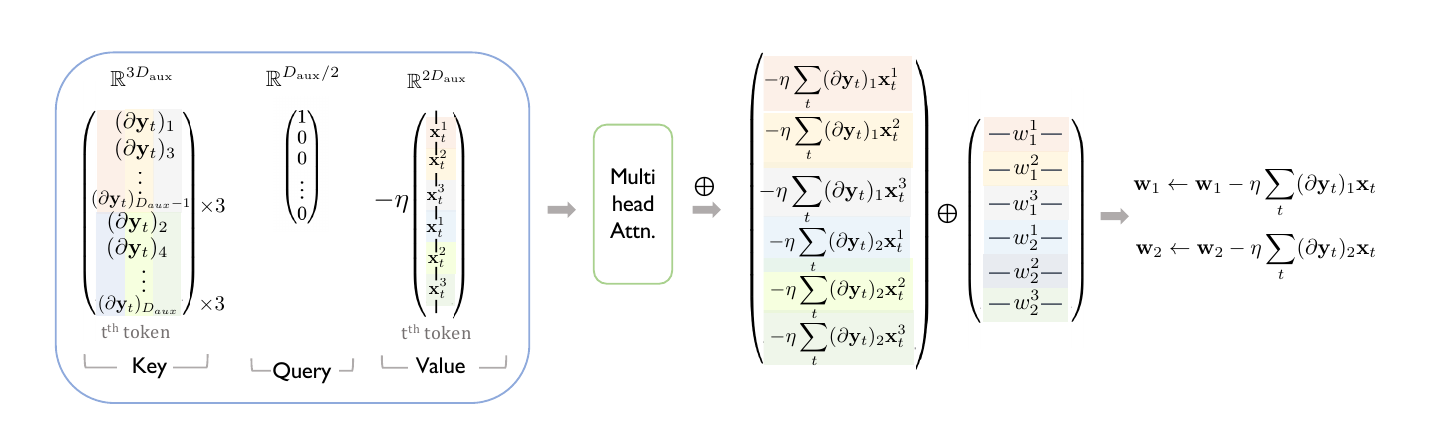}
    \caption{\simulator{} computes the parameter gradients for a linear layer as a $H$-head attention layer ($H=6$ pictured),  with  the gradient of the loss w.r.t. linear layer output ($\losspartial{\vy_t}$) as the query, the positional one-hot vector of prefix embeddings as the key, and  the input to the linear layer ($\vx_t$) as the value. The auxiliary model parameters in the prefix embeddings are then updated using a residual connection.
    Similar to the Linear Forward module (\cref{fig:linear_forward}), we distribute the dot product computations across all attention heads, by sharding the vectors into $S'$ ($S'=3$ here) parts. We omitted the identical transformation for query, and value matrices, and permutation-based transformation for key matrix for simplicity. }
    \label{fig:linear_descent}
\end{figure}

\paragraph{Auxiliary's linear descent update} Finally, the linear descent layer updates the weight and the bias parameters using a batch of inputs $\{\vx_t\}_{t \le {\aux{T}}}$ and the loss gradient w.r.t. the corresponding outputs $\{ \losspartial{\vy_t} \}_{t \le {\aux{T}}}$.

\begin{definition}[Linear descent]\label{def:linear_descent}
    For a weight $\mW\in\RR^{\aux{D}\times \aux{D}}$ and a bias $\vb \in \RR^{ \aux{D}}$, the linear descent layer takes in a batch of inputs $\{ \vx_t \in \RR^ \aux{D} \}_{t \le {\aux{T}}}$ and gradients $\{ \losspartial{\vy_t} \in \RR^ \aux{D} \}_{t \le {\aux{T}}}$ and updates the parameters as follows:
    \begin{align*}
        \mW \gets \mW - \eta \sum_{t \le {\aux{T}}} \losspartial{\vy_t} \vx_t^{\top}; \quad \quad
        \vb \gets \vb - \eta \sum_{t \le {\aux{T}}} \losspartial{\vy_t}.
    \end{align*}
\end{definition}

\paragraph{\simulator{} Linear descent module} The input embedding $\ve_t$ to this module will contain the gradient of the loss w.r.t. $\vy_t$, i.e. $\losspartial{  \vy_t }$.

As in the Linear backpropagation module, the prefix tokens $\{ \vv_j \}$ will contain the rows of  $\mW$ and $\vb$, which have been copied from the Linear forward module using residual connections. Since, in addition to the gradients, we also require the input to the linear layer, we will use residual connections to copy the input $\{ \vx_t \}$ to their respective embeddings $\{ \ve_t \},$ from the Linear Forward module. As given in \cref{def:linear_descent}, this module will update $\mW$ and $\vb$ using the gradient descent rule. 


Focusing on $\vw_i$, the descent update is given by $\vw_i \gets \vw_i - \eta \sum_{t} \left( \losspartial { \vy_t } \right)_i \vx_t$. For the prefix token $\vv_j$ that contains $\vw_i$, the update term $- \eta \sum_{t} \left( \losspartial { \vy_t } \right)_i \vx_t$ can be expressed with an attention head that represents the attention between the prefix token $\vv_j$ and any token $\ve_t$ with score $\left( \losspartial { \vy_t } \right)_i$ and value $-\eta \vx_t$. The residual connection can then be used to update the weights $\vw_i$ in $\vv_j$. 

For the bias $\vb$, the descent update is give by $\vb \gets \vb - \eta \sum_{t} \losspartial { \vy_t } $. With $\vb$ present in $\vv_1$, we use one attention head to represent the attention score between prefix token $\vv_1$ and any token $\ve_t$ as $1$, with the value being $-\eta \losspartial{\vy_t}.$ The residual connection can then be used to update the weights $\vb$ in $\vv_1$.

The above process can be further parallelized across multiple attention heads, by sharding each weight computation into $S'$ parts. Please see \cref{fig:linear_descent}.

\subsection{$\simu{H}$-split operation} \label{sec:H-split_operation_app}

We leverage local structure within the linear operations of \simulator{} to make the construction smaller. 
We build two $\simu{H}$-split operations to replace all the linear operations. We use $\simu{d}$ to denote $\simu{D}/\simu{H}$ in the following definitions. 
\begin{definition}[Split-wise $\simu{H}$-split Linear operation]
    For weight and bias parameters $\simuw{\mW} \in \RR^{\simu{H} \times \simu{d}  \times \simu{d}}, \simuw{\mB} \in \RR^{\simu{H} \times \simu{d}}$ , this layer takes in input $\ve \in \RR^{\simu{D}}$ and returns $\Tilde{\ve} = \attmerge( \Tilde{\mS}  + \simuw{\mB} )$, with $\Tilde{\mS} \in \RR^{\simu{H} \times \simu{d}} $ defined with rows $ \{ \simuw{\mW}_h \attsplit_{\simu{H}}(\ve)_h \}_ {h \le \simu{H} }$. 
\end{definition}

\begin{definition}[Dimension-wise $\simu{H}$-split Linear operation]
    For weight and bias parameters $\simuw{\mW} \in \RR^{\simu{d} \times \simu{H} \times \simu{H}}, \simuw{\mB} \in \RR^{\simu{d} \times \simu{H}}$ , this layer takes in input $\ve \in \RR^{\simu{D}}$, defines $\mS \in \RR^{\simu{d} \times \simu{H} }$ with columns $\{ \attsplit_{\simu{H}}(\ve)_h \}_{ h \le \simu{H} }$, and returns $\Tilde{\ve} = \attmerge( (\Tilde{\mS} + \simuw{\mB})^{\top} )$, where $\Tilde{\mS} \in \RR^{\simu{d} \times \simu{H}} $ is defined with rows $ \{ \simuw{\mW}_d \simuw{\vs}_d \}_ {d \le \simu{d} }$.
\end{definition}

We find that we can replace all the linear operations with a splitwise $\simu{H}$-split Linear operation followed by a dimensionwise $\simu{H}$-split Linear operation, and an additional splitwise $\simu{H}$-split Linear operation, if necessary. A linear operation on $\simu{D}$-dimensional space involves $\simu{D}^2$ parameters, while its replacement requires $\simu{D}^2/\simu{H} + 2\simu{D}\simu{H}$ parameters, effectively reducing the total number of necessary parameters by $\simu{H}$.

We motivate the $\simu{H}$-split linear operations with an example. We consider the Linear Forward module in \cref{fig:linear_forward} for simulating a linear operation with parameters $\mW \in \RR^{\aux{D} \times \aux{D}}$ and no biases. For simplicity of presentation, we assume $\aux{D}$ is divisible by $4$. We stack $2$ rows of weights per prefix embedding. We distribute the dot-product computation across the $\simu{H}=6$ attention heads, by sharding each weight into $3$ parts. Since we require to have enough space to store all the sharded computation from the linear attention heads, we require $\simu{D} = 3 \aux{D}$ (we get $3$ values for each of the $\aux{D}$ weights in $\mW$). 
For presentation, for a given vector $\vv \in \RR^{\aux{D}}$, we represent $\attsplit_{3} (\vv)_i$ by $\vv^i$ for all $1 \le i \le 3$.

Now, consider the final linear operation responsible for combining the output of the attention heads. The output, after the linear operation, should contain $\mW \vx_t$ in the first $\aux{D}$ coordinates.
At any position $t$, if we stack the output of the linear attention heads as rows of a matrix $\mS_t \in \RR^{ \simu{H} \times \simu{D} / \simu{H} }$ we get

\begin{align*}
    \mS_t  = \begin{bmatrix}
     \langle \vw_1^1, \vx_t^1 \rangle & \langle \vw_3^1, \vx_t^1 \rangle & \langle \vw_5^1, \vx_t^1 \rangle  & \cdots & \langle \vw_{\aux{D}-1}^1, \vx_t^1 \rangle  \\
     \langle \vw_1^2, \vx_t^2 \rangle & \langle \vw_3^2, \vx_t^2 \rangle & \langle \vw_5^2, \vx_t^2 \rangle  & \cdots & \langle \vw_{\aux{D}-1}^2, \vx_t^2 \rangle \\
     \langle \vw_1^3, \vx_t^3 \rangle & \langle \vw_3^3, \vx_t^3 \rangle & \langle \vw_5^3, \vx_t^3 \rangle  & \cdots & \langle \vw_{\aux{D}-1}^3, \vx_t^3 \rangle \\
     \langle \vw_2^1, \vx_t^1 \rangle & \langle \vw_4^1, \vx_t^1 \rangle & \langle \vw_6^1, \vx_t^1 \rangle  & \cdots & \langle \vw_{\aux{D}}^1, \vx_t^1 \rangle  \\
     \langle \vw_2^2, \vx_t^2 \rangle & \langle \vw_4^2, \vx_t^2 \rangle & \langle \vw_6^2, \vx_t^2 \rangle  & \cdots & \langle \vw_{\aux{D}}^2, \vx_t^2 \rangle\\
     \langle \vw_2^3, \vx_t^3 \rangle & \langle \vw_4^3, \vx_t^3 \rangle & \langle \vw_6^3, \vx_t^3 \rangle  & \cdots & \langle \vw_{\aux{D}}^3, \vx_t^3 \rangle
\end{bmatrix}
\end{align*}

Note that for each $j \le \aux{D}$, we have $ \langle \vw_j, \vx_t \rangle = \sum_{i=1}^{3}  \langle \vw_j^i, \vx_t^i \rangle$.  Thus, with a column-wise linear operation  on $\mS_t$, we can sum the relevant elements in each column to get
\begin{align*}
    &\mS^{col}_t  = \\& \begin{bmatrix}
     \langle \vw_1, \vx_t \rangle &  \langle \vw_3, \vx_t \rangle &   \cdots & \langle \vw_{ \aux{D}/2 - 1}, \vx_t \rangle  & 0 & 0  & \cdots & 0  \\
     \langle \vw_2, \vx_t \rangle &  \langle \vw_4, \vx_t \rangle &  \cdots & \langle \vw_{ \aux{D}/2 }, \vx_t \rangle  & 0 & 0  & \cdots & 0  \\
     0 &  0 & \cdots & 0  &  \langle \vw_{ \aux{D} / 2 + 1 }, \vx_t \rangle & \langle \vw_{ \aux{D} / 2 + 3 }, \vx_t \rangle  & \cdots & \langle \vw_{ \aux{D} - 1 }, \vx_t \rangle  \\
    0 &  0 &  \cdots & 0  &  \langle \vw_{ \aux{D} / 2 + 2 }, \vx_t \rangle & \langle \vw_{ \aux{D} / 2 + 4 }, \vx_t \rangle & \cdots & \langle \vw_{ \aux{D} }, \vx_t \rangle  \\
     0 &  0 &  \cdots & 0  &  0 &  0 & \cdots & 0   \\
     0 &  0 &  \cdots & 0  &  0 &  0 & \cdots & 0  
\end{bmatrix}
\end{align*}

A row-wise linear operation on $\mS^{col}_t$ can space out the non-zero elements in the matrix and give us
\begin{align*}
    &\mS^{row}_t  = \\& \begin{bmatrix}
     \langle \vw_1, \vx_t \rangle & 0 & \langle \vw_3, \vx_t \rangle & 0 &  \cdots & \langle \vw_{ \aux{D}/2 - 1}, \vx_t \rangle  & 0   \\
     0 & \langle \vw_2, \vx_t \rangle & 0 & \langle \vw_4, \vx_t \rangle &  \cdots & 0 & \langle \vw_{ \aux{D}/2 }, \vx_t \rangle   \\
     \langle \vw_{ \aux{D} / 2 +  1}, \vx_t \rangle & 0 & \langle \vw_{ \aux{D} / 2 +  3}, \vx_t \rangle & 0 &  \cdots & \langle \vw_{ \aux{D} - 1}, \vx_t \rangle  & 0   \\
    0 & \langle \vw_{ \aux{D} / 2 +  2}, \vx_t \rangle & 0 & \langle \vw_{ \aux{D} / 2 +  4}, \vx_t \rangle &  \cdots & 0 & \langle \vw_{ \aux{D} }, \vx_t \rangle   \\
     0 &  0 &  \cdots & 0  &    \cdots & 0 &  0   \\
     0 &  0 &  \cdots & 0  &    \cdots & 0 &  0  
\end{bmatrix}
\end{align*}

Finally, a column-wise linear operation on $\mS^{row}_t$ helps to get the non-zero elements in the correct order.
\begin{align*}
    &\Bar{\mS}^{col}_t  = \\&\begin{bmatrix}
     \langle \vw_1, \vx_t \rangle & \langle \vw_2, \vx_t \rangle & \langle \vw_3, \vx_t \rangle & \langle \vw_4, \vx_t \rangle  &  \cdots & \langle \vw_{\aux{D}/2-1}, \vx_t \rangle & \langle \vw_{\aux{D}/2}, \vx_t \rangle \\
     \langle \vw_{\aux{D}/2 + 1}, \vx_t \rangle & \langle \vw_{\aux{D}/2 + 2}, \vx_t \rangle & \langle \vw_{\aux{D}/2 + 3}, \vx_t \rangle & \langle \vw_{\aux{D}/2 + 4}, \vx_t \rangle  &  \cdots & \langle \vw_{\aux{D}-1}, \vx_t \rangle & \langle \vw_{\aux{D}}, \vx_t \rangle \\
     0 &  0 & 0 & 0   & \cdots & 0 & 0 \\
     \vdots & \vdots & \vdots & \vdots & \cdots & \vdots & \vdots \\
     0 &  0 & 0 & 0  & \cdots & 0 & 0
\end{bmatrix}
\end{align*}

The desired output is then given by $\attmerge( \{ \Bar{\vs}^{col}_{t, j} \} \}_{j=1}^{\aux{D}} )$, which contains $\mW \vx_t$ in the first $\aux{D}$ coordinates. The operations that convert $\mS_t$ to $\mS_t^{col}$ and $\mS_t^{row}$ to $\Bar{\mS}_t^{row}$ represents a split-wise $6$-split linear operation, while the operation that converts $\mS^{col}_t$ to $\mS_t^{row}$ represents a dimension-wise $6$-split linear operation. A naive linear operation on the output of the attention heads would require $\simu{D}^2$ parameters, while its replacement requires $\simu{D}^2/6$ parameters to represent a dimension-wise $6$-split linear operation, and an additional $12 \simu{D}$ parameters to represent the split-wise $6$-split linear operations.



\section{Self-attention layer}\label{sec:self-attnt-backprop_appendix}

\begin{figure}
    \centering
    \includegraphics[width=\textwidth]{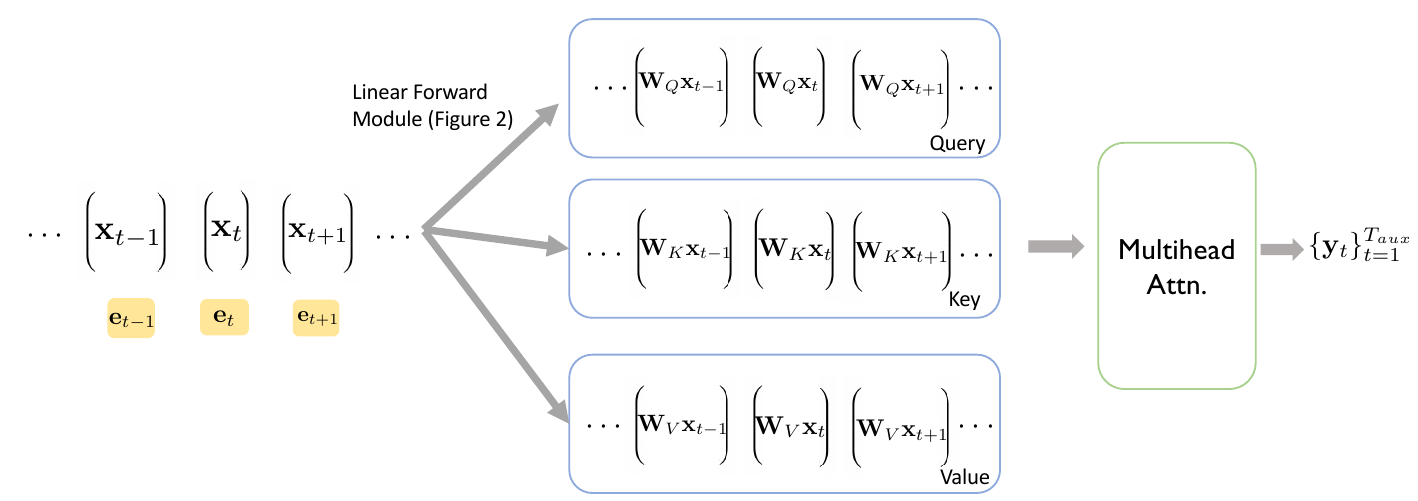}
    \caption{\simulator{} simulates the forward pass of a self-attention layer of the auxiliary model with a Linear Forward module (\cref{fig:linear_forward}) and a \simulator{} softmax attention layer (\cref{def:self-attn_construct}). The Linear Forward module computes the query, key, and value vectors using a Linear Forward module on the current embeddings, changing the prefix embeddings to correspond to $\mW_Q$, $\mW_K$, and $\mW_K$ respectively.}
    \label{fig:attn_forward}
\end{figure}

\begin{figure}
    \centering
    \includegraphics[width=\textwidth]{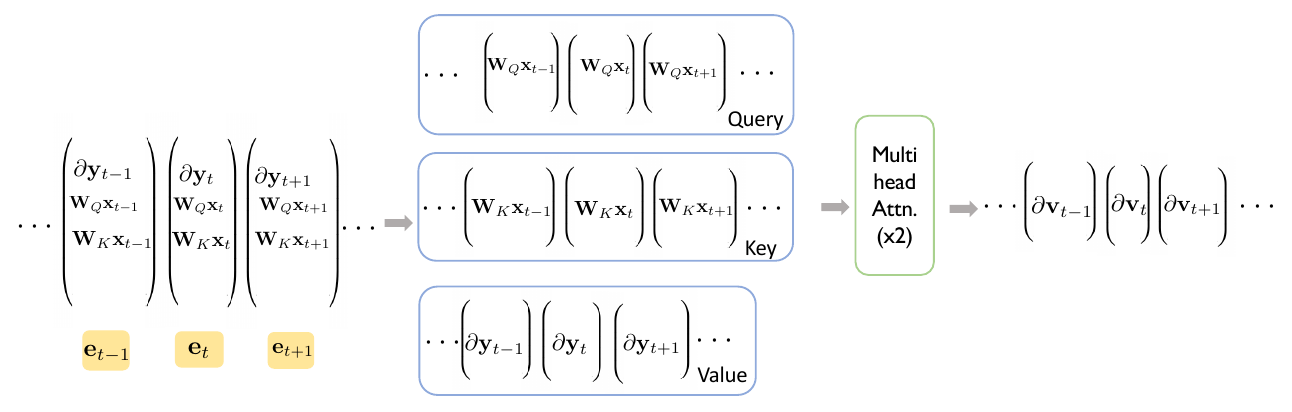}
\caption{The gradient w.r.t. the value vectors $\{\losspartial{\vv_t}\}$  (\cref{def:self-attn-backprop}) forms the integral component for both \simulator{} self-attention backward and descent update modules. \simulator{} computes $\{\losspartial{\vv_t}\}$ using a softmax attention and a linear attention layer. We first use residual connections to copy the query and key vectors to the current embeddings from the \simulator{} Self-attention Forward module (\cref{fig:attn_forward}). The  softmax attention layer re-computes the attention scores $\{a^h_{t, j}\}$ between all token pairs $\{(t, j)\}$ and stores them in the token embeddings. The linear attention layer uses the one-hot position embeddings of the input tokens as the query to use the transposed attention scores $\{a^h_{j, t}\}$ for all token pairs $\{(t, j)\}$ and use the gradients  $\{\losspartial{\vy_t}\}$ as the value vectors to compute $\{\losspartial{\vv_t}\}$.}
    \label{fig:value_gradient}
\end{figure}

We first introduce multi-head attention,  generalizing  single-head attention (\cref{def:self-attn_auxiliary_single}). 
\begin{definition}[Auxiliary self-attention with $\aux{H}$ heads]\label{def:self-attn}
    For query, key, and value weights $\mW_Q, \mW_K, \mW_V \in\RR^{\aux{D}\times \aux{D}}$ and bias $\vb_{Q}, \vb_{K}, \vb_{V} \in\RR^{\aux{D}}$, a self-attention layer with $\aux{H}$ attention heads and a function $\attnfn: \RR^{\aux{T}} \to \RR^{\aux{T}}$ takes a sequence $\{\vx_t\in\RR^{\aux{D}}\}_{t \le \aux{T}}$ as input and outputs $\{ \vy_t \}_{t \le \aux{T}}$, with 
    \begin{align}
        \vy_t = \attmerge ( \{ \sum_{j \le \aux{T}} a^h_{t, j}  \vv^{h}_j  \}_{h \le \aux{H}} ).  \label{eq:attnt_forward}
    \end{align}
    $a^{h}_{t, j}$ is defined as the attention score of head $h$ between tokens at positions $t$ and $j$, and is given by
    \begin{align} \label{eq:attn}
        a^h_{t, j} =  \mathrm{softmax} ( \mK^{h} \vq^{h}_t )_j.
    \end{align}
     Here, $\vq_t$, $\vk_t$, $\vv_t$ denote the query, key, and value vectors at each position $t$, computed as $\mW_Q \vx_t + \vb_Q$, $\mW_K \vx_t + \vb_K$, and $\mW_V \vx_t + \vb_V$ respectively. In addition, $\vq^{h}_t, \vk^{h}_t, \vv^{h}_t$ denote $\attsplit_{\aux{H}} (\vq_t)_h$, $\attsplit_{\aux{H}} (\vk_t)_h$, and $\attsplit_{\aux{H}} (\vv_t)_h$ respectively for all $t \le \aux{T}$, and $h \le \aux{H}$. 
     $\mK^h \in \RR^{\aux{T} \times \aux{D} }$ is defined with its rows as $\{\vk^h_t\}_{t \le \aux{T}}$ for all $h \le \aux{H}$.
\end{definition}

In the discussions below, we consider a self-attention layer in the auxiliary model with parameters $\{\mW_Q, \vb_{Q}, \mW_K, \vb_K, \mW_V, \vb_{V}\}$ that takes in input sequence $\vx_1, \cdots, \vx_{\aux{T}}$ and outputs $\vy_1, \cdots, \vy_{\aux{T}}$, with $\{ \vy_t \}_{t=1}^{\aux{T}}$ given by \eqref{eq:attnt_forward}. As in the definition,  $\vq_t, \vk_t, \vv_t$ denote the query, key, and value vectors for position $t$. We will use \simulator{} self-attention modules in order to simulate the operations on the auxiliary's self-attention layer. To do so, we will need $\simu{H} \ge \aux{H}$ in the corresponding \simulator{} self-attention modules.

\paragraph{\simulator{} Self-attention forward module}
The input embedding to this  module $\ve_t$ at each position $t$ will contain $\vx_t$ in its first $\aux{D}$ coordinates. The self-attention module can be divided into four sub-operations: Computation of (a)  query vectors $\{ \vq_t \}_{t \le T}$, (b)  key vectors $\{ \vk_t \}_{t \le T}$, (c)  value vectors $\{ \vv_t \}_{t \le T}$, and (d)  $\{ \vy_t \}_{t \le T}$ using \eqref{eq:attnt_forward}. Please see \cref{fig:attn_forward}.

\begin{itemize}
    \item Sub-operations (a): The computation of query vector $\vq_t := \mW_Q \vx_t + \vb_Q$  at each position $t$ is a linear operation involving parameters $\mW_Q, \vb_Q$. Thus, we can first feed in the stacked rows of $\mW_Q$ and $\vb_Q$ onto the prefix embeddings $\{ \vv_j \}$. We use a Linear Forward module (\cref{sec:Linearappendix}) on the current embeddings and the prefix embeddings to get embedding $\ve^q_{t}$ at each position $t$ that contains $\vq_t$ in the first $\aux{D}$ coordinates.

    \item Sub-operations (b, c): Similar to (a), we feed in the stacked rows of the necessary parameters onto the prefix embeddings $\{ \vv_j \}$, and call two Linear Forward Modules (\cref{sec:Linearappendix}) independently to get embeddings $\ve^k_t$, and $\ve^v_t$ containing $\vk_t$ and $\vv_t$ respectively.

    We now combine the embeddings $\ve^q_t$, $\ve^k_t$, and $\ve^v_t$ to get an embedding $\ve_t$ that contain $\vq_t, \vk_t, \vv_t$ in the first $3\aux{D}$ coordinates.
    
    \item Sub-operation (d): Finally, we call a \simulator{} self-attention module (\cref{def:self-attn_construct}) on our current embeddings $\{ \ve_t \}_{t \le T}$ to compute $\{ \vy_t \}_{t \le T}$. The query, key, and value parameters in the self-attention module  contain sub-Identity blocks that pick out the relevant information from $\vq_t, \vk_t, \vv_t$ stored in $\ve_t$.
\end{itemize}

\textit{Remark:} Sub-operations (a), (b), and (c) can be represented as a single linear operation with a weight $\mW \in \RR^{3 \aux{D} \times \aux{D} }$ by concatenating the rows of $\{\mW_Q, \mW_K, \mW_V\}$ and a bias $\vb \in \RR^{3 \aux{D}}$ that concatenates $\{\vb_Q, \vb_K, \vb_V\}$. Thus, they can be simulated with a single Linear Forward Module, with $\mW, \vb$ fed into the prefix embeddings.
However, we decide to separate them in order to limit the number of prefix embeddings and the embedding size. E.g. for GPT-2, $\aux{D} = 768$. This demands either a $3\times$ increase in the embedding size in \simulator{} or a $3\times$ increase in the number of prefix embeddings. Hence, in order to minimize the parameter cost, we call Linear Forward Module separately to compute $\vq_t$, $\vk_t$, and $\vv_t$ at each position $t$.

\paragraph{Auxiliary's backpropagation through self-attention} For an auxiliary self-attention layer as defined in \cref{def:self-attn}, the backpropagation layer takes in the loss gradient w.r.t. output ($\{ \losspartial{\vy_t} \}_{t \le \aux{T}}$) and computes the loss gradient  w.r.t. input token ($\{  \losspartial{\vx_t} \}_{t \le \aux{T}}$). 

\begin{restatable}{definition}{defselfattnbackprop}[Auxiliary self-attention backpropagation]\label{def:self-attn-backprop}
    For query, key, and value weights $\mW_Q, \mW_K, \mW_V \in\RR^{\aux{D}\times \aux{D}}$ and bias $\vb_{Q}, \vb_{K}, \vb_{V} \in\RR^{\aux{D}}$, the backpropagation layer corresponding to a self-attention layer with $\aux{H}$ attention heads takes a sequence $\{ \losspartial{ \vy_t } \in\RR^{\aux{D}} \}_{t \le \aux{T}}$ and $ \{  \vx_t  \in\RR^{\aux{D}} \}_{t \le \aux{T}} $ as input and outputs $\{ \losspartial{ \vx_t }  \}_{t \le \aux{T}}$, with 
    \begin{align*}
    \losspartial{\vx_t} &= \mW_Q^{\top} \losspartial{ \vq_t }  + \mW_K^{\top} \losspartial{ \vk_t }    + \mW_V^\top \losspartial{ \vv_t } , \quad \text{ with } 
    \\ \losspartial{ \vq_t } &=  \attmerge( \{  \sum_j a^h_{t, j} ( (\losspartial{\vy_t^h})^{\top} \vv^{h}_j ) [ \vk^h_j - \sum_{j'} a^h_{t, j'} \vk^{h}_{j'} ]  \}_{h \le \aux{H}} ); \\
    \losspartial{ \vk_t } &=  \attmerge( \{  \sum_j a^h_{j, t} \vq^{h}_{j}  [ (\losspartial{\vy_j^h})^{\top} (\vv^{h}_t - \sum_{j'} a^h_{j, j'} \vv^{h}_{j'})  ]  \}_{h \le \aux{H}} ); \\
    \losspartial{ \vv_t } &= \attmerge(\{ \sum_{j} a^h_{j, t}  \losspartial{\vy^h_j} \}_{h \le \aux{H}} ) 
    \end{align*} 
    Here, $\vq_t$, $\vk_t$, and $\vv_t$ refer to query, key, and value vectors at each position $t$, with the attention scores $\{a_{t, j}^h\}_{t, j \le \aux{T}, h \le \aux{H}}$. 

\end{restatable}

\paragraph{Complexity of true backpropagation} The much-involved computation in the above operation is due to the computation of $\losspartial{\vq_t}$ and $\losspartial{\vk_t}$ at each position $t$. For the following discussion, we assume that our current embeddings $\ve_t$ contain $\vq_t, \vk_t, \vv_t$, in addition to the gradient $\losspartial{\vy_t}$. The computation of $\losspartial{\vq_t}$ (and similarly $\losspartial{\vk_t}$) at any position $t$ involves the following sequential computations and the necessary \simulator{} modules.
\begin{itemize}
    \item  $\{ \{ \losspartial{\vy_t^h})^{\top} \vv^{h}_j \}_{ j \le \aux{T} } \}_{ h \le \aux{H} }$ with a \simulator{} linear self-attention module (\cref{def:self-attn_construct}), with atleast $\aux{H}$ attention heads that represent the attention score between $\ve_t$ and any other token $\ve_j$, by $\{ (\losspartial{\vy_t^h})^{\top} \vv^{h}_j  \}_{ h \le \aux{H} }$.

    \item Attention scores $\{ a^h_{t, j} \}_{h \le \aux{H}}$, which requires a \simulator{} softmax self-attention module (\cref{def:self-attn_construct}), with at least $\aux{H}$ heads, that uses the already present $\{\vq_t, \vk_t, \vv_t\}$ in the current embeddings $\ve_t$ to re-compute the attention scores. 

    \item $\{ a_{t, j}^h (\losspartial{\vy_t^h})^{\top} \vv^{h}_j  \}_{h \le \aux{H}}$ for all $j \le \aux{T}$ by multiplying the attention scores $\{ a_{t, j}^h \}_{h \le  \aux{H}}$ with $\{ (\losspartial{\vy_t^h})^{\top} \vv^{h}_j \}_{h \le \aux{H}}$ using an MLP layer (\cref{thm:gelu_multiplication}). Furthermore, $\{ \sum_{j} a^h_{t, j} \vk^{h}_{j} \}_{ h \le \aux{H} }$ needs to be computed in parallel as well, with additional attention heads.   
    
    \item  $\losspartial{ \vy_t }$ with a \simulator{} linear self-attention module (\cref{def:self-attn_construct}), with atleast $\aux{H}$ attention heads that represent the attention score between any token $\ve_j$ and $\ve_t$ by $\{ a_{t, j}^h (\losspartial{\vy_t^h})^{\top} \vv^{h}_j  \}_{h \le \aux{H}}$, with value vectors given by $\{ \vk^h_j - \sum_{j'} a^h_{t, j'} \vk^{h}_{j'} \}_{ h \le \aux{H} }$. 
\end{itemize}

The sequential computation requires the simulator to store $\{ \{ \losspartial{\vy_t^h})^{\top} \vv^{h}_j \}_{ j \le \aux{T} } \}_{ h \le \aux{H} }$ and $\{ a^h_{t, j} \}_{h \le \aux{H}}$ in the token embedding $\ve_t$, which requires an additional $2\aux{T}\aux{H}$ embedding dimension size.
To avoid the much-involved computation for the true gradient propagation, we instead only use the gradients w.r.t. $\vv_t$.

\paragraph{Approximate auxiliary self-attention backpropagation} We formally extend the definition of approximate gradients $\{ \losspartial{\vx_t} \}_{t=1}^{ \aux{T} }$ from \cref{def:self-attn-appr-backprop} to multi-head attention in \cref{def:self-attn-appr-backprop}.

\begin{definition}\label{def:self-attn-appr-backprop}
    For 
query, key, and value weights $\mW_Q, \mW_K, \mW_V \in\RR^{\aux{D}\times \aux{D}}$ and bias $\vb_{Q}, \vb_{K}, \vb_{V} \in\RR^{\aux{D}}$, the approximate backpropagation layer corresponding to a self-attention layer with $\aux{H}$ attention heads takes a sequence $\{ \losspartial{ \vy_t } \in\RR^{\aux{D}} \}_{t \le {\aux{T}}}$ and $ \{  \vx_t  \in\RR^{\aux{D}} \}_{t \le {\aux{T}}} $ as input and outputs $\{ \losspartial{ \vx_t } := \attmerge( \{ \losspartial{\vx^h_t} \} _{h \le \aux{H}} ) \}_{t \le {\aux{T}}}$, with 
\begin{align*}
\Hat{ \losspartial{\vx_t} } = \mW_V^\top \losspartial{ \vv_t } , \quad \text{ where }
\losspartial{ \vv_t } = \attmerge(\{ \sum_{j} a^h_{j, t}  \losspartial{\vy^h_j} \}_{h \le \aux{H}} ) 
\end{align*} 
Here, $\vq_t$, $\vk_t$, and $\vv_t$ refer to query, key, and value vectors at each position $t$, as defined in \cref{def:self-attn}, with the attention scores $\{a_{t, j}^h\}_{t, j \le {\aux{T}}, h \le \aux{H}}$ defined in \cref{eq:attn}. 
\end{definition}

\begin{figure}[t]
    \centering
    \includegraphics[width=0.75\textwidth]{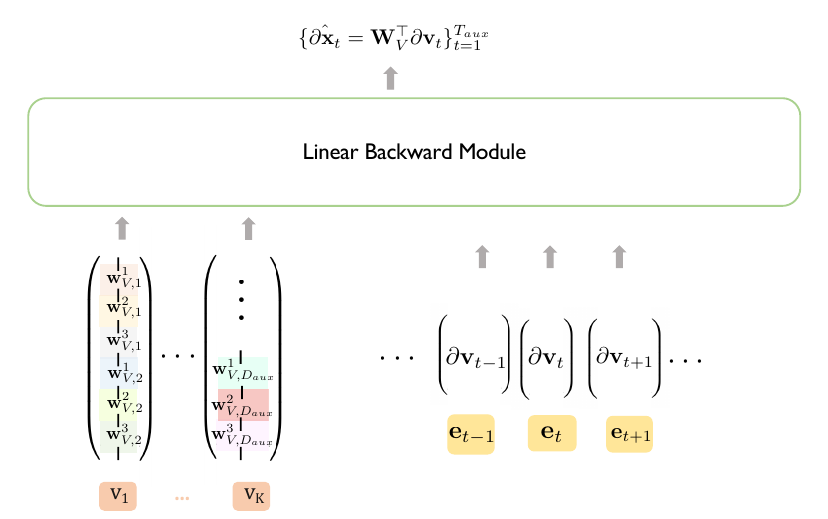}
    \caption{\simulator{} simulates the backward pass of a self-attention layer of the auxiliary model using a Linear Backward module (\cref{fig:linear_back}). The input embeddings contain the gradient of the loss w.r.t. the value vectors ($\losspartial{\vv_t}$) computed in \cref{fig:value_gradient}. The value matrix $\mW_V$ is encoded in the prefix embeddings. We call the Linear Backward module on this sequence.}
    \label{fig:attn_back}
\end{figure}

\begin{figure}[t] 
 \centering
    \includegraphics[width=0.75\textwidth]{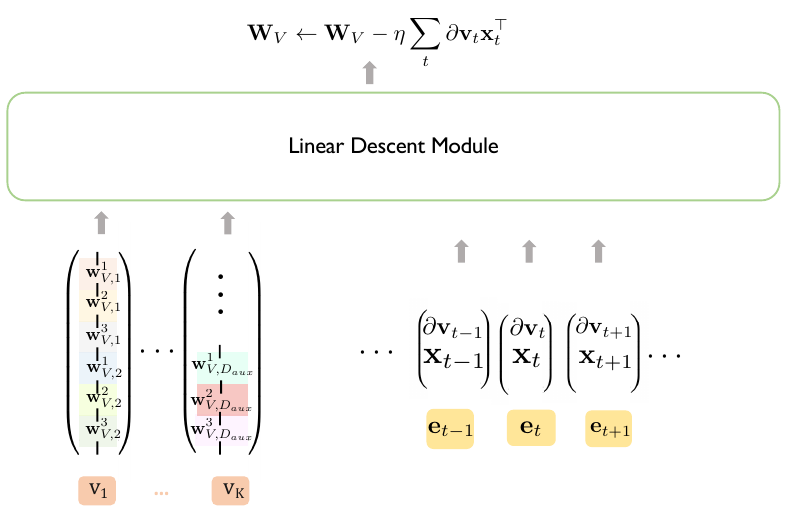}
    \caption{ \simulator{} simulates the backward pass of the self-attention layer in the auxiliary model by employing the Linear Descent module (\cref{fig:linear_descent}). The input embeddings consist of the gradient of the loss with respect to the value vectors ($\losspartial{\vv_t}$) computed in \cref{fig:value_gradient}. Additionally, we incorporate a residual connection to copy the input from the Self-attention Forward module (\cref{fig:attn_forward}) into $\vx_t$. Before invoking the Linear Descent module, we represent the value parameters ($\mW_V$) into the prefix embeddings. \simulator{} simulates the backward pass of a self-attention layer of the auxiliary model using a Linear Descent module (\cref{fig:linear_descent}). }
    \label{fig:attn_descent}
\end{figure}

In the upcoming theorem, we formally show that if on a given sequence $\{\vx_t\}_{t \le \aux{T}}$, for all token positions all the attention heads in a self-attention layer primarily attend to a single token, then the approximate gradient $\Hat{\losspartial{\vx_t}}$ is  close to the true gradient $\losspartial{\vx_t}$ at each position $t$. 

\begin{definition}[$\varepsilon$-hard attention head]\label{def:hard_attn}
For the Self-Attention layer of $\aux{H}$ heads in \cref{def:self-attn}, on a given input sequence $\{\vx_t\}_{t=1}^{\aux{T}}$, an attention head $h \le \aux{H}$ is defined to be $\varepsilon$-hard on the input sequence, if for all positions $t \le \aux{T}$, there exists a position $t_0 \le \aux{T}$ such that $a_{t, t_0}^h \ge 1 - \varepsilon$.
\end{definition}


\begin{restatable}{theorem}{boundsofthardattnt}\label{thm:attn_backprop}
With the notations in \cref{def:self-attn,def:self-attn-backprop,def:self-attn-appr-backprop}, if on a given input sequence $\{\vx_t\}_{t=1}^{\aux{T}}$, with its query, key, and value vectors $\{\vq_t, \vk_t, \vv_t\}_{t=1}^{\aux{T}}$, all the $\aux{H}$ attention heads are $\varepsilon$-hard for some $\varepsilon > 0$, then for a given sequence of gradients $\{\losspartial{\vy_t}\}_{t=1}^{\aux{T}}$, 
\begin{align*}
    \norm{\losspartial{\vq_t}}_2, \norm{\losspartial{\vk_t}}_2 \le \mathcal{O}( \varepsilon B_x^2 B^2_w B_y ), \quad \text{ for all } t \le \aux{T},
\end{align*}
where $B_x = \max_{t \le \aux{T}} \norm{\vx_t}_2$, $B_y = \max_{t \le \aux{T}} \norm{ \losspartial{\vy_t} }_2$, and $B_w = \max\{ \norm{\mW_K}_2, \norm{\mW_Q}_2, \norm{\mW_V}_2, \norm{\vb_V}_2, \norm{\vb_K}_2, \norm{\vb_V}_2 \} $.

This implies, for each position $t$, $\norm{ \Hat{\losspartial{\vx_t}} - \losspartial{\vx_t} }_2 \le \mathcal{O}(\varepsilon B_x^2 B_w^3 B_y).$
\end{restatable}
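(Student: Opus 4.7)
The plan is to exploit the $\varepsilon$-hard attention hypothesis from \cref{def:hard_attn} to show that the ``correction terms'' $\losspartial{\vq_t}$ and $\losspartial{\vk_t}$ in the true backpropagation formula (\cref{def:self-attn-backprop}) are each of order $\varepsilon$, so that only the value-gradient term $\mW_V^\top \losspartial{\vv_t}$ retained by the approximate rule (\cref{def:self-attn-appr-backprop}) matters asymptotically. First I would record the uniform bounds $\|\vq^h_t\|_2, \|\vk^h_t\|_2, \|\vv^h_t\|_2 \le O(B_w B_x + B_w)$ for every head $h$ and position $t$, directly from $\vk_t = \mW_K \vx_t + \vb_K$ and the analogous formulas for $\vq_t, \vv_t$.

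Next, for each pair $(t, h)$ I would invoke \cref{def:hard_attn} to pick an index $t_0 = t_0(t, h)$ with $a^h_{t, t_0} \ge 1 - \varepsilon$, so that $\sum_{j \neq t_0} a^h_{t, j} \le \varepsilon$. The key algebraic identity $\vk^h_{t_0} - \sum_{j'} a^h_{t, j'} \vk^h_{j'} = (1 - a^h_{t, t_0}) \vk^h_{t_0} - \sum_{j' \neq t_0} a^h_{t, j'} \vk^h_{j'}$ has norm at most $2 \varepsilon \max_j \|\vk^h_j\|_2 = O(\varepsilon B_w B_x)$. Splitting the sum defining the $h$-th block of $\losspartial{\vq_t}$ into the $j = t_0$ contribution (where this bracket is already $O(\varepsilon B_w B_x)$) and the $j \neq t_0$ contributions (whose attention weights sum to at most $\varepsilon$, while each bracket has norm $\le 2 B_w B_x$), and bounding the inner product $|(\losspartial{\vy^h_t})^\top \vv^h_j|$ by $B_y \cdot O(B_w B_x)$, yields $\|\losspartial{\vq_t}\|_2 = O(\varepsilon B_x^2 B_w^2 B_y)$ after an $\ell_2$-merge over the $\aux{H}$ heads.

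The analysis of $\losspartial{\vk_t}$ is more delicate because the outer weight $a^h_{j, t}$ is a \emph{column} of the attention matrix and is not normalised in $j$. Here I would denote by $t_1(j)$ the dominant index of row $j$ (depending also on $h$) and decompose the inner bracket as $\vv^h_t - \sum_{j'} a^h_{j, j'} \vv^h_{j'} = (\vv^h_t - \vv^h_{t_1(j)}) + (\vv^h_{t_1(j)} - \sum_{j'} a^h_{j, j'} \vv^h_{j'})$. The second parenthesis always has norm $O(\varepsilon B_w B_x)$ by the same identity used above. The first parenthesis vanishes when $t = t_1(j)$, and whenever $t \neq t_1(j)$ the $\varepsilon$-hardness forces $a^h_{j, t} \le \varepsilon$, so both halves of the split contribute only $O(\varepsilon B_w^2 B_x^2 B_y)$ per head (with any residual $\aux{T}$-dependence absorbed into the implicit constant), giving $\|\losspartial{\vk_t}\|_2 = O(\varepsilon B_x^2 B_w^2 B_y)$.

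Finally, subtracting the two gradient expressions gives $\losspartial{\vx_t} - \hat{\losspartial{\vx_t}} = \mW_Q^\top \losspartial{\vq_t} + \mW_K^\top \losspartial{\vk_t}$, and applying $\|\mW_Q\|_2, \|\mW_K\|_2 \le B_w$ produces the claimed $O(\varepsilon B_x^2 B_w^3 B_y)$ bound. The main obstacle I anticipate is precisely the asymmetric $\vk$-case: since the column sum $\sum_j a^h_{j, t}$ is not $1$, I need to argue carefully that the $j$ for which $a^h_{j, t}$ is not already $O(\varepsilon)$-small are exactly those with $t_1(j) = t$, and that for those $j$ the first bracket of the decomposition vanishes, so the total contribution remains small even when many such $j$ coexist.
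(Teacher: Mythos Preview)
Your proposal is correct and, for $\losspartial{\vq_t}$ and the final subtraction step, follows the paper's proof essentially verbatim: split the $j$-sum at the dominant index $t_0$, use the identity $\vk^h_{t_0}-\sum_{j'}a^h_{t,j'}\vk^h_{j'}=(1-a^h_{t,t_0})\vk^h_{t_0}-\sum_{j'\ne t_0}a^h_{t,j'}\vk^h_{j'}$ to get a $2\varepsilon\max_j\|\vk^h_j\|_2$ bound on the dominant bracket, and bound the remaining $j\ne t_0$ contributions via $\sum_{j\ne t_0}a^h_{t,j}\le\varepsilon$.

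Where you differ is in the treatment of $\losspartial{\vk_t}$. The paper's proof simply asserts that ``the proof for $\losspartial{\vk_t}$ follows like-wise'' and gives no further detail. You correctly observe that this case is \emph{not} symmetric---the outer weights $a^h_{j,t}$ form a column rather than a row of the attention matrix and do not sum to $1$---and you supply an explicit decomposition $\vv^h_t-\sum_{j'}a^h_{j,j'}\vv^h_{j'}=(\vv^h_t-\vv^h_{t_1(j)})+(\vv^h_{t_1(j)}-\sum_{j'}a^h_{j,j'}\vv^h_{j'})$ together with the key dichotomy that either $t_1(j)=t$ (so the first bracket vanishes) or $a^h_{j,t}\le\varepsilon$. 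This is a genuine addition: your argument makes precise what the paper leaves implicit, and your remark that a residual $\aux{T}$-factor may hide in the constant is an honest observation that the paper's handwave does not address either. Both approaches arrive at the same conclusion, but yours is the more complete one for the $\vk$-gradient.
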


\paragraph{\simulator{} Self-attention backpropagation module} 
The input embeddings $\ve_t$ contain $\losspartial{\vy_t}$ in the first $\aux{D}$ coordinates.  Since we require to re-compute the attention scores $\{a^h_{t, j}\}_{j \le \aux{T}, h \le \aux{H} }$, we need to copy the query, key, and value vectors $\vq_t$, $\vk_t$, and $\vv_t$ from the \simulator{} self-attention Forward module at each position $t$. Furthermore, we use the residual connection to copy the prefix embeddings $\{ \vv_j \}$, which contain the rows of $\mW_V$, from the \simulator{} self-attention Forward module.

The operation can be divided into three sub-operations: Computing (a) attention scores $\{ a_{t, j}^h  \}_{h \le \aux{H} }$ for all $j \le \aux{T}$, at each position $t$, (b) $\losspartial{\vv_t}$ from $\{ a_{t, j}^h  \}_{h \le \aux{H}}$ and $\losspartial{\vy_t}$, and (c) $\Hat{\losspartial{\vx_t}}$ from $\losspartial{\vv_t}$.

\begin{itemize}
    \item Sub-operation (a): Since, the current embeddings $\ve_t$ contain $\vq_t, \vk_t$, we can simply call a self-attention attention module to compute the attention scores   $\{ a_{t, j}^h  \}_{h \le \aux{H}}$ for all $j\le T$ and store them in the current embeddings. We further retain $\losspartial{\vy_t}$ and $\vv_t$ for further operations using residual connections.

    \item Sub-operation (b): With the current embeddings $\ve_t$ containing the attention scores   $\{ a_{t, j}^h  \}_{h \le \aux{H}}$ for all $j\le T$, and the gradient $\losspartial{\vy_t}$, we can compute $\losspartial{\vv_t}$ using a \simulator{} linear
 self-attention module with atleast $\aux{H}$ attention heads, that  represent the attention scores between tokens $\ve_t$ and $\ve_j$ for any $j$ as $\{ a_{j, t}^h \}_{h \le \aux{H} }$ and use $\attsplit_{ \aux{H} } ( \losspartial{\vy_t} )$ as their value vectors. 

    \item Sub-operation (c): And finally, the computation of $\Hat{\losspartial{\vx_t}}$ is identical to the  backpropagation through a linear layer, with parameters $\mW_V$ and $\vb_V$. Hence, we call a Linear backpropagation module on the current embeddings, that contain $\losspartial {\vy_t}$ and the prefix embeddings that contain $\mW_V$ and $\vb_V$.
\end{itemize}

\paragraph{Separating sub-operations (a) and (b)} The operation for computing $\losspartial{\vv_t}$ in \cref{def:self-attn-appr-backprop} looks very similar to the computation of $\vy_t$ in \cref{eq:attnt_forward}. However, the major difference is that instead of the attention scores being $\{ a_{t, j}^h \}_{h \le \aux{H} }$ between token $t$ and any token $j$, we need the attention scores to be $\{ a_{j, t}^h \}_{h \le \aux{H} }$. Thus, unless our model allows a transpose operation on the attention scores, we need to first store them in our embeddings and then use an additional self-attention module that can pick the right attention scores between tokens using position embeddings. Please see \cref{fig:attn_back}.

\paragraph{Auxiliary's value descent update} Similar to the complexity of true backpropagation, the descent updates for $\mW_Q, \vb_Q, \mW_K, \vb_K$ are quite expensive to express with the transformer layers. Hence, we focus simply on updating on $\mW_V, \vb_V$, while keeping the others fixed.

\begin{definition}[Auxiliary self-attention value descent]\label{def:self-attn-value-update}
For query, key, and value weights $\mW_Q, \mW_K, \mW_V \in\RR^{\aux{D}\times \aux{D}}$ and bias $\vb_{Q}, \vb_{K}, \vb_{V} \in\RR^{\aux{D}}$, the value descent layer corresponding to a self-attention layer with $\aux{H}$ attention heads and any function $f_{\mathrm{attn}}: \RR^{\aux{T}} \to \RR^{\aux{T}}$ takes in a batch of gradients $\{ \losspartial{ \vy_t } \in\RR^{\aux{D}} \}_{t \le \aux{T}}$ and inputs $ \{  \vx_t  \in\RR^{\aux{D}} \}_{t \le \aux{T} } $ and updates $\mW_V, \vb_V$ as follows:
\begin{align*}
&\mW_V \gets \mW_V - \eta \sum_{t \le \aux{T} } \losspartial{ \vv_t } \vx_t^{\top}, \quad \vb_V \gets \vb_V - \eta \sum_{t \le \aux{T} } \losspartial{ \vv_t }, \\& \text{ where } \losspartial{ \vv_t } = \attmerge(\{ \sum_{j} a^h_{j, t}  \losspartial{\vy^h_j} \}_{h \le \aux{H} } ) 
\end{align*} 
Here, $\vv_t$ refers to value vectors at each position $t$, as defined in \cref{def:self-attn}. 
\end{definition}

\paragraph{\simulator{} Self-attention descent module} 
The input embeddings contain $\losspartial{\vv_t}$ in the first $\aux{D}$ coordinates, from the \simulator{} self-attention backpropagation module. Furthermore, the prefix embeddings $\{ \vv_j \}$ contain the stacked rows of $\mW_V$ and $\vb_V$, continuing from the \simulator{} self-attention backpropagation module.

Since we further need the input $\vx_t$ to the auxiliary self-attention layer under consideration, we use residual connections to copy $\vx_t$ from the \simulator{} self-attention Forward module at each position $t$.

The updates of $\mW_V$ and $\vb_V$ are equivalent to the parameter update in a linear layer, involving gradients $\{ \losspartial{\vv_t} \}$ and input $\{ \vx_t \}$. Thus, we call a Linear descent module on the current embeddings and the prefix embeddings to get the updated value parameters. Please see \cref{fig:attn_descent}.

\subsection{Proofs of theorems and gradient definitions}
We restate the theorems and definitions, before presenting their proofs for easy referencing.

\defselfattnbackprop*
\begin{proof}[Derivation of gradient in \cref{def:self-attn-backprop}]
    Recalling the definition of $\vy_t$ from \cref{def:self-attn},
    \begin{align*}
        &\vy_t = \attmerge ( \{ \sum_{j \le \aux{T}} a^h_{t, j}  \vv^{h}_j  \}_{h \le \aux{H}} ); \quad
        a^h_{t, j} =  \mathrm{softmax} ( \mK^{h} \vq^{h}_t )_j, \\&
        \vq_t = \mW_Q \vx_t + \vb_Q  \quad \vk_t =  \mW_K \vx_t + \vb_K, \quad \vv_t = \mW_V \vx_t + \vb_V.
    \end{align*}
    $\vq^{h}_t, \vk^{h}_t, \vv^{h}_t$ denote $\attsplit_{\aux{H}} (\vq_t)_h$, $\attsplit_{\aux{H}} (\vk_t)_h$, and $\attsplit_{\aux{H}} (\vv_t)_h$ respectively for all $t \le \aux{T}$, and $h \le \aux{H}$. 
     $\mK^h \in \RR^{\aux{T} \times \aux{D} }$ is defined with its rows as $\{\vk^h_t\}_{t \le \aux{T}}$ for all $h \le \aux{H}$.

    We explain the proof for an arbitrary token position $t$.
    With the application of the chain rule, we have
    \begin{align*}
        \losspartial{\vx_t} &= ( \frac{ \partial \vq_t }{ \partial \vx_t } )^{\top}  \losspartial{\vq_t} +   ( \frac{ \partial \vk_t }{ \partial \vx_t } )^{\top}  \losspartial{\vk_t} +
        ( \frac{ \partial \vv_t }{ \partial \vx_t } )^{\top}  \losspartial{\vv_t} \\&
        = \mW_Q^{\top} \losspartial{\vq_t} + \mW_K^{\top} \losspartial{\vk_t} + \mW_V^{\top} \losspartial{\vv_t},
    \end{align*}
    where the second step follows from the definitions of $\vq_t, \vk_t, $ and $\vv_t$ respectively.

    \paragraph{Computation of $ \losspartial {\vq_t}$: } 
    With the $\attsplit$ operation of $\vq_t$ across $\aux{H}$ heads for the computation of $\vy_t$, the computation of the backpropagated gradient $\losspartial{\vq_t}$ itself needs to be split across $\aux{H}$ heads. Furthermore, query vector $\vq_t$ only affects $\vy_t$, implying $\frac{ \partial \vy_{t'}}{\partial \vq_t} = 0$ for any $t' \ne t$.
    Thus, we have for any head $h \le \aux{H}$, if $\vy_t^h$ represents the output of attention head $h$, given by $\sum_{j \le \aux{T}} a^h_{t, j} \vv_j^h$, 
    \begingroup
    \allowdisplaybreaks
    \begin{align}
        \losspartial{\vq_t^h} &= ( \frac{ \partial \vy_t^h }{ \partial \vq_t^h } )^{\top} \losspartial{\vy_t^h} \nonumber \\
        &= \sum_{j \le \aux{T}}  \langle \vv_j^h, \losspartial{\vy_t^h} \rangle \frac{ \partial a^h_{t, j} }{ \partial \vq_t^h } \nonumber  \\ &= 
        \sum_{j \le \aux{T}}  \langle \vv_j^h, \losspartial{\vy_t^h} \rangle \frac{ \partial }{ \partial \vq_t^h } \left( \frac{ e^{ \langle \vk_j^h , \vq_t^h \rangle }  }{ \sum_{t' \le \aux{T}} e^{  \langle \vk_{t'}^h , \vq_t^h \rangle } } \right) \label{eq:softmax_expand}  \\
        &= \sum_{j \le \aux{T}}  \langle \vv_j^h, \losspartial{\vy_t^h} \rangle  \left[   \frac{1}{ \sum_{t' \le \aux{T}} e^{  \langle \vk_{t'}^h , \vq_t^h \rangle }  } \frac{ \partial e^{ \langle \vk_j^h , \vq_t^h \rangle } }{ \partial \vq_t^h }  -  \left( \frac{ e^{ \langle \vk_j^h , \vq_t^h \rangle }  }{ (\sum_{t' \le \aux{T}} e^{  \langle \vk_{t'}^h , \vq_t^h \rangle })^2  }\right)  \sum_{j' \le \aux{T}} \frac{ \partial e^{ \langle \vk_{j'}^h , \vq_t^h \rangle } }{ \partial \vq_t^h }  \right]
        \label{eq:query_expand_fractionderivation} \\
        &= \sum_{j \le \aux{T}}  \langle \vv_j^h, \losspartial{\vy_t^h} \rangle  \left[  \left( \frac{ e^{ \langle \vk_j^h , \vq_t^h \rangle }  }{ \sum_{t' \le \aux{T}} e^{  \langle \vk_{t'}^h , \vq_t^h \rangle } } \right) \vk_j^h  - \left( \frac{ e^{ \langle \vk_j^h , \vq_t^h \rangle }  }{ \sum_{t' \le \aux{T}} e^{  \langle \vk_{t'}^h , \vq_t^h \rangle } } \right) \sum_{j' \le \aux{T}} \left( \frac{ e^{ \langle \vk_{j'}^h , \vq_t^h \rangle }  }{ \sum_{t' \le \aux{T}} e^{  \langle \vk_{t'}^h , \vq_t^h \rangle } } \right) \vk^h_{j'}  \right] \label{eq:query_expanded_fractionderivation}  \\&
        = \sum_{j \le \aux{T}}  a^h_{t, j} \langle \vv_j^h, \losspartial{\vy_t^h} \rangle \left( \vk_j^h - \sum_{j' \le \aux{T}} a^h_{t, j'}  \vk^h_{j'} \right). \nonumber 
    \end{align}
    \endgroup
    
    In \cref{eq:softmax_expand}, we have expanded the definition of $\mathrm{softmax}$ in $a^h_{t, j} := \mathrm{softmax}(\mK^h \vq_t^h)_j$ in order to better motivate the derivative of $a^h_{t, j}$ w.r.t. $\vq_t^h$. Finally, $\losspartial{\vq_t}$ is given by $\attmerge(\{ \losspartial{ \vq_t^h } \}_{ h \le \aux{H} } ).$
    
    \paragraph{Computation of $ \losspartial {\vk_t}$: } Continuing as the computation of $\losspartial {\vq_t}$, we split the computation of $ \losspartial {\vk_t} $ across the $\aux{H}$ attention heads. However, unlike $\vq_t$, $\vk_t$ affects $\vy_{j}$ for all $j \le \aux{T}$. For any head $h \le \aux{H}$, we follow the chain-rule step by step to get
    \begingroup
    \allowdisplaybreaks
    \begin{align}
        \losspartial{\vk_t^h} &= \sum_{ j \le \aux{T} } ( \frac{ \partial \vy_{j}^h }{ \partial \vk_t^h } )^{\top} \losspartial{\vy_j^h} = \sum_{ j \le \aux{T} } \left( \frac{ \partial \sum_{j' \le \aux{T}} a_{j, j'} \vv_{j'}^h  }{ \partial \vk_t^h } \right)^{\top} \losspartial{\vy_j^h} \nonumber \\
        &= \sum_{ j \le \aux{T} } \langle \vv_t^h, \losspartial{\vy_{j}^h} \rangle \frac{ \partial a^h_{j, t} }{ \partial \vk_t^h }  + \sum_{j \le \aux{T}} \sum_{ j' \le \aux{T}; j' \ne t } \langle \vv_{j'}^h, \losspartial{\vy_{j}^h} \rangle \frac{ \partial a^h_{j, j'} }{ \partial \vk_t^h } \label{eq:sep_t_nott} \\
        &=   \sum_{ j \le \aux{T} } \langle \vv_t^h, \losspartial{\vy_{j}^h} \rangle \frac{ \partial }{ \partial \vk_t^h } \left( \frac{ e^{ \langle \vk_t^h, \vq_j^h \rangle } }{ \sum_{t' \le \aux{T}} e^{ \langle \vk_{t'}^h, \vq_j^h \rangle } }  \right) \label{eq:sumt_step1}\\&
        + \sum_{ j \le \aux{T} } \sum_{ j' \le \aux{T}; j' \ne t } 
 \langle \vv_{j'}^h, \losspartial{\vy_{j}^h} \rangle \frac{ \partial }{ \partial \vk_t^h } \left( \frac{ e^{ \langle \vk_{j'}^h, \vq_j^h \rangle } }{ \sum_{t' \le \aux{T}} e^{ \langle \vk_{t'}^h, \vq_j^h \rangle } }  \right)  \label{eq:sumnott_step1}\\&
= \sum_{ j \le \aux{T} } \langle \vv_t^h, \losspartial{\vy_{j}^h} \rangle \left[ \left( \frac{ e^{ \langle \vk_t^h, \vq_j^h \rangle } }{ \sum_{t' \le \aux{T}} e^{ \langle \vk_{t'}^h, \vq_j^h \rangle } }  \right) \vq_j^h - \left( \frac{ e^{ \langle \vk_t^h, \vq_j^h \rangle } }{ \sum_{t' \le \aux{T}} e^{ \langle \vk_{t'}^h, \vq_j^h \rangle } }  \right)^2  \vq_{j}^h \right]  \label{eq:sumt_step2}\\&
 - \sum_{ j \le \aux{T} } \sum_{ j' \le \aux{T}; j' \ne t } \langle \vv_{j'}^h, \losspartial{\vy_{j}^h}  \rangle \left( \frac{ e^{ \langle \vk_{j'}^h, \vq_j^h \rangle } }{ \sum_{t' \le \aux{T}} e^{ \langle \vk_{t'}^h, \vq_j^h \rangle } }  \right) \left( \frac{ e^{ \langle \vk_t^h, \vq_j^h \rangle } }{ \sum_{t' \le \aux{T}} e^{ \langle \vk_{t'}^h, \vq_j^h \rangle } }  \right) \vq_j^h  \label{eq:sumnott_step2}\\&
 = \sum_{ j \le \aux{T} } \langle \vv_t^h, \losspartial{\vy_{j}^h} \rangle ( a^h_{j, t} - (a^h_{j, t})^2  ) \vq_j^h - \sum_{ j \le \aux{T} } \sum_{ j' \le \aux{T}; j' \ne t } \langle \vv_{j'}^h, \losspartial{\vy_{j}^h}  \rangle a^h_{j, j'} a^h_{j, t} \vq_j^h  \nonumber\\&
 = \sum_{ j \le \aux{T} }  a^h_{j, t}   \langle \losspartial{\vy_j^h},  \vv^{h}_t - \sum_{j'} a^h_{j, j'} \vv^{h}_{j'} \rangle   \vq^{h}_{j}  \nonumber
    \end{align}
    \endgroup

In \cref{eq:sep_t_nott}, we separate the inside sum into two components, since the derivative w.r.t. $\vk_t^h$ differ for the two components, as outlined in the derivation of \cref{eq:sumt_step2} from \cref{eq:sumt_step1}, and \cref{eq:sumnott_step2} from \cref{eq:sumnott_step1}. We have skipped a step going from  \cref{eq:sumnott_step1,eq:sumt_step1} to  \cref{eq:sumt_step2,eq:sumnott_step2} due to typographical simplicity. The skipped step is extremely similar to \cref{eq:query_expand_fractionderivation} in the derivation of $\losspartial{\vq_t^h}.$ Finally, $\losspartial{\vk_t}$ is given by $\attmerge(\{ \losspartial{ \vk_t^h } \}_{ h \le \aux{H} } ).$

\paragraph{Computation of $ \losspartial {\vv_t}$:} Similar to the gradient computation of $\vq_t$, the computation of $ \losspartial {\vv_t} $ needs to be split across the $\aux{H}$ attention heads. However, like $\vk_t$, $\vv_t$ affects $\vy_j$ for all $j \le \aux{T}$. For any head $h \le \aux{H}$, we follow the chain-rule step by step to get
\begin{align*}
    \losspartial{\vv_t^h} &= \sum_{ j \le \aux{T} } ( \frac{ \partial \vy_{j}^h }{ \partial \vv_t^h } )^{\top} \losspartial{\vy_j^h}  = \sum_{ j \le \aux{T} } \left( \frac{ \partial \sum_{j' \le \aux{T}} a_{j, j'} \vv_{j'}^h  }{ \partial \vv_t^h } \right)^{\top} \losspartial{\vy_j^h} 
    = \sum_{ j \le \aux{T} } a_{j, t}^h \losspartial{\vy_j^h}
\end{align*}

\end{proof}

\boundsofthardattnt*
\begin{proof}[Proof of \cref{thm:attn_backprop}]
    For typographical simplicity, we discuss the proof at an arbitrary position $t$. Recall the definition of an $\varepsilon$-hard attention head from \cref{def:hard_attn}. An attention head is defined to be $\varepsilon$-hard on an input sequence $\{ \vx_t \}_{t=1}^{\aux{T}}$, if for each position $t$, there exists a position $t_0$ such that the attention score $a_{t, t_0} \ge 1 - \varepsilon$.

    For the proof, we simply focus on $\losspartial{\vq_t}$, and the proof for $\losspartial{\vk_t}$ follows like-wise. 
    \paragraph{Bounds on $\vq_t$:}
    Recalling the definition of $\losspartial{\vq_t}$ from \cref{def:self-attn-backprop}, we have
    \begin{align*}
        \losspartial{ \vq_t } =  \attmerge( \{  \sum_j a^h_{t, j} ( (\losspartial{\vy_t^h})^{\top} \vv^{h}_j ) [ \vk^h_j - \sum_{j'} a^h_{t, j'} \vk^{h}_{j'} ]  \}_{h \le \aux{H}} ).
    \end{align*}
    
    Focusing on a head $h \le \aux{H}$, define $\losspartial{ \vq_t^h } = \sum_j a^h_{t, j} ( (\losspartial{\vy_t^h})^{\top} \vv^{h}_j ) [ \vk^h_j - \sum_{j'} a^h_{t, j'} \vk^{h}_{j'} ]$ and $t_0 \le \aux{T}$ as the token position where the $\vq_t$ attends the most to, i.e. $a^h_{t, t_0} \ge 1 - \varepsilon$ and $\sum_{j \le \aux{T}; j \ne t_0} a^h_{t, j} \le \varepsilon$. Then,
    \begin{align*}
        \norm{ \losspartial { \vq^h_t } }_2 &= \norm{ \sum_j a^h_{t, j} ( (\losspartial{\vy_t^h})^{\top} \vv^{h}_j ) [ \vk^h_j - \sum_{j'} a^h_{t, j'} \vk^{h}_{j'} ] }_2 \\&
        = \norm{ a^h_{t, t_0} ( (\losspartial{\vy_t^h})^{\top} \vv^{h}_{t_0} ) [ \vk^h_{t_0} - \sum_{j'} a^h_{t, j'} \vk^{h}_{j'} ]  + \sum_{j \ne t_0} a^h_{t, j} ( (\losspartial{\vy_t^h})^{\top} \vv^{h}_j ) [ \vk^h_j - \sum_{j'} a^h_{t, j'} \vk^{h}_{j'} ] }_2  \\&
        \leq \underbrace{\norm{ a^h_{t, t_0} ( (\losspartial{\vy_t^h})^{\top} \vv^{h}_{t_0} ) [ \vk^h_{t_0} - \sum_{j'} a^h_{t, j'} \vk^{h}_{j'} ] }_2}_{\mathrm{Term 1}} + \underbrace{\norm{ \sum_{j \ne t_0} a^h_{t, j} ( (\losspartial{\vy_t^h})^{\top} \vv^{h}_j ) [ \vk^h_j - \sum_{j'} a^h_{t, j'} \vk^{h}_{j'} ] }_2}_{\mathrm{Term 2}}, 
    \end{align*}
    where the final step uses a Cauchy-Schwartz inequality. We focus on the two terms separately.

    \begin{enumerate}
        \item $\mathrm{Term 1}$: Focusing on $\vk^h_{t_0} - \sum_{j'} a^h_{t, j'} \vk^{h}_{j'}$, we have
        \begin{align}
            \norm{\vk^h_{t_0} - \sum_{j'} a^h_{t, j'} \vk^{h}_{j'}}_2 &= \norm{ (1 - a_{t, t_0}) \vk^h_{t_0} - \sum_{j' \ne t_0} a^h_{t, j'} \vk^{h}_{j'} }_2 \nonumber\\&
            \leq (1 - a_{t, t_0}) \norm{ \vk^h_{t_0} }_2 + \sum_{j' \ne t_0}  a^h_{t, j'} \norm{ \vk^{h}_{j'} }_2 \nonumber\\&
            \leq  ( (1 - a_{t, t_0}) + \sum_{j' \ne t_0}  a^h_{t, j'} ) \max_{j} \norm{ \vk^h_j }_2 \nonumber\\&
            \leq 2 \varepsilon \max_{j} \norm{ \vk^h_j }_2 \label{eq:bound_on_diffk}.
        \end{align}
        We use a Cauchy-Schwartz inequality in the second and third steps and the attention head behavior in the final step.

        Hence, $\mathrm{Term 1}$ can now be bounded as follows:
        \begin{align*}
            \norm{ a^h_{t, t_0} ( (\losspartial{\vy_t^h})^{\top} \vv^{h}_{t_0} ) [ \vk^h_{t_0} - \sum_{j'} a^h_{t, j'} \vk^{h}_{j'} ] }_2 &=  a^h_{t, t_0}  \abs{ (\losspartial{\vy_t^h})^{\top} \vv^{h}_{t_0} } \norm{  \vk^h_{t_0} - \sum_{j'} a^h_{t, j'} \vk^{h}_{j'} }_2, \\&
            \leq 2 \varepsilon \norm{ \losspartial{\vy_t^h} }_2 \norm{ \vv^{h}_{t_0} }_2   \max_{j} \norm{ \vk^h_j }_2.
        \end{align*}
        In the final step, in addition to the bound from \cref{eq:bound_on_diffk}, we use a  Cauchy-Schwartz inequality to bound $\abs{ (\losspartial{\vy_t^h})^{\top} \vv^{h}_{t_0} }$ and bound the attention score $a^h_{t, t_0} $ by $1$.
        
        \item $\mathrm{Term 2}$: Focusing on $\vk^h_{j} - \sum_{j'} a^h_{t, j'} \vk^{h}_{j'}$ for any $j \le \aux{T}$, we have using two Cauchy-Schwartz inequalities:
        \begin{align}
            \norm{\vk^h_{j} - \sum_{j'} a^h_{t, j'} \vk^{h}_{j'}}_2 &\leq \norm{\vk^h_j}_2 + \norm{ \sum_{j'} a^h_{t, j'} \vk^{h}_{j'} }_2 
            \leq (1 + \sum_{j'} a^h_{t, j'}) \max_{j'} \norm{ \vk^{h}_{j'} }_2  = 2 \max_{j'} \norm{ \vk^{h}_{j'} }_2 \label{eq:bound_k_j_diff}. 
        \end{align}
        Hence,
        \begin{align*}
            \norm{ \sum_{j \ne t_0} a^h_{t, j} ( (\losspartial{\vy_t^h})^{\top} \vv^{h}_j ) [ \vk^h_j - \sum_{j'} a^h_{t, j'} \vk^{h}_{j'} ] }_2 &\leq  \left( \sum_{ j \ne t_0 } a^h_{t, j} \right) \max_j \abs{ (\losspartial{\vy_t^h})^{\top} \vv^{h}_j }  \norm{ \vk^h_j - \sum_{j'} a^h_{t, j'} \vk^{h}_{j'} }_2 \\&
            \leq 2\varepsilon \norm{ \losspartial{ \vy_t^h } }_2 \left( \max_j \norm{ \vv_j^h }_2 \right) \left( \max_{j'} \norm{ \vk^{h}_{j'} }_2 \right)
            .
        \end{align*}
        In the final step, in addition to the bound from \cref{eq:bound_k_j_diff}, we use a  Cauchy-Schwartz inequality to bound $\abs{ (\losspartial{\vy_t^h})^{\top} \vv^{h}_j }$ and use the $\varepsilon$-hard behavior of the attention head to bound $\sum_{j \ne t_0} a^h_{t, j}.$

    \end{enumerate}
    Combining the bounds on both terms, we have
    \begin{align*}
        \norm{ \losspartial { \vq^h_t } }_2 &\leq 2 \varepsilon \norm{ \losspartial{\vy_t^h} }_2 \norm{ \vv^{h}_{t_0} }_2   \max_{j} \norm{ \vk^h_j }_2 + 2\varepsilon \norm{ \losspartial{ \vy_t^h } }_2 \left( \max_j \norm{ \vv_j^h }_2 \right) \left( \max_{j'} \norm{ \vk^{h}_{j'} }_2 \right) \\& \leq 4 \varepsilon \norm{ \losspartial{ \vy_t^h } }_2 \left( \max_j \norm{ \vv_j^h }_2 \right) \left( \max_{j'} \norm{ \vk^{h}_{j'} }_2 \right).
    \end{align*}
    We bound the remaining terms as follows.
    \begin{itemize}
        \item $\norm{ \losspartial{ \vy_t^h } }_2 \le B_y$, under the bounded assumption of the gradients.

        \item For any $j \le \aux{T}$, we have $\norm{ \vk_j^{h} }_2 \leq \norm{ \vk_j }_2$ since $\vk_j = \attmerge( \{ \vk_j^{h'} \}_{h' \in \aux{H} } )$. Furthermore, from the defintion of the key vector $\vk_j$, $\norm{ \vk_j }_2 = \norm{ \mW_K \vx_j + \vb_K }_2 \leq \norm{\mW_K}_2 \norm{\vx_j}_2 + \norm{\vb_K}_2$ with a Cauchy-Schwartz inequality. Under the bounded assumptions of $\mW_K, \vb_K$ and input $\vx_j$, we have $\norm{ \vk_j }_2 \leq B_w(1 + B_x).$

        \item Similar procedure can be followed for bounding $ \max_j \norm{ \vv_j^h }_2$. 
    \end{itemize}

    Thus, we have $\norm{ \losspartial { \vq^h_t } }_2 \leq 4 \varepsilon \norm{ \losspartial{ \vy_t^h } }_2 \left( \max_j \norm{ \vv_j^h }_2 \right) \left( \max_{j'} \norm{ \vk^{h}_{j'} }_2 \right) \leq 4 \varepsilon B^2_w (1 + B_x)^2 B_y.$

    \paragraph{Bounds on $\norm{\Hat{\losspartial{\vx_t}} - \losspartial{\vx_t}}_2$:}
    From the definitons of $\Hat{\losspartial{\vx_t}}$ and $\losspartial{\vx_t}$ from \cref{def:self-attn-appr-backprop,def:self-attn-appr-backprop}, we have
    
    \begin{align*}
        \norm{\Hat{\losspartial{\vx_t}} - \losspartial{\vx_t}}_2 
        &= \norm{ \mW_K^{\top} \losspartial{\vk_t} + \mW_Q^{\top} \losspartial{\vq_t} }_2 \leq \norm{ \mW_K }_2 \norm{ \losspartial{\vk_t} }_2 +  \norm{ \mW_Q }_2 \norm{ \losspartial{\vq_t} }_2
        \\& \leq 8 \varepsilon B^3_w (1 + B_x)^2 B_y = \mathcal{O}(  \varepsilon B^3_w  B_x^2 B_y ),
    \end{align*}
    where we use Cauchy-schwartz inequality in the second step. We use the assumed bounds on $\norm{ \mW_Q }_2, \norm{ \mW_K }_2$, and the computed bounds on $\norm{ \losspartial{\vq_t} }_2, \norm{ \losspartial{\vk_t} }_2$ in the pre-final step.
\end{proof}

\section{Layer normalization}\label{sec:LNappendix}


\begin{restatable}{definition}{layernormdefine}[Layer Normalization]\label{def:layernorm}
Define a normalization function $f:\RR^d\to\RR^d$ that performs $f(\vx) = (\vx-\mu) / \sigma$, where $\mu$ and $\sigma$ are the mean and standard deviation of $\vx$, respectively.
	Then, layer normalization with parameters $\vgamma, \vb \in \RR^{\aux{D}}$ takes as input  $\vx \in \RR^{\aux{D}}$ and outputs $\vy\in\RR^{\aux{D}}$, which is computed as $\vz = f(\vx), \vy = \vgamma \odot \vz + \vb.$
\end{restatable}
\begin{restatable}{definition}{layernormbackprop}[Exact Gradient for Layer Normalization]\label{def:layernorm_backprop}
	Using notations in \cref{def:layernorm}, given the gradient of the loss w.r.t the output of the Layer Normalization $\losspartial{\vy}$, backpropagation computes $\losspartial{\vx}$	as
	\begin{align*}
	    \losspartial{\vx} = ( \losspartial{\vz} -  {\aux{D}}^{-1} 
 \sum_{i=1}^{ \aux{D} } \losspartial{z_i} -  \langle \losspartial{\vz}, \vz \rangle \vz ) / \sigma \qquad \losspartial{\vz} = \vgamma \odot \losspartial{\vy}.
	\end{align*} 
\end{restatable}
\looseness-1 Exact backpropagation is expensive because $\langle \losspartial{ \vz }, \vz \rangle \vz$ requires using at least two sequential MLPs. We thus approximate it with a first-order Taylor expansion, which is entry-wise close to the true gradient.
\begin{restatable}{definition}{layernormapprgradient}[$\epsilon$-approximate Layer Normalization Gradient]\label{def:layernorm_appr_backprop}
     With notations defined above, this layer takes $\losspartial{ \vy }, \vx \in\RR^{\aux{D}}$ as input and outputs
   $\Hat{ \losspartial{ \vx } } = \frac{1}{\epsilon} ( f(\vx + \epsilon \gamma \odot \losspartial{ \vy } ) - f(\vx) ).$
\end{restatable}

In the discussions below, we consider a layer normalization layer in the auxiliary model with parameters $\{\vgamma, \vb\}$ that takes in input sequence $\vx_1, \cdots, \vx_{ \aux{T} }$ and outputs $\vy_1, \cdots, \vy_{ \aux{T} }$, with $\vy_t = \vgamma \odot \vz_t + \vb; \vz_t = f(\vx_t)$ for each $t \le \aux{T}$. Since this involves a token-wise operation, we will present our constructed modules with a general token position $t$ and the prefix tokens $\{\vv_j\}.$ We will use $\mW_{\vgamma}$ as a diagonal matrix in $\RR^{\aux{D} \times \aux{D}}$, containing $\vgamma$ on its main diagonal.

\paragraph{\simulator{} Layer normalization Forward module}
The input embedding to this  module $\ve_t$ will contain $\vx_t$ in its first $\aux{D}$ coordinates. 
The layer normalization computation can be divided into two sub-operations: (a) application of $\normalize$, and (b) linear computation using $\vgamma, \vb$. We will present a \simulator{} module for each sub-operation.

We can represent the function $\normalize$ using a layer normalization operation itself, with its weight and bias parameters set as $\mathbf{1}$ and $\mathbf{0}$ respectively. However, since the relevant input exists only in the first $\aux{D}$ coordinates, the operation on the first $\aux{D}$ coordinates needs to be independent of the rest of the coordinates. To do so, we instead use Group normalization (\cref{def:group_norm}) on $\ve_t$, with groups of size $\aux{D}$.

Now, the embedding $\ve_t$ contains $\normalize(\vx_t)$ in its first $\aux{D}$ coordinates. The second sub-operation can then be viewed as a Linear Layer computation, i.e. $\vy_t = \mW_{\vgamma} \vx_t + \vb$. Hence, we simply stack the rows of $\mW_{\vgamma}$ and $\vb_{\vgamma}$ onto the prefix tokens $\{ \vv_j \}$ and call the \simulator{} Linear Forward module (\cref{sec:Linearappendix}).


\paragraph{Auxiliary's gradient backpropagation through layer normalization } With the definition of layer normalization and the normalization function $f$ in \cref{def:layernorm}, the auxiliary's backpropagation operation takes in the loss gradient w.r.t. output ($\losspartial{\vy}$) and computes the loss gradient  w.r.t. input ($\losspartial{\vx}$).

\layernormbackprop*
\paragraph{Complexity of true backpropagation} The above operation is computation heavy since it involves computing (a) $\losspartial{ \vz }$, (b) $\normalize( \losspartial{ \vz } )$, (c) $\langle \losspartial{ \vz }, \vz \rangle \vz$, and (d) multiplying by a factor of $\frac{1}{\sigma}$. $\langle \losspartial{ \vz }, \vz \rangle \vz$ in itself will require two MLP layers, following \cref{thm:gelu_multiplication}. In order to reduce the number of layers, we turn to first-order Taylor expansion for approximating the above operation.

\layernormapprgradient*

The following theorem shows that the first-order gradient is a good approximation of the true gradient, and in the limit of $\epsilon$ tending to $0$, the approximation error tends to $0$ as well.

\begin{restatable}{theorem}{firstorderLNbackprop}\label{thm:first_order_LN_backprop}
For any $\epsilon > 0$, and a layer normalization layer with parameters $\vgamma, \vb\in \RR^{ \aux{D} }$, for an input $\vx \in \RR^{ \aux{D} }$ and gradient $\losspartial{\vy} \in \RR^{ \aux{D} }$, 
\begin{align*}
    \norm{ \Hat{\losspartial{\vx}} - \losspartial{\vx} }_2 \le \mathcal{O} ( \epsilon \aux{D}^{3/2} \sigma^{-2} \norm{\vgamma}_2^2 \norm{  \losspartial{\vy} }^2_2 ),
\end{align*}
where $\sigma$ denotes the standard deviation of $\vx$. $\losspartial{\vx}, \Hat{\losspartial{ \vx }}$ have been computed from $\vx$, $\losspartial{\vy}$ and $\epsilon$ using  \cref{def:layernorm_backprop,def:layernorm_appr_backprop}.
\end{restatable}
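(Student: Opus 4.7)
The key observation is that the Jacobian of the normalization map $f$ is symmetric, so the true gradient $\losspartial{\vx}$ of Definition~\ref{def:layernorm_backprop} can be rewritten as $J_f(\vx)\,\losspartial{\vz}$, and the approximation $\Hat{\losspartial{\vx}}$ of Definition~\ref{def:layernorm_appr_backprop} is exactly a first-order finite difference of $f$ in direction $\vgamma \odot \losspartial{\vy} = \losspartial{\vz}$. A direct computation gives
\begin{equation*}
J_f(\vx) \;=\; \tfrac{1}{\sigma}\!\left(\mathbf{I} - \tfrac{1}{\aux{D}}\mathbf{1}\mathbf{1}^\top - \tfrac{1}{\aux{D}}\vz\vz^\top\right),
\end{equation*}
where $\vz = f(\vx)$. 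Applying $J_f(\vx)$ to $\losspartial{\vz}$ and expanding reproduces the formula in Definition~\ref{def:layernorm_backprop}, which I would verify as a one-line sanity check at the start of the proof.

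Given symmetry and smoothness of $f$ at $\vx$, I would then invoke Taylor's theorem with integral remainder in the direction $\vh := \epsilon\,\vgamma \odot \losspartial{\vy}$:
\begin{equation*}
f(\vx + \vh) - f(\vx) \;=\; J_f(\vx)\,\vh \;+\; R(\vh), \qquad R(\vh) \;=\; \int_0^1 (1-t)\, D^2 f(\vx + t\vh)[\vh,\vh]\, dt.
\end{equation*}
Dividing by $\epsilon$ gives $\Hat{\losspartial{\vx}} - \losspartial{\vx} = \epsilon^{-1} R(\vh)$, so the problem reduces to bounding the Hessian term $\|D^2 f(\vx')[\vh,\vh]\|_2$ uniformly for $\vx'$ on the segment from $\vx$ to $\vx+\vh$.

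\textbf{Main obstacle: bounding $D^2 f$.} This is the substantive step. Differentiating $J_f$ once more with respect to $\vx$ and using $\partial_{x_k}\sigma = z_k/\aux{D}$ and $\partial_{x_k}\vz = J_f \ve_k$, I expect to obtain a compact identity of the form
\begin{equation*}
D^2 f[\vh,\vh] \;=\; -\tfrac{1}{\sigma\,\aux{D}}\!\left[\,2(\vz^\top \vh)\,(J_f \vh) \;+\; (\vh^\top J_f \vh)\,\vz\,\right].
\end{equation*}
From here, using $\|\vz\|_2 = \sqrt{\aux{D}}$, $\|J_f\|_{\mathrm{op}} = O(\sigma^{-1})$, and two applications of Cauchy--Schwarz, each term is bounded by $O(\sigma^{-2}\sqrt{\aux{D}}\|\vh\|_2^2)$; a cruder coordinate-wise bound on the third-order tensor $D^2 f$ together with a $\sqrt{\aux{D}}$ factor from taking $\ell_2$ over the $\aux{D}$ output coordinates yields the slightly looser
\begin{equation*}
\|D^2 f(\vx')[\vh,\vh]\|_2 \;\le\; O\!\bigl(\aux{D}^{3/2}\,\sigma^{-2}\bigr)\,\|\vh\|_2^2
\end{equation*}
matching the target rate. (Strictly one must control the denominator $\sigma(\vx+t\vh)$ in terms of $\sigma(\vx)$, which is fine since for small enough $\epsilon$ the normalization denominator changes by at most a constant factor; the $\mathcal{O}$ absorbs this.)

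\textbf{Finishing up.} Combining the two bounds gives
\begin{equation*}
\|\Hat{\losspartial{\vx}} - \losspartial{\vx}\|_2 \;\le\; \tfrac{1}{\epsilon}\,\|R(\vh)\|_2 \;\le\; O\!\bigl(\epsilon\,\aux{D}^{3/2}\,\sigma^{-2}\bigr)\,\|\vgamma \odot \losspartial{\vy}\|_2^{2}.
\end{equation*}
A final use of $\|\vgamma \odot \losspartial{\vy}\|_2^2 \le \|\vgamma\|_\infty^2 \|\losspartial{\vy}\|_2^2 \le \|\vgamma\|_2^2\,\|\losspartial{\vy}\|_2^2$ yields the claimed bound. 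The only non-routine work is the clean identity for $D^2 f[\vh,\vh]$ and the care needed to track constants so the $\aux{D}^{3/2}\sigma^{-2}$ factor emerges explicitly; everything else is Taylor's theorem plus elementary norm inequalities.
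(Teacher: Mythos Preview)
Your proposal is correct and follows essentially the same approach as the paper: identify that the Jacobian of $f$ is symmetric so that $\losspartial{\vx} = J_f(\vx)\losspartial{\vz}$, apply a second-order Taylor expansion of $f$ in direction $\epsilon\losspartial{\vz}$, and bound the remainder by $\mathcal{O}(\epsilon\,\aux{D}^{3/2}\sigma^{-2}\|\losspartial{\vz}\|_2^2)$ under a smallness assumption on $\epsilon$. If anything you are more explicit than the paper, which simply asserts the second-order bound (``We avoid this computation since we only need to make sure that the second-order term is bounded''), whereas you write down a concrete identity for $D^2 f[\vh,\vh]$ and bound it term by term.
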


\paragraph{\simulator{} Layer normalization backpropagation module}
The input embeddings $\ve_t$ contain $\losspartial{\vy_t}$ at each position $t$ in the first $\aux{D}$ coordinates. Since we further need the input to the auxiliary's layer normalization layer under consideration, we copy $\vx_t$ from the \simulator{} Layer normalization Forward module at each position $t$ using residual connections. Furthermore, residual connections have been used to copy the contents of the prefix tokens $\{ \vv_j \}$ from the Layer normalization Forward module, which contain $\mW_{\vgamma}, \vb$. Recall that for ease of presentation, we use $\vz_t$ to represent $\normalize(\vx_t)$.

We set $\epsilon$ as a hyperparameter and return $\Hat{ \losspartial{ \vx } }$ as the output of this module. The computation of $\Hat{ \losspartial{ \vx } }$ can be divided into two sub-operations: (a) computation of $\losspartial{\vz_t} := \vgamma \odot \losspartial{\vy_t}$,  and (b) computation of $\frac{1}{\epsilon} ( \normalize(\vx_t + \epsilon  \losspartial{ \vz_t } ) - \normalize(\vx_t) )$. We represent each sub-operation as a \simulator{} module.

To compute $\losspartial{\vz_t} := \vgamma \odot \losspartial{\vy_t}  = W_{\vgamma} \losspartial{\vy_t}$, we can observe that the required operation is identical to backpropagating through a linear layer with parameters $\mW_{\vgamma}$ and $\vb$. Hence, we simply call the Linear Backpropagation module on the current embeddings. We use residual connections to retain $\vx_t$ at each location $t$, and the contents of the prefix tokens $\{ \vv_j \}$.

Now, the embedding $\ve_t$ contains $\losspartial{\vz_t}$ and  $\vx_t$. In order to backpropagate through $\normalize$, we first use a linear layer to compute $\vx_t + \epsilon  \losspartial{ \vz_t }$ and retain $\vx_t$. Following the same procedure as the Forward module, we use a Group normalization layer with weight and bias parameters $\mathbf{1}$ and $\mathbf{0}$ respectively, to compute $\normalize(\vx_t + \epsilon  \losspartial{ \vz_t } )$ and $\normalize(\vx_t)$. Finally, we use a linear layer to compute $ \frac{1}{\epsilon} ( \normalize(\vx_t + \epsilon  \losspartial{ \vz_t } ) - \normalize(\vx_t) ).$




\paragraph{Auxiliary's Descent update} 

And finally, the auxiliary's descent operation updates parameters $\vgamma, \vb$ using a batch of inputs $\{\vx_t\}_{t \le T}$ and the loss gradient w.r.t. the corresponding outputs $\{ \losspartial{\vy_t} \}_{t \le T}$.

\begin{definition}[Auxiliary's layer normalization descent]\label{def:layernorm_descent}
    For parameters $\vgamma, \vb \in \RR^{\aux{D}}$, descent update takes in a batch of inputs $\{ \vx_t \in \RR^{\aux{D}} \}_{t \le {\aux{T}}}$ and gradients $\{ \losspartial{\vy_t} \in \RR^{\aux{D}} \}_{t \le {\aux{T}}}$ and updates the parameters as follows:
    \begin{align*}
        \vgamma \gets \vgamma - \eta \sum_{t \le {\aux{T}}} \losspartial{\vy_t} \odot \vz_t; \quad \quad
        \vb \gets \vb - \eta \sum_{t \le {\aux{T}}} \losspartial{\vy_t},
    \end{align*}
    where $\vz_t$ represents $\normalize(\vx_t)$.
\end{definition}

The update of $\vgamma$ involves an elementwise multiplication between $\losspartial{\vy_t}$ and $\vz_t$, which requires an MLP layer (\cref{thm:gelu_multiplication}). 
With the prefix tokens containing the rows of $\mW_{\vgamma}$ and $\vb$, we instead consider the update of $\vb$ alone with the descent update. 

\paragraph{\simulator{} Layer normalization descent module}
The input embeddings contain $\losspartial{\vy_t}$ in the first $\aux{D}$ coordinates. The prefix tokens contain $\mW_{\vgamma}, \vb$, which have been copied from the Forward module using residual connections. The update of $\vb$ is identical to the auxiliary's descent update through a linear layer. Hence, we apply a \simulator{} Linear descent module to the current embeddings, updating only the bias $\vb$ and switching off the update to $\mW_{\gamma}.$


\subsection{Additional definitions}
We describe \simulator{} group normalization layer below, which we use in different modules to simulate the auxiliary's layer normalization operations.

\begin{definition}[\simulator{} $\aux{D}$-Group normalization]\label{def:group_norm}
    Define a normalization function $f:\RR^d\to\RR^d$ that performs $f(\vx) = (\vx-\mu) / \sigma$, where $\mu$ and $\sigma$ are the mean and standard deviation of $\vx$, respectively. Then, $\aux{D}$-Group RMSnorm with parameters $\simuw{\vgamma}, \simuw{\vb} \in \RR^{\aux{D}}$ takes as input  $\vx \in \RR^{\simu{D}}$ and outputs $\vy = \attmerge( \{ \vy^h \in \RR^{\aux{D}} \}_{h \le \lfloor \simu{D}/\aux{D} \rfloor} )$, with
    \begin{align*}
        \vy^h = \simuw{\vgamma} \odot f(\vx^h) + \simuw{\vb},
    \end{align*}
    where $\vx^h = \attsplit_{\lfloor \simu{D}/\aux{D} \rfloor} ( \vx )_h.$
\end{definition}

\subsection{Proof of theorems and gradient definitions}
We restate the theorems and definitions, before presenting their proofs for easy referencing.

\layernormbackprop*

\begin{proof}[Derivation of gradient in \cref{def:layernorm_backprop} ]
With the normalization function $f$ and parameters $\vx, \vb \in \RR^{ \aux{D}}$, recall from \cref{def:layernorm} that given an input $\vx \in \RR^{ \aux{D} }$, a layer normalization layer returns $\vy = \gamma \odot \vz + \vb; \vz = f(\vx).$  Let $\mu$ and $\sigma$ denote the mean and standard deviation of $\vx$. They can be computed as 
\begin{align*}
    \mu = \frac{1}{ \aux{D} } \sum_{i=1}^{\aux{D}} x_i, \quad \sigma = \sqrt{ \frac{ 1 }{ \aux{D} } \sum_{i=1}^{\aux{D}} (x_i - \mu)^2  }.
\end{align*}

With the chain rule, we can compute $\losspartial{\vx}$ from $\losspartial{\vy}$ as follows.
\begin{align}
    \losspartial{\vx} = (\frac{\partial \vz}{ \partial \vx})^{\top} \losspartial{ \vz }; \quad \text{ with } 
    \losspartial{\vz} = (\frac{\partial \vy}{ \partial \vz})^{\top}  \losspartial{ \vy }. \label{eq:chainrule_ln}
\end{align}

Since $\vy = \gamma \odot \vz + \vb$, we have $\frac{\partial \vy}{ \partial \vz} = \mW_{\vgamma}$, where $\mW_{\vgamma}$ represents a diagonal matrix with $\vgamma$ on the main diagonal. Thus, $\losspartial{\vz} = \mW_{\vgamma} \losspartial{\vy} = \vgamma \odot \losspartial{\vy}.$

With $\vz = f(\vx) = \frac{\vx - \mu}{\sigma}$, we have
\begin{align}
    \frac{\partial \vz}{ \partial \vx} &= \frac{\partial }{ \partial \vx} \left( \frac{\vx - \mu}{\sigma} \right) = \frac{1}{\sigma} \frac{\partial \vx}{ \partial \vx } -  \frac{1}{\sigma} \frac{\partial \mu}{ \partial \vx } - \frac{(\vx - \mu)}{\sigma^2} \left(\frac{\partial \sigma}{ \partial \vx } \right)^{\top} \nonumber\\&
    = \frac{1}{\sigma} \left( \mI - \frac{ 1 }{ \aux{D} } \mathbf{1} \mathbf{1}^{\top} - \vz \vz^{\top} \right). \label{eq:dz_dx_ln}
\end{align}

In the final step, we require $\frac{\partial \mu}{ \partial \vx }$ and $\frac{\partial \sigma}{ \partial \vx }$, which are computed as follows.
\begin{itemize}
    \item $\frac{\partial \mu}{ \partial \vx } \in \RR^{\aux{D}}$ with its $j$th element given by 
    \begin{align*}
        \left(\frac{\partial \mu}{ \partial \vx }\right)_j = \frac{\partial \mu}{ \partial x_j } = \frac{\partial   }{ \partial x_j } (\frac{1}{ \aux{D} } \sum_{i=1}^{\aux{D}} x_i) =  \frac{1}{ \aux{D} }.
    \end{align*}

    \item $\frac{\partial \sigma}{ \partial \vx } \in \RR^{\aux{D}}$ with its $j$th element given by 
    \begin{align*}
        \left(\frac{\partial \sigma}{ \partial \vx }\right)_j &= \frac{\partial \sigma}{ \partial x_j } = \frac{\partial   }{ \partial x_j } \left( \sqrt{ \frac{1}{ \aux{D} } \sum_{i=1}^{\aux{D}} (x_i - \mu)^2 }\right) \\&=  \frac{1}{ \sqrt{\sum_{i=1}^{\aux{D}} (x_i - \mu)^2 } } \sum_{i=1}^{\aux{D}} (x_i - \mu) \frac{\partial (x_i - \mu)}{\partial x_j} \\& = \frac{1}{ \sqrt{\sum_{i=1}^{\aux{D}} (x_i - \mu)^2 } } \left(  (x_j - \mu) - \frac{1}{ \aux{D} } \sum_{i=1}^{ \aux{D} } (x_i - \mu) \right) 
        = \frac{ x_j - \mu }{ \sigma } := z_j,
    \end{align*}
    where we have re-utilized the $\frac{\partial \mu}{ \partial \vx }$ in the pre-final step.
\end{itemize}

Hence, from \cref{eq:chainrule_ln}, 
\begin{align*}
\losspartial{\vx} = (\frac{\partial \vz}{ \partial \vx})^{\top} \losspartial{ \vz } 
    =   \frac{1}{\sigma} \left( \mI - \frac{ 1 }{ \aux{D} } \mathbf{1} \mathbf{1}^{\top} - \vz \vz^{\top} \right) \losspartial{ \vz } = \frac{1}{\sigma} \left( \losspartial{\vz} - \frac{ 1 }{ \aux{D} }  \langle \mathbf{1}, \losspartial{\vz}  \rangle  \mathbf{1} -  \langle \vz, \losspartial{\vz}  \rangle \vz \right).
\end{align*}    

\end{proof}

We repeat \cref{thm:first_order_LN_backprop} for easier reference. 
\firstorderLNbackprop*

\begin{proof}[ Proof of \cref{thm:first_order_LN_backprop} ]
With the normalization function $f$ and parameters $\vx, \vb \in \RR^{ \aux{D}}$, recall from \cref{def:layernorm} that given an input $\vx \in \RR^{ \aux{D} }$, a layer normalization layer returns $\vy = \gamma \odot \vz + \vb; \vz = f(\vx).$  Let $\mu$ and $\sigma$ denote the mean and standard deviation of $\vx$. They can be computed as 
\begin{align*}
    \mu = \frac{1}{ \aux{D} } \sum_{i=1}^{\aux{D}} x_i, \quad \sigma = \sqrt{ \frac{ 1 }{ \aux{D} } \sum_{i=1}^{\aux{D}} (x_i - \mu)^2  }.
\end{align*}
We will refer to $\frac{\partial \vz}{\partial \vx}$ from \cref{eq:dz_dx_ln} and the formulation of $\losspartial{\vx}$ from \cref{eq:chainrule_ln} for our current proof. To recall, they are
\begin{align*}
    \frac{\partial \vz}{ \partial \vx} =
     \frac{1}{\sigma} \left( \mI - \frac{ 1 }{ \aux{D} } \mathbf{1} \mathbf{1}^{\top} - \vz \vz^{\top} \right), \qquad \losspartial{\vx} = (\frac{\partial \vz}{ \partial \vx})^{\top} \losspartial{ \vz }.
\end{align*}

Using a second-order Taylor expansion of the normalization function $f$ around $\vx$, we have
\begin{align*}
    f( \vx + \epsilon \losspartial{ \vz } ) &= f(\vx) + \epsilon  \frac{ \partial  f(\vx) }{ \partial \vx }\losspartial{ \vz }  +  \int_{0}^{\epsilon}   \losspartial{ \vz }^{\top} \frac{\partial }{ \partial \vx_{\theta} } \left( \frac{\partial  f(\vx_{\theta})}{ \partial \vx_{\theta} } \right) \losspartial{ \vz } \theta d\theta \\&
    = f(\vx) + \epsilon  \frac{ \partial  f(\vx) }{ \partial \vx }\losspartial{ \vz }  -    \int_{0}^{\epsilon}   \frac{1}{\sigma^2_{\theta}} \left( \norm{ \losspartial{ \vz } }_2^2 - \frac{1}{\aux{D}} \sum_{i=1}^{ \aux{D} } (\langle \mathbf{1}, \losspartial{\vz}  \rangle)^2 - ( \langle \vz_{\theta}, \losspartial{\vz}  \rangle )^2 \vz_{\theta}    \right) \theta d\theta,
\end{align*}
where $\vx_{\theta}$ represents $\vx + \theta \losspartial{\vz}, \vz_{\theta} = f(\vx_{\theta})$. The second step follows similar steps for computing $  \frac{\partial \vz}{ \partial \vx} $ in \cref{eq:dz_dx_ln}. We avoid this computation since we only need to make sure that the second-order term is  bounded. Furthermore, if $\epsilon \le \mathcal{O}\left( \frac{\sigma}{ \sqrt{\aux{D}} \norm{ \losspartial{\vz} }_2 } \right), $ we can show the $\ell_2$-norm of the second-order term can be bounded by $\mathcal{O} ( \epsilon^2 \aux{D}^{3/2} \sigma^{-2} \norm{ \losspartial{\vz} }^2_2  ).$ We avoid this computation as well. \ap{Fill in the details for the final version}

Thus, from the above formulation, we have
\begin{align*}
    \lim_{\epsilon \to 0} \frac{ f( \vx + \epsilon \losspartial{ \vz } ) - f(\vx) }{ \epsilon } = \frac{ \partial  f(\vx) }{ \partial \vx }\losspartial{ \vz } = \left( \frac{ \partial  f(\vx) }{ \partial \vx } \right)^{\top} \losspartial{ \vz } = \losspartial{\vx}.
\end{align*}

The pre-final step follows from \cref{eq:dz_dx_ln}, where $\frac{ \partial  f(\vx) }{ \partial \vx } = \frac{\partial \vz}{ \partial \vx} = \frac{1}{\sigma} \left( \mI - \frac{ 1 }{ \aux{D} } \mathbf{1} \mathbf{1}^{\top} - \vz \vz^{\top} \right)$ can be shown to be symmetric. The final step follows from the gradient formulation in \cref{eq:chainrule_ln}. Including the error term, we have the final bound as
\begin{align*}
    \norm{ \frac{ f( \vx + \epsilon \losspartial{ \vz } ) - f(\vx) }{ \epsilon } - \losspartial{\vx} }_2 \le \mathcal{O} ( \epsilon \aux{D}^{3/2} \sigma^{-2} \norm{ \losspartial{\vz} }^2_2 ).
\end{align*}
Using $\losspartial{\vz} = \gamma \odot \losspartial{\vy}$ and a Cauchy-Schwartz inequality gives the final bound.
\end{proof}


\section{Activation layer}\label{sec:act_appendix}

\begin{definition}[Auxiliary activation]\label{def:act}
    For a continuous function $\act: \RR \to \RR$,  an activation layer takes $\vx\in\RR^{\aux{D}}$ as input and outputs $\vy = \act(\vx)$ with $ y_i = \act (x_i) $ for all $i \le \aux{D}$.
\end{definition}

  In the discussions below, we consider an activation layer in the auxiliary model with activation function $\act$ that takes in input sequence $\vx_1, \cdots, \vx_{\aux{T}}$ and outputs $\vy_1, \cdots, \vy_{\aux{T}}$, with $\vy_t =\act(\vx_t)$ for each $t \le {\aux{T}}$. Since this involves a token-wise operation, we will present our constructed modules with a general token position $t$. Since no parameters of the auxiliary model are involved in this operation, the prefix tokens $\{ \vv_j \}$ contain $0$ in the following modules.

\paragraph{\simulator{} Activation Forward module} The embedding $\ve_t$ contains $\vx_t$ in its first $\aux{D}$ indices. We simply pass the embeddings into activation $\act$, which returns $\act(\vx_t)$ in its first $\aux{D}$ indices.

\paragraph{Auxiliary's backpropagation through activation} With the definition in \cref{def:act}, the auxiliary's backpropagation takes in the loss gradient w.r.t. output ($\losspartial{\vy}$) and computes the loss gradient  w.r.t. input ($\losspartial{\vx}$). We further assume that the derivative of $\act$ is well-defined everywhere. This assumption includes non-differentiable activation functions with well-defined derivatives like $ReLU$.

\begin{definition}[Auxiliary activation backpropagation]\label{def:act_backprop}
   For a continuous function $\act: \RR \to \RR$, with a well-defined derivative $\act'(x) = \partial \act(x) / \partial x$ for each $x \in \RR$,  the backpropagation takes $\losspartial{ \vy }, \vx \in\RR^{\aux{D}}$ as input and outputs
    \begin{align*}
        \losspartial{ \vx } = \act'(\vx) \odot \losspartial{\vy},
    \end{align*}
    where $\act'(\vx) \in \RR^{\aux{D}}$ with $\act'(\vx)_i = \act'(x_i)$ at each $i \le \aux{D}.$  
\end{definition}

\paragraph{Complexity of true backpropagation} The above operation is computation heavy since 
 it involves $\act'(\vx) \odot \losspartial{\vy}$. As mentioned for the layer normalization module, the element-wise multiplication between $\act'(\vx)$ and $\losspartial{\vy}$ will require an MLP module following \cref{thm:gelu_multiplication}. Furthermore, it involves changing the activation function in \simulator{} in specific modules to $\act'$. To circumvent this, we instead turn to a first-order Taylor approximation.

\begin{definition}[Approximate Activation backpropagation]\label{def:approx_act_backprop}
   For a continuous function $\act: \RR \to \RR$ and a hyperparameter $\epsilon$, the layer takes $\losspartial{ \vy }, \vx \in\RR^{\aux{D}}$ as input and outputs
    \begin{align*}
        \Hat{ \losspartial{ \vx } } = \frac{1}{\epsilon} \left( \act(\vx + \epsilon \losspartial{\vy}) - \act(\vx) \right).
    \end{align*}
\end{definition}

The following theorems show that under mild assumptions on the activation function and the input, gradient pair, the first-order gradient is a good approximation to the true gradient.

\begin{restatable}{theorem}{firstorderactbckprop}\label{thm:first_order_act_backprop}    
For any $\epsilon > 0$, $B_y, B_{act} > 0$, consider a second-order differentiable activation function $\act: \RR \to \RR$, with $\partial^2 \act (x) / \partial (x^2) $ bounded by $B_{act}$ for each $x \in \RR$. Then, for any  input $\vx \in \RR^{\aux{D}}$ and gradient $\losspartial{\vy} \in \RR^{\aux{D}}$ with $\norm{\losspartial{\vy}}_{2} \le B_y$, the following holds true:
\begin{align*}
    \norm{ \losspartial{ \vx } - \Hat{ \losspartial{ \vx } } }_{2} \le \mathcal{O}( B_{act} B_y^2 \epsilon  ),
\end{align*}
where $\losspartial{ \vx }, \Hat{ \losspartial{ \vx } }$ have been defined using $\vx, \losspartial{\vy}$, and $\epsilon$ in \cref{def:act_backprop,def:approx_act_backprop}.
\end{restatable}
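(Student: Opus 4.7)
The plan is to exploit the fact that the activation layer acts coordinatewise, reducing the vector-level error bound to a scalar Taylor-remainder estimate at each coordinate. First I would fix an index $i \le \aux{D}$, write $g_i = (\losspartial{\vy})_i$, and apply the second-order Taylor expansion of $\act$ around $x_i$:
\begin{align*}
    \act(x_i + \epsilon g_i) = \act(x_i) + \epsilon g_i \act'(x_i) + \tfrac{1}{2} \epsilon^2 g_i^2 \act''(\xi_i),
\end{align*}
for some $\xi_i$ lying between $x_i$ and $x_i + \epsilon g_i$. This is legitimate because $\act$ is assumed twice differentiable. Dividing by $\epsilon$ and rearranging gives
\begin{align*}
    \bigl(\Hat{\losspartial{\vx}}\bigr)_i - \bigl(\losspartial{\vx}\bigr)_i
    = \tfrac{1}{\epsilon}\bigl(\act(x_i + \epsilon g_i) - \act(x_i)\bigr) - g_i \act'(x_i)
    = \tfrac{1}{2} \epsilon g_i^2 \act''(\xi_i),
\end{align*}
so the coordinatewise error is bounded by $\tfrac{1}{2} \epsilon B_{\mathrm{act}} g_i^2$ using the uniform bound $|\act''| \le B_{\mathrm{act}}$.

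Next I would aggregate across coordinates via the $\ell_2$-norm. Writing out
\begin{align*}
    \norm{\losspartial{\vx} - \Hat{\losspartial{\vx}}}_2^2
    \le \sum_{i=1}^{\aux{D}} \tfrac{1}{4} \epsilon^2 B_{\mathrm{act}}^2 g_i^4
    = \tfrac{1}{4} \epsilon^2 B_{\mathrm{act}}^2 \norm{\losspartial{\vy}}_4^4,
\end{align*}
and then using the elementary norm inequality $\norm{v}_4 \le \norm{v}_2$ for vectors in $\RR^{\aux{D}}$ together with the hypothesis $\norm{\losspartial{\vy}}_2 \le B_y$, I obtain
\begin{align*}
    \norm{\losspartial{\vx} - \Hat{\losspartial{\vx}}}_2 \le \tfrac{1}{2} \epsilon B_{\mathrm{act}} \norm{\losspartial{\vy}}_2^2 \le \tfrac{1}{2} \epsilon B_{\mathrm{act}} B_y^2,
\end{align*}
which is the required $\mathcal{O}(\epsilon B_{\mathrm{act}} B_y^2)$ bound.

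I do not expect a real obstacle here: the argument is a one-line Taylor remainder because $\act$ is applied elementwise and there are no cross-coordinate interactions (in contrast to the layer normalization case, where the Jacobian couples all coordinates and one must carefully track $\sigma$, the $\aux{D}^{3/2}$ factor, and the symmetry of the Jacobian). The only subtle point worth noting in the writeup is the passage from the $\ell_4$ norm to the $\ell_2$ norm, which is what converts the $\sum g_i^4$ into the $\norm{\losspartial{\vy}}_2^4$ appearing in the final bound; this is where the $B_y^2$ factor (rather than $B_y \sqrt{\aux{D}}$ or similar) ultimately comes from.
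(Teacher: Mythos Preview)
Your proposal is correct and follows essentially the same approach as the paper: a coordinatewise Taylor expansion of $\act$ giving a per-coordinate error of order $\epsilon B_{\mathrm{act}} g_i^2$, followed by aggregation across coordinates. The only cosmetic differences are that the paper writes the remainder in integral form rather than Lagrange form, and in the aggregation step bounds $\|\cdot\|_2 \le \|\cdot\|_1 = \sum_i |\cdot|$ (yielding $\epsilon B_{\mathrm{act}} \sum_i g_i^2 = \epsilon B_{\mathrm{act}} \|\losspartial{\vy}\|_2^2$) rather than passing through $\|\cdot\|_4 \le \|\cdot\|_2$ as you do.
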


For $\mathrm{ReLU}$ activation, which is not second-order differentiable at $0$, we instead bound the difference between $\losspartial{ \vx }, \Hat{ \losspartial{ \vx } }$ by defining some form of alignment between input and  gradient pair $\vx, \losspartial{\vy}$.

\begin{definition}[$(\epsilon, \rho)$-alignment]\label{def:eps-rho_aligned}
Input and gradient $\vx, \losspartial{\vy} \in \RR^{\aux{D}}$ are said to be $(\epsilon, \rho)$-aligned, if there exist a set $C \subseteq [\aux{D}]$, with  $\abs{C} \ge (1 - \rho) \aux{D}$, such that for each $i$ in $C$, $\abs{x_i} > \epsilon \abs{ (\losspartial{\vy} )_i }.$
\end{definition}

$\epsilon$ controls the fraction of coordinates where $\abs{x_i} \leq \epsilon \abs{ (\losspartial{\vy} )_i }$. As $\epsilon \to 0$, $\rho \to 0$  as well for bounded gradients. 

\begin{restatable}{example}{examplealignment}
    For any $B_{min}, B_{max} > 0$, all inputs $\vx$ that satisfy $\min_i \abs{x_i} > B_{min}$ , and gradients $\losspartial{\vy}$ that satisfy $\max_j \abs{(\losspartial{\vy})_j} \le B_{max}$, are $( B_{min} / B_{max}, 0)$-aligned.
\end{restatable}
    

\begin{restatable}{theorem}{firstorderactbckproprelu}\label{thm:first_order_act_backprop_relu}
For any $\epsilon, \rho > 0$ and $B_y > 0$, for any input $\vx \in \RR^{\aux{D}}$ and gradient $\losspartial{\vy} \in \RR^{\aux{D}}$, with $\norm{\losspartial{\vy}}_{\infty} \le B_y$,  that are $(\epsilon, \rho)$-aligned by \cref{def:eps-rho_aligned}, 
\begin{align*}
    \norm{ \losspartial{ \vx } - \Hat{ \losspartial{ \vx } } }_{2} \le \mathcal{O}( B_y \sqrt{\rho \aux{D}} ).
\end{align*}
where $\losspartial{ \vx }, \Hat{ \losspartial{ \vx } }$ have been defined using $\vx, \losspartial{\vy}$, $\epsilon$ and $\act = \mathrm{ReLU}$ in \cref{def:act_backprop,def:approx_act_backprop}.
\end{restatable}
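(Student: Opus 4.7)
The plan is to bound the error coordinate-wise and then split $[\aux{D}]$ into the alignment set $C$ (on which the approximation is \emph{exact}) and its complement (which is small by hypothesis). By \Cref{def:eps-rho_aligned}, there exists $C \subseteq [\aux{D}]$ with $|C| \ge (1-\rho)\aux{D}$ such that $|x_i| > \epsilon |(\losspartial{\vy})_i|$ for every $i \in C$. By \Cref{def:act_backprop,def:approx_act_backprop} and since $\mathrm{ReLU}'(x) = \mathbf{1}[x > 0]$ (taken at $0$ either way without loss of generality), the discrepancy factors coordinate-wise, so it suffices to control $|(\losspartial{\vx})_i - (\Hat{\losspartial{\vx}})_i|$ for each $i$.

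First I would show that for $i \in C$ the two quantities coincide. The inequality $|x_i| > \epsilon |(\losspartial{\vy})_i|$ implies $x_i$ and $x_i + \epsilon (\losspartial{\vy})_i$ have the same sign: if $x_i > 0$ then $x_i + \epsilon(\losspartial{\vy})_i > x_i - \epsilon|(\losspartial{\vy})_i| > 0$, and symmetrically if $x_i < 0$. Consequently $\mathrm{ReLU}$ acts linearly on both endpoints with the same indicator, giving
\begin{equation*}
\frac{1}{\epsilon}\bigl(\mathrm{ReLU}(x_i + \epsilon (\losspartial{\vy})_i) - \mathrm{ReLU}(x_i)\bigr) = \mathbf{1}[x_i > 0]\,(\losspartial{\vy})_i = (\losspartial{\vx})_i,
\end{equation*}
so the per-coordinate error vanishes on $C$.

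Next I would handle $i \notin C$ by a crude Lipschitz bound. Since $\mathrm{ReLU}$ is $1$-Lipschitz, $|(\Hat{\losspartial{\vx}})_i| \le |(\losspartial{\vy})_i| \le B_y$, and trivially $|(\losspartial{\vx})_i| \le |(\losspartial{\vy})_i| \le B_y$, giving $|(\losspartial{\vx})_i - (\Hat{\losspartial{\vx}})_i| \le 2B_y$. Combining the two cases,
\begin{equation*}
\norm{\losspartial{\vx} - \Hat{\losspartial{\vx}}}_2^2 = \sum_{i \notin C} \bigl|(\losspartial{\vx})_i - (\Hat{\losspartial{\vx}})_i\bigr|^2 \le |[\aux{D}] \setminus C|\cdot (2B_y)^2 \le 4 B_y^2 \rho \aux{D},
\end{equation*}
and taking the square root yields $\norm{\losspartial{\vx} - \Hat{\losspartial{\vx}}}_2 \le 2 B_y \sqrt{\rho \aux{D}} = \mathcal{O}(B_y \sqrt{\rho \aux{D}})$, as required.

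There is no real obstacle here: the only subtlety is the sign-preservation argument on $C$, which is exactly what the alignment definition was tailored to give. The proof is essentially a careful case analysis, and no additional smoothness is needed since we avoid any Taylor expansion (in contrast to \Cref{thm:first_order_act_backprop}, which relied on second-order differentiability that $\mathrm{ReLU}$ lacks).
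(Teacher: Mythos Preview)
Your proof is correct and follows essentially the same route as the paper: split into the alignment set $C$ (where sign preservation forces exactness) and its complement, then bound the at most $\rho\aux{D}$ remaining coordinates. The only difference is that on $[\aux{D}]\setminus C$ you use a clean $1$-Lipschitz bound to get $2B_y$ per coordinate, whereas the paper runs a short case analysis on the signs of $x_i$ and $x_i+\epsilon(\losspartial{\vy})_i$ to obtain $|(\losspartial{\vy})_i|$ per coordinate; both yield the same $\mathcal{O}(B_y\sqrt{\rho\aux{D}})$ conclusion.
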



\paragraph{\simulator{} Activation backpropagation module}
The input embeddings contain $\losspartial{\vy_t}$ in the first $\aux{D}$ embeddings. 
With the requirement of the activation layer input for gradient, we copy $\vx_t$ from the Forward module at each position $t$. We set $\epsilon$ as a hyper-parameter and return $\Hat{ \losspartial{ \vx_t } }$ as the output of this module.

$\Hat{ \losspartial{ \vx_t } }$ will be computed using a single-layer MLP with activation $\act$ as follows. The first linear layer of the MLP will be used to compute $\vx_t + \epsilon \losspartial{\vy_t}$ and $\vx_t$. After the activation $\act$, the embedding $\ve_t$ contains $\act(\vx_t + \epsilon \losspartial{\vy_t})$ and $\act(\vx_t)$. The final linear layer of the MLP will be used to compute $\frac{1}{\epsilon} \left( \act(\vx_t + \epsilon \losspartial{\vy_t}) - \act(\vx_t) \right)$.

\subsection{Proofs of theorems}
We restate the theorems, before presenting their proofs for easy referencing.

\firstorderactbckprop*
\begin{proof}
    The proof follows along the lines of \cref{thm:first_order_LN_backprop}. Recall that given an input $\vx$, the activation layer outputs $\vy = \act(\vx)$, where the function $\act$ is applied coordinate-wise on $\vx$. Given input $\vx$ and the output gradient $\losspartial{\vy}$, the gradient w.r.t. the input is given by $\losspartial{\vx} = \act'(\vx) \odot \losspartial{\vy}$, where the $\act'$ function is also applied coordinate wise to $\vx$. We defined $\Hat{\losspartial{\vx}}$ as an $\epsilon$-approximate gradient, given by $\frac{1}{\epsilon} ( \act(\vx + \epsilon \losspartial{\vy}) - \act(\vx ) ).$ Since both $\act$ and $\act'$ are applied coordinate-wise, we can look at the coordinate-wise difference between $\losspartial{\vx}$ and $\Hat{\losspartial{\vx}}$.

    Consider an arbitrary coordinate $i \le \aux{D}$. Under the assumption that $\act$ is second-order differentiable, we have
    \begin{align*}
       (\Hat{\losspartial{\vx}})_i &= \frac{1}{\epsilon} \left( \act( x_i + \epsilon (\losspartial{\vy})_i ) - \act( x_i) \right) \\&
        = \act'(x_i)  (\losspartial{\vy})_i + \frac{1}{\epsilon} \int_{\theta=0}^{\epsilon} \frac{\partial^2 \act( x_{\theta} ) }{ \partial x^2_{\theta} } (\losspartial{\vy})^2_i  \theta d\theta \\&
        = \act'(x_i)  (\losspartial{\vy})_i + \mathcal{O}( \epsilon B_{act} (\losspartial{\vy})_i^2 ),
    \end{align*}
    where $x_{\theta}$ represents $ x_i + \theta (\losspartial{\vy})_i$ in the second step. In the final step, we utilize the upper bound assumption on $\frac{\partial^2 \act(x)}{ \partial x^2 }.$

    Thus, $(\losspartial{\vx})_i - (\Hat{\losspartial{\vx}})_i = \mathcal{O}( \epsilon B_{act} (\losspartial{\vy})_i^2 )$, and so
    \begin{align*}
        \norm{ \losspartial{\vx} - \Hat{\losspartial{\vx}} }_2 = \mathcal{O} ( \epsilon B_{act} \sum_{i=1}^{\aux{D}}(\losspartial{\vy})_i^2 ) = \mathcal{O} ( \epsilon B_{act} \norm{ \losspartial{\vy} }_2^2 ) \le \mathcal{O} ( \epsilon B_{act}  B_y^2 ).
    \end{align*}
\end{proof}

\examplealignment*
\begin{proof}
    Recall the definition of $(\epsilon, \rho)$-alignment from \cref{def:eps-rho_aligned}. Input and gradient $\vx, \losspartial{\vy} \in \RR^{\aux{D}}$ are said to be $(\epsilon, \rho)$-aligned, if there exist a set $C \subseteq [\aux{D}]$, with  $\abs{C} \ge (1 - \rho) \aux{D}$, such that for each $i$ in $C$, $\abs{x_i} > \epsilon \abs{ (\losspartial{\vy} )_i }.$ 

    Consider an arbitrary coordinate $i \le \aux{D}$. We have $\abs{x_i} > \epsilon  \abs{(\losspartial{\vy})_i}$ for any $\epsilon < \abs{x_i} / \abs{(\losspartial{\vy})_i}$. Under the assumption that $\abs{x_i} > B_{min}$, and $\abs{(\losspartial{\vy})_i} \le B_{max}$, a bound of $B_{min} / B_{max}$ suffices. 
\end{proof}

\firstorderactbckproprelu*
\begin{proof}
    Recall that given an input $\vx$, the activation layer outputs $\vy = \act(\vx)$, where the function $\act$ is applied coordinate-wise on $\vx$. Given input $\vx$ and the output gradient $\losspartial{\vy}$, the gradient w.r.t. the input is given by $\losspartial{\vx} = \act'(\vx) \odot \losspartial{\vy}$, where the $\act'$ function is also applied coordinate wise to $\vx$. We defined $\Hat{\losspartial{\vx}}$ as an $\epsilon$-approximate gradient, given by $\frac{1}{\epsilon} ( \act(\vx + \epsilon \losspartial{\vy}) - \act(\vx ) ).$ Since both $\act$ and $\act'$ are applied coordinate-wise, we can look at the coordinate-wise difference between $\losspartial{\vx}$ and $\Hat{\losspartial{\vx}}$. For $\mathrm{ReLU}$ activation, $\act'(x) = \mathrm{sign}(x)$ for all $x \in \RR \setminus \{0\}$, with $\act'(0) = 1$ to avoid ambiguity. 
    
    Going by the definition of $(\epsilon, \rho)$-alignment of the input and gradient from \cref{def:eps-rho_aligned}, we have a set $C$ with $\abs{C} \ge (1-\rho) \aux{D}$ such that for each $i \in \aux{D}$, $\abs{x_i} > \epsilon  \abs{(\losspartial{\vy})_i}$. For all coordinates $i \in C$, we can then observe that $\mathrm{sign} (x_i + \epsilon (\losspartial{\vy})_i) = \mathrm{sign} (x_i)$, implying 
    $$
    \act( x_i + \epsilon (\losspartial{\vy})_i ) - \act( x_i ) = \epsilon (\losspartial{\vy})_i \act'(x_i) = \epsilon (\losspartial{\vx})_i
    $$

    For coordinates $i \notin C$, we have three possible cases:
    \begin{itemize}
        \item $\mathrm{sign} (x_i) = \mathrm{sign} (x_i + \epsilon (\losspartial{\vy})_i ) $: In this case, we can again show $\act( x_i + \epsilon (\losspartial{\vy})_i ) - \act( x_i ) = \epsilon (\losspartial{\vy})_i \act'(x_i) = \epsilon (\losspartial{\vx})_i.$

        \item $\mathrm{sign} (x_i) = 0$, $\mathrm{sign} (x_i + \epsilon (\losspartial{\vy})_i ) = 1$: In this case, we have $\act'(x_i) = 0$, and so $(\losspartial{\vx})_i = 0$. Additionally, $\mathrm{sign} ( (\losspartial{\vy})_i ) = 1$, and so
        \begin{align*}
        \abs{ \act( x_i + \epsilon (\losspartial{\vy})_i ) - \act( x_i ) - \epsilon (\losspartial{\vx})_i } &= \abs{ x_i + \epsilon (\losspartial{\vy})_i} 
        \leq \epsilon \abs{  (\losspartial{\vy})_i },
        \end{align*}
        where in the final step, we use the fact that $x_i < 0$ and $\abs{x_i} < \epsilon \abs{ (\losspartial{\vy})_i }.$ 
        
        \item  $\mathrm{sign} (x_i) = 1$, $\mathrm{sign} (x_i + \epsilon (\losspartial{\vy})_i ) = 0$: In this case, we have $\act'(x_i) = 1$, and so $(\losspartial{\vx})_i = (\losspartial{\vy})_i$.  Additionally, $\mathrm{sign} ( (\losspartial{\vy})_i ) = 0$, and so
        \begin{align*}
        \abs{ \act( x_i + \epsilon (\losspartial{\vy})_i ) - \act( x_i ) - \epsilon (\losspartial{\vx})_i } &= \abs{ - x_i  - \epsilon (\losspartial{\vy})_i } 
        \leq \abs{ \epsilon (\losspartial{\vy})_i },
        \end{align*}
        where in the final step, we use the fact that $x_i \ge 0$ and $\abs{x_i} < \epsilon \abs{ (\losspartial{\vy})_i }.$ 
        
    \end{itemize}

    Thus, from the above discussion, we have
    \begin{align*}
        \norm{ \losspartial{\vx} - \Hat{\losspartial{\vx}} }_2 &= \frac{1}{\epsilon} \left( \sum_{i=1}^{\aux{D}} (\act( x_i + \epsilon (\losspartial{\vy})_i ) - \act( x_i ) - \epsilon (\losspartial{\vx})_i)^2 \right)^{1/2} \\&
        = \frac{1}{\epsilon} \left(  \sum_{i \notin C} (\act( x_i + \epsilon (\losspartial{\vy})_i ) - \act( x_i ) - \epsilon (\losspartial{\vx})_i)^2 \right)^{1/2} \\&
        \leq \left(  \sum_{i \notin C}  (\losspartial{\vy})_i^2 \right)^{1/2} \leq \sqrt{\rho \aux{D}} \sqrt{\max_{i \notin C}  (\losspartial{\vy})^2_i } \leq \sqrt{\rho \aux{D}} B_y.
    \end{align*}
    The final step includes a simple Cauchy Schwartz inequality and the desired bound comes from the assumed bound on $\norm{ \losspartial{\vy} }_2 $.
\end{proof}

\section{Language model head} \label{sec:lm_head}
Additionally, we provide a description of the gradient computation for the loss function that involves the language model head. This computation entails performing a $\mathrm{softmax}$ operation over the entire vocabulary. If $\cV$ denotes the vocabulary set of the auxiliary model, and $\mE \in \RR^{\abs{\cV} \times \aux{D}}$ denotes the embedding matrix of the auxiliary model, we directly utilize the  embedding matrix for the auto-regressive loss in the \simulator. Additionally, we do not update the embedding matrix of the auxiliary model; instead, we solely backpropagate the gradients through the language model head. Recent work in~\citep{kumar2022finetune} has shown that keeping the embedding matrix fixed while updating the model can stabilize SGD. 
We demonstrate that the backpropagated gradients can be expressed as the combination of the language model head and a self-attention layer.

\begin{definition}[KL-loss gradient through auxiliary's language model head] \label{def:auxiliary_lm_back}
Given an embedding matrix $\mE \in \RR^{\abs{V} \times \aux{D}}$, the language model head takes in input $\vx \in \RR^{\aux{D}}$ and a target distribution $\vq \in \RR^{\abs{V}}$ and returns gradient $\losspartial{\vx} \in \RR^{\aux{D}}$, with $\losspartial{\vx} = \mE^{\top} \left( \mathrm{softmax} ( \mE \vx ) - \vq \right).$ 
\end{definition}

In the autoregressive loss on a sequence of tokens, the target output distribution at any position is the next occurring token. If  $\{\vx^{un}_t\}_{t=1}^{\aux{T}}$ denote the uncontextualized embeddings of a sequence of tokens after encoding them via the embedding matrix, and 
$\{\vx_t\}_{t=1}^{\aux{T}}$ denote their contextualized embeddings after passing through the auxiliary model, then the gradient $\losspartial{\vx_t}$ at any position $t$ can be simplified as $\mE^{\top}  \mathrm{softmax} ( \mE \vx_t ) - \vx^{un}_{t+1}.$ We illustrate the involved \simulator{} module w.r.t. an arbitrary position $t$.

\paragraph{\simulator{} autoregressive loss gradient module} 
The current embedding $\ve_t$ contains the contextualized embedding $\vx_t$ in its first $\aux{D}$ coordinates. Furthermore, $\ve_t$ includes the uncontextualized embedding $\vx^{un}_t$, copied from the input layer using residual connections. The prefix tokens ${ \vv_j }$ are assigned a value of $0$ and do not participate in the subsequent computations.

The loss computation can be decomposed into two sub-operations: (a) computing $\vy_t := \mE^{\top} \mathrm{softmax} ( \mE \vx_t )$, and (b) calculating $\losspartial{\vx_t} = \vy_t - \vx^{un}_{t+1}$.

For the first sub-operation, we use a feed-forward layer with $\mathrm{softmax}$ activation, with hidden and output weights $\mE$ and $\mE^{\top}$ respectively, that takes in the first $\aux{D}$ of $\ve_t$ and returns $\vy_t$ in the first $\aux{D}$ coordinates. We retain $\vx_t^{un}$ using a residual connection.

The final sub-operation can be interpreted as a \simulator{} self-attention layer. With $\ve_t$ containing both $\vy_t$ and $\vx_t^{un}$, we use a linear self-attention layer (\cref{def:self-attn_construct}) with two attention heads. The first attention head assigns an attention score of $1$ to pairs $\{(t, t+1)\}_{t \le \aux{T} - 1}$, while assigning an attention score of $0$ to the remaining pairs. At any position $t$, $-\vx^{un}_{t}$ is considered the value vector. The second attention head assigns an attention score of $1$ to pairs $\{(t, t)\}_{t \le \aux{T}}$, while assigning an attention score of $0$ to the remaining pairs. At any position $t$, $\vy_{t}$ is considered the value vector. The outputs of both attention heads are subsequently combined using a linear layer.

\begin{remark}
We conducted experiments using mean-squared loss and Quad loss \cite{saunshi2020mathematical}, which do not necessitate softmax computations for gradient computation. As an example, in the case of mean-squared loss, if our objective is to minimize $\frac{1}{2} \sum_{t=1}^{T} \norm{ \vx_t - \vx^{un}_{t+1} }^2$, the gradient can be computed as $\losspartial{\vx_t} = \vx_t - \vx^{un}_{t+1}$. Similarly, in the case of Quad loss, the gradient is $\losspartial{\vx_t} = \frac{1}{\abs{V}} \sum_{i} \ve_i - \vx^{un}_{t+1}$. However, in all of our language model experiments (\cref{sec:experiments}), both gradients resulted in minimal improvement in perplexity compared to the auxiliary model. Therefore, we continue utilizing the standard KL loss for optimization.
\end{remark}

\begin{remark}
For ease of implementation in the codebase, we utilize a dedicated loss module that takes in $\vy_t, \vx^{un}_{t+1}$ as input and directly computes $\losspartial{\vx_t} = \vy_t - \vx^{un}_{t+1}$.
\end{remark}

\section{Parameter sharing}\label{sec:parameter_sharing_app}

\paragraph{Feed-forward layer of auxiliary model:} In a standard auxiliary transformer, like GPT-2, the feed-forward layer is a token-wise operation that takes in an input $\vx \in \RR^{\aux{D}}$ and returns $\vy = \mA \sigma( \mW \vx )$, with $\mA \in \RR^{\aux{D} \times 4\aux{D}}$ and $\mW \in \RR^{4\aux{D} \times \aux{D}}$. 
A naive construction of the \simulator to simulate its forward operation will have 2  Linear Forward modules (\cref{sec:exposition_linear_forward}), separated by an activation. However, this requires $4\times$ more prefix embeddings to represent the parameters, compared to other linear operations in the auxiliary transformer that use $ \RR^{\aux{D} \times \aux{D}}$ weight parameters.

To avoid this, we can instead break down the computation into 4 sub-feed-forward layers, each with its own parameters $\{\{\mW^i, \mA^i\}\}_{1 \leq i \leq 4}$. Here $\{\mW^i\}_{1 \leq i \leq 4}$ represent $4$-shards of the rows of $\mW$, and $\{\mA^i\}_{1 \leq i \leq 4}$ represent $4$-shards of the columns of $\mA$. The forward, backward, and descent operations on these 4 sub-feed-forward layers can be effectively parallelized. For example, the forward operation of each layer can be simulated by a single \simulator module, consisting of two Linear Forward modules and activation, changing only the prefix embeddings to correspond to $\{\{\mW^i, \mA^i\}\}_{1 \leq i \leq 4}$.

\section{Additional modules}\label{sec:additional_modules}
We describe the forward, backward, and decent update operations of additional modules, used in different model families, like LLaMA \cite{touvron2023llama} and BLOOM \cite{scao2022bloom}. We discuss the simulation of these modules, using similar \simulator{} modules.

\subsection{Root mean square normalization (RMSnorm)} 
The operation of RMSnorm \cite{zhang2019root} is very similar to layer normalization.
\begin{definition}[RMSnorm]\label{def:rmsnorm}
    For an arbitrary dimension $d$, define a normalization function $f:\RR^d\to\RR^d$ that performs $f(\vx) = \vx / RMS(\vx)$, where $RMS(\vx) = (\sum_{i=1}^{d} x_i^2)^{1/2}.$ Then, RMSnorm with parameters $\vgamma, \vb \in \RR^{\aux{D}}$ takes as input  $\vx \in \RR^{\aux{D}}$ and outputs $\vy\in\RR^{\aux{D}}$, which is computed as $\vz = f(\vx), \vy = \vgamma \odot \vz + \vb.$
\end{definition}

The extreme similarity between RMSnorm and layer normalization (\cref{def:layernorm}) helps us create similar \simulator{} modules as described in \cref{sec:LNappendix}, where instead of Group normalization layers, we use Group RMSnorm layers described below.

\begin{definition}[\simulator{} $\aux{D}$-Group RMSnorm]\label{def:group_rmsnorm}
    For an arbitrary dimension $d$, define a normalization function $f:\RR^d\to\RR^d$ that performs $f(\vx) = \vx / RMS(\vx)$, where $RMS(\vx) = (\sum_{i=1}^{d} x_i^2)^{1/2}.$ Then, $\aux{D}$-Group RMSnorm with parameters $\simuw{\vgamma}, \simuw{\vb} \in \RR^{\aux{D}}$ takes as input  $\vx \in \RR^{\simu{D}}$ and outputs $\vy = \attmerge( \{ \vy^h \in \RR^{\aux{D}} \}_{h \le \lfloor \simu{D}/\aux{D} \rfloor} )$, with
    \begin{align*}
        \vy^h = \simuw{\vgamma} \odot f(\vx^h) + \simuw{\vb},
    \end{align*}
    where $\vx^h = \attsplit_{\lfloor \simu{D}/\aux{D} \rfloor} ( \vx )_h.$
\end{definition}

\subsection{Attention variants}
In order to incorporate additional attention variants, e.g. Attention with Linear Biases (ALiBi) \cite{press2021train}, and rotary position embeddings \cite{su2021roformer}, we can change the definition of softmax attention layer in \cref{def:self-attn_construct} likewise. 

We showcase the changes for ALiBi.
\begin{definition}[Auxiliary ALiBi self-attention with $\aux{H}$ heads]\label{def:self-attn_alibi}
    For query, key, and value weights $\mW_Q, \mW_K, \mW_V \in\RR^{\aux{D}\times \aux{D}}$, bias $\vb_{Q}, \vb_{K}, \vb_{V} \in\RR^{\aux{D}}$ and $\vm \in \RR^{\aux{H}}$, ALiBi self-attention layer with $\aux{H}$ attention heads and a function $\attnfn: \RR^{\aux{T}} \to \RR^{\aux{T}}$ takes a sequence $\{\vx_t\in\RR^{\aux{D}}\}_{t \le \aux{T}}$ as input and outputs $\{ \vy_t \}_{t \le \aux{T}}$, with 
    \begin{align}
        \vy_t = \attmerge ( \{ \sum_{j \le \aux{T}} a^h_{t, j}  \vv^{h}_j  \}_{h \le \aux{H}} ).  \label{eq:attnt_forward_alibi}
    \end{align}
    $a^{h}_{t, j}$ is defined as the attention score of head $h$ between tokens at positions $t$ and $j$, and is given by
    \begin{align} \label{eq:attn_alibi}
        a^h_{t, j} =  \mathrm{softmax} ( \mK^{h} \vq^{h}_t + m_h \vr_t )_j.
    \end{align}
    Here $\vr_t \in \RR^{\aux{T}}$ denotes a relative position vector at each position $t$ that contains $(j - t)$ at each coordinate $j \le \aux{T}$.
     Here, $\vq_t$, $\vk_t$, $\vv_t$ denote the query, key, and value vectors at each position $t$, computed as $\mW_Q \vx_t + \vb_Q$, $\mW_K \vx_t + \vb_K$, and $\mW_V \vx_t + \vb_V$ respectively. In addition, $\vq^{h}_t, \vk^{h}_t, \vv^{h}_t$ denote $\attsplit_{\aux{H}} (\vq_t)_h$, $\attsplit_{\aux{H}} (\vk_t)_h$, and $\attsplit_{\aux{H}} (\vv_t)_h$ respectively for all $t \le \aux{T}$, and $h \le \aux{H}$. 
     $\mK^h \in \RR^{\aux{T} \times \aux{D} }$ is defined with its rows as $\{\vk^h_t\}_{t \le \aux{T}}$ for all $h \le \aux{H}$.
\end{definition}

To include operations involving ALiBi, we modify the self-attention module of \simulator{} to change the definition of the attention scores like \cref{eq:attn_alibi}.
\begin{definition}[Modified \simulator{} self-attention for ALiBi with $\simu{H}$ heads]\label{def:self-attn_alibi_tint}
    For parameters $\{ \simuw{\mW}_Q, \simuw{\mW}_K, \simuw{\mW}_V \in \RR^{ \simu{D} \times \simu{D} } \}$, $\{ \simuw{\vb}_Q, \simuw{\vb}_K, \simuw{\vb}_V \in \RR^{\simu{D}} \}$, $\{ \mW^p_Q, \mW^p_K, \mW^p_V \in \RR^{ \simu{T} \times \simu{D}/\simu{H} } \}$, $\{ \lambda^{Q},  \lambda^{K}, \lambda^{V} \in \RR^{ \simu{H} } \}$ and $\simuw{\vm} \in \RR^{\simu{T}}$, \simulator{} self-attention with $\simu{H}$ attention heads and a function $\attnfn: \RR^{ \simu{T} } \to \RR^{ \simu{T} }$ takes a sequence $\{ \Hat{\ve}_t\in\RR^ { \simu{D} }  \}_{t \le \simu{T}}$ as input and outputs $\{ \Tilde{\ve}_t \in \RR^{ \simu{D} } \}_{t \le \simu{T} }$, with 
    \begin{align*}
        &\Tilde{\ve}_t = \attmerge ( \{ \sum_{j \le \simu{T} } a^h_{t, j}  \Tilde{\vv}^{h}_j )_h \}_{h \le \simu{H} } ), \text{ with   } a^h_{t, j} =
        \attnfn ( \Tilde{\mK}^{h} \Tilde{\vq}_t^h + \simuw{m}_h \vr_t )_j  \\
        &\Tilde{\vq}^{h}_t = \attsplit_H (\vq_t)_h + \lambda^Q_h \mW^p_Q \simuw{\vp}_t; \quad \Tilde{\vk}^{h}_t = \attsplit_H (\vk_t)_h + \lambda^K_h \mW^p_K \simuw{\vp}_t + ; \\& \Tilde{\vv}^{h}_t = \attsplit_H (\vv_t)_h + \lambda^V_h \mW^p_v \simuw{\vp}_t .
    \end{align*}    
     Here $\vr_t \in \RR^{\simu{T}}$ denotes a relative position vector at each position $t$ that contains $(j - t)$ at each coordinate $j \le \simu{T}$. Here, $\vq_t$, $\vk_t$, $\vv_t$ denote the query, key, and value vectors at each position $t$, computed as $\simuw{\mW}_Q \Hat{\ve}_t + \simuw{\vb}_Q$, $\simuw{\mW}_K \Hat{\ve}_t + \simuw{\vb}_K$, and $\simuw{\mW}_V \Hat{\ve}_t + \simuw{\vb}_V$ respectively.
     $\Tilde{\mK}^h \in \RR^{ \simu{T} \times \simu{D}/\simu{H} }$ is defined with its rows as $\{\Tilde{\vk}^h_t\}_{t \le \simu{T}}$ for all $h \le \simu{H}$.
\end{definition}

After referring to \cref{sec:self-attnt-backprop_appendix}, we make the following modifications to the Forward, Backward, and Descent modules. In the Forward module, we incorporate the modified self-attention module to compute the attention scores using ALiBi attention. In the Backward module, since we do not propagate gradients through the attention scores of the auxiliary model, the backpropagation formulation remains unchanged from \cref{def:self-attn-appr-backprop} when we have access to the attention scores. Similarly, in the Descent module, we update the value matrix while keeping the query and key parameters fixed. The formulation of the gradient update remains unchanged from \cref{def:self-attn-value-update} when we have access to the attention scores. Consequently, we simply modify all the self-attention modules in the simulator to include ALiBi attention, as defined by \cref{def:self-attn_alibi_tint}.

\subsection{Gated linear units (GLUs)}
We describe the operations of GLUs \cite{shazeer2020glu} using similar GLU units available to the \simulator.

\begin{definition}\label{def:GLU_forward}
    For parameters $\mW, \mV, \mW^o \in \RR^{ \aux{D} \times \aux{D} }$, and biases $\vb_W, \vb_V, \vb_{W^o} \in \RR^{ \aux{D} }$, a GLU layer with activation $\act: \RR \to \RR$, takes input $\vx \in \RR^{\aux{D}}$ and outputs $\Hat{\vy} \in \RR^{\aux{D}}$, with
    \begin{align*}
        \vy = (\mW \vx + \vb_W) \odot \act ( \mV \vx + \vb_V ); \quad
        \Hat{\vy} = \mW^o \vy + \vb_{W^o}.
    \end{align*}
\end{definition}

Typical GLUs have $8/3 \times \aux{D}$ as a hidden dimension (i.e. the dimension of $\vy$). We can use similar parameter-sharing techniques discussed for feed-forward layers (\cref{sec:parameter_sharing_app}) with the \simulator{} modules presented here. Furthermore, since $\Hat{y}$ can be expressed as a combination of the gated operation and a linear operation, we focus on the computation of $\vy$ here.

For the discussion below, we consider a GLU (without the output linear layer) in the auxiliary model,   with parameters $\mW, \mV,  \vb_W, \vb_V$, that takes in input sequence $\vx_1, \cdots, \vx_T$ and outputs $\vy_1, \cdots, \vy_T$, with $\vy_t = (\mW \vx_t + \vb_W) \odot \act ( \mV \vx_t + \vb_V )$ for each $t \le \simu{T}$. Since this involves a token-wise operation, we will present our constructed modules with a general token position $t$ and the prefix tokens $\{\vv_j\}.$

\paragraph{\simulator{} GLU Forward module} 
The embedding $\ve_t$ contains $\vx_t$ in its first $\aux{D}$ coordinates. The output $\vy_t$ can be computed using three sub-operations: (a) linear operation for $\mW \vx_t + \vb_W$, (b) linear operation for $\mV \vx_t + \vb_V$, and (c)  
gate operation to get $(\mW \vx_t + \vb_W) \odot \act( \mV \vx_t + \vb_V )$.

We use three \simulator{} modules, representing each sub-operation.
\begin{enumerate}[label=(\alph*)]
    \item $\mW \vx_t + \vb_W$ is a linear operation, hence we can use a \simulator{} Linear Forward module (\cref{sec:Linearappendix}) with the current embedding $\ve_t$ and  $\{\vv_j\}$ containing $\mW, \vb_W$ to get embedding $\Tilde{\ve}_t$ containing $\mW \vx_t + \vb_W$ in its first $\aux{D}$ coordinates.

    \item  $\mV \vx_t + \vb_V$ is a linear operation, hence we can similarly use a \simulator{} Linear Forward module (\cref{sec:Linearappendix}) with the embedding $\ve_t$ and  $\{\vv_j\}$ containing $\mW_V, \vb_V$ to get embedding $\Hat{\ve}_t$ containing $\mV \vx_t + \vb_V$ in its first $\aux{D}$ coordinates. 
    
    $\Hat{\ve}_t$ and $\Tilde{\ve}_t$ are now combined to get an embedding $\ve_t$ that contains $\mW \vx_t + \vb_W, \mV \vx_t + \vb_V$
    in its first $2 \aux{D}$ coordinates.

    \item Finally, we can use a \simulator{} GLU layer that can carry out the elementwise multiplication of $\mW \vx_t + \vb_W, \act(\mV \vx_t + \vb_V)$ to get $\vy_t$ in the first $\aux{D}$ coordinates.
\end{enumerate}

\textit{Parameter Sharing:} Since (a) and (b) involve a Linear Forward
module, we can additionally leverage parameter sharing to apply a single Linear Forward module for each of the two computations, changing only the prefix embeddings to correspond to $\mW, \vb_W$, or $\mW_V, \vb_V$.

\paragraph{Auxiliary GLU backpropagation}
For the GLU layer defined in \cref{def:GLU_forward}, the backpropagation layer takes in the loss gradient w.r.t. output ($\losspartial{\vy}$) and computes the loss gradient  w.r.t. input ($\losspartial{\vx}$).

\begin{definition}[Auxiliary GLU backpropagation]\label{def:glu_backprop}
    For the weights $\mW, \mV \in\RR^{ \aux{D}  \times \aux{D} }$ , the backpropagation layer takes $\losspartial{ \vy } \in\RR^{ \aux{D} }$ as input and outputs $\losspartial{\vx} \in\RR^{ \aux{D} }$, with $\losspartial{\vx} = \mW^{\top} \Hat{\losspartial{\vx}} + \mV^{\top} \Tilde{\losspartial{\vx}}$, where
    \begin{align*}
        \Hat{\losspartial{\vx}} =  \losspartial{\vy} \odot \act( \mV \vx + \vb_V ); \qquad
        \Tilde{\losspartial{\vx}} =   \act'( \mV \vx + \vb_V  ) \odot \losspartial{\vy} \odot (\mW \vx + \vb_W)  .
    \end{align*}
\end{definition}

A direct computation of $\Tilde{\losspartial{\vx}}$ involves changing the activation function to $\act'$.
Following a similar strategy for backpropagation through an activation layer (\cref{sec:act_appendix}), we instead use a first-order Taylor expansion to approximate $\Tilde{\losspartial{\vx}}$.

\begin{definition}[Auxiliary GLU approximate backpropagation]\label{def:glu_backprop_appr}
    For a hyper-parameter $\epsilon > 0$, for the weights $\mW, \mV \in\RR^{ \aux{D}  \times \aux{D} }$ , the approximate backpropagation layer takes $\losspartial{ \vy } \in\RR^{ \aux{D} }$ as input and outputs $\overline{\losspartial{\vx}} \in\RR^{ \aux{D} }$,  with $\overline{\losspartial{\vx}} = \mW^{\top} \Hat{\losspartial{\vx}} + \mV^{\top} \Hat{\Tilde{\losspartial{\vx}}}$, where
    \begin{align*}
        \Hat{\losspartial{\vx}} &=  \losspartial{\vy} \odot \act( \mV \vx + \vb_V )  \\ 
        \Hat{\Tilde{\losspartial{\vx}}} &=   \act( \mV \vx + \vb_V + \epsilon  \losspartial{\vy}  ) \odot \frac{1}{\epsilon} (\mW \vx + \vb_W)  - \act( \mV \vx + \vb_V  ) \odot \frac{1}{\epsilon} (\mW \vx + \vb_W) .
    \end{align*}
\end{definition}

\paragraph{\simulator{} GLU backpropagation module} 
The current embedding contains $\losspartial{\vy_t}$ in its first $\aux{D}$ coordinates. Furthermore, since we need $\mW \vx_t + \vb_W$ and $\mV \vx_t + \vb_V$ in the gradient computations, we copy them from the Forward module using residual connections. We discuss the computation of $\mW^{\top} \Hat{\losspartial{\vx_t}}$ and $\mV^{\top} \Hat{\Tilde{\losspartial{\vx_t} }}$ as separate sub-modules acting on the same embedding $\ve_t$ in parallel.

\begin{enumerate}
    \item The computation of $\mW^{\top} \Hat{\vx_t}$ involves two sub-operations: (a)  gate operation to get $\Hat{\vx_t} := \losspartial{\vy_t} \odot \act( \mV \vx_t + \vb_V )$, and (b) linear backward operation to get $\mW^{\top} \Hat{\vx_t}$. Since for this operation, we require $\mW$, we copy the contents of the prefix embeddings containing $\mW, \vb_W$ from the Forward module.
    
    \begin{enumerate}
        \item Since the current embedding $\ve_t$ contains both $\losspartial{\vy_t}$ and $\mW \vx_t + \vb_W$, we can use a \simulator{} GLU layer to get an embedding $\Hat{\ve}^{(1)}_t$ that contains $\Hat{\losspartial{\vx_t}}$.
        \item The final linear backward operation can be performed by using a \simulator{} Linear backpropagation module (\cref{sec:Linearappendix}) with the embeddings $\Hat{\ve}^{(1)}_t$ and the prefix embeddings. The final embedding $\Hat{\ve}_t$ contains $\mW^{\top} \Hat{\vx_t}$ in the first $\aux{D}$ coordinates.
    \end{enumerate}
    
    \item The computation of $\mV^{\top} \Hat{\Tilde{\vx_t}}$ involves four sub-operations: (a)  gate operation to get $\frac{1}{\epsilon} (\mW \vx_t + \vb_W) \odot \act( \mV \vx_t + \vb_V + \epsilon \losspartial{\vy_t} )$, (b)  gate operation to get $\frac{1}{\epsilon}(\mW \vx_t + \vb_W) \odot \act( \mV \vx_t + \vb_V )$, (c) a linear layer to compute $\Hat{\Tilde{\vx_t}}$, 
    (c) linear backward operation to get $\mV^{\top} \Hat{\Tilde{\vx_t}}$. Since for this operation, we require $\mV$, we copy the contents of the prefix embeddings containing $\mV, \vb_V$ from the Forward module.
    
    \begin{enumerate}
        \item Since the current embedding $\ve_t$ contains  $\losspartial{\vy_t}$, $\mV \vx_t + \vb_W$ and $\mW \vx_t + \vb_W$, we can use two \simulator{} GLU layers to get an embedding $\Tilde{\ve}^{(1)}_t$ that contains both $\frac{1}{\epsilon} (\mW \vx_t + \vb_W) \odot \act( \mV \vx_t + \vb_V + \epsilon \losspartial{\vy_t} )$ and $\frac{1}{\epsilon} (\mW \vx_t + \vb_W) \odot \act( \mV \vx_t + \vb_V)$.
        \item A linear later on  $\Tilde{\ve}^{(1)}_t$ can then return an embedding $\Tilde{\ve}^{(2)}_t$ containing $\Hat{\Tilde{\vx_t}}$ in the first $\aux{D}$ coordinates.
        \item The final operation can be performed by using a \simulator{} Linear backpropagation module (\cref{sec:Linearappendix}) with the embeddings $\Hat{\ve}^2_t$ and the prefix embeddings containing $\mV, \vb_V$. The final embedding $\Tilde{\ve}_t$ contains $\mV^{\top} \Hat{\Tilde{\vx_t}}$ in the first $\aux{D}$ coordinates.
    \end{enumerate}

\end{enumerate}

After the two parallel computations, we can sum up $\Hat{\ve}_t$ and $\Tilde{\ve}_t$ to get an embedding $\ve_t$ containing $\overline{\losspartial{\vx_t}}$ (\cref{def:glu_backprop_appr}) in the first $\aux{D}$ coordinates.

\paragraph{Auxiliary GLU descent} Finally, the auxiliary's descent  updates the weight and the bias parameters using a batch of inputs $\{\vx_t\}_{t \le T}$ and the loss gradient w.r.t. the corresponding outputs $\{ \losspartial{\vy_t} \}_{t \le T}$.

\begin{definition}[Auxiliary GLU descent ]\label{def:glu_descent}
    For weights $\mW, \mV \in\RR^{\aux{D} \times \aux{D}}$ and bias $\vb_W, \vb_V \in \RR^{ \aux{D} }$, the linear descent layer takes in a batch of inputs $\{ \vx_t \in \RR^{\aux{D}} \}_{t \le \aux{T}}$ and gradients $\{ \losspartial{ \vy_t } \in \RR^{\aux{D}} \}_{t \le \aux{T}}$ and updates the parameters as follows:
    \begin{align*}
        &\mW \gets \mW - \eta \sum_{t \le \aux{T}} \Hat{\losspartial{\vx_t}} \vx_t^{\top}; \quad \quad
        \vb_W \gets \vb_W - \eta \sum_{t \le \aux{T}} \Hat{\losspartial{\vx_t}} ,\\ &
        \mV \gets \mV - \eta \sum_{t \le \aux{T}}  \Tilde{ \losspartial{\vx_t} } \vx_t^{\top}; \quad \quad
        \vb_V \gets \vb_V - \eta \sum_{t \le \aux{T}} \Tilde{ \losspartial{\vx_t} },
    \end{align*}
    where $\Hat{\losspartial{\vx_t}}$ and $\Tilde{ \losspartial{\vx_t} }$ have been computed as \cref{def:glu_backprop}.
\end{definition}
Due to similar concerns as gradient backpropagation, we instead use $\Hat{\Tilde{ \losspartial{\vx_t} }}$ (\cref{def:glu_backprop_appr}) in place of $\Tilde{ \losspartial{\vx_t} }$ for each $t \le \aux{T}$ to update $\mV, \vb_V.$

\paragraph{\simulator{} GLU descent module} 
We discuss the two descent operations separately.

\begin{enumerate}
    \item Update of $\mW, \vb_W$: We start with the embeddings $\Hat{\ve}^{(1)}_t$ from the  backpropagation module, that contain $\Hat{\losspartial{\vx_t}}$ in the first $\aux{D}$ coordinates. 

    For the update, we additionally require the input to the auxiliary GLU layer under consideration, and hence we copy $\vx_t$ from the Forward module using residual connections. Furthermore, we copy the contents of the prefix embeddings that contain $\mW, \vb_W$ from the Forward module.

    With both $\Hat{\losspartial{\vx_t}}$ and $\vx_t$ in the embeddings, the necessary operation turns out to be the descent update of a linear layer with parameters $\mW, \vb_W$. That implies, we can call a \simulator{} Linear descent module (\cref{sec:Linearappendix}) on the current embeddings and prefix embeddings to get the desired update.

    \item We start with the embeddings $\Tilde{\ve}^{(2)}_t$ from the  backpropagation module, that contain $\Tilde{\Hat{\losspartial{\vx_t}}}$ in the first $\aux{D}$ coordinates. 

    For the update, we additionally require the input to the auxiliary GLU layer under consideration, and hence we copy $\vx_t$ from the forward module using residual connections. Furthermore, we copy the contents of the prefix embeddings  that contain $\mV, \vb_V$ from the Forward module.

    With both $\Tilde{\Hat{\losspartial{\vx_t}}}$ and $\vx_t$ in the embeddings, the necessary operation turns out to be the descent update of a linear layer with parameters $\mV, \vb_V$. That implies we can call a \simulator{} Linear descent module on the current embeddings and prefix embeddings to get the desired update.
\end{enumerate}
\textit{Parameter sharing}: Since both the descent updates involve a Linear descent
module, we can additionally leverage parameter sharing to apply a single \simulator{} Linear descent module for each of the two computations, changing the input to correspond to $\{\Hat{\ve}^{(1)}_t\}$ and prefix to correspond to $\mW, \vb_W$, or the input to correspond to $\{\Tilde{\ve}^{(2)}_t\}$ and prefix to correspond to $\mV, \vb_V$ respectively.

\section{Construction of other variants of pre-trained models}\label{app:other_construction}

Though we only conduct experiments on an \textsc{OPT-125m} model, our construction is generally applicable to diverse variants of pre-trained language models.
\Cref{tab:construction} highlights many types of modules and the required size and computation for each.
The size of a constructed model is influenced by various factors, including the number of layers, and embedding dimension in the auxiliary. 

\section{Experiments}\label{app:experiment}

\begin{table*}[t]
  \centering
  \caption{Zero-shot and few-shot in-context learning results across $7$ downstream tasks. All the few-shot results are averaged over three training seeds. \textsc{TinT} consistently surpasses its auxiliary model and achieves comparable performance to Fine-tuninguation. \textsc{TinT} outperforms auxiliary models by $3-4\%$ and $12-16\%$ absolute points on average in $0$-shot and  $32$-shot experiments respectively. \textsc{TinT} performs competitively with a similar-sized pre-trained model (\textsc{opt-1.3b}) in both $0$-shot and $32$-shot settings. We show the standard deviation for few-shot settings in parentheses.
  }
  \label{tab:main_result_app}
  \resizebox{\textwidth}{!}{
  \begin{tabular}{lc|cccccccc}
    \toprule
    \textbf{Model} & \textbf{Shots} & \textbf{Subj} & \textbf{AGNews} & \textbf{SST2} & \textbf{CR} & \textbf{MR} & \textbf{MPQA} & \textbf{Amazon} & \textbf{Avg.} \\
    \midrule
\multicolumn{2}{c}{} & \multicolumn{8}{c}{\textbf{\textit{Without Calibration}}} \\ \midrule
\textsc{OPT-125m} & $0$ & $64.0$ & $66.0$ & $70.5$ & $64.5$ & $71.0$ & $68.0$ & $76.5$ & $68.6$ \\
\textsc{OPT-1.3b} & $0$ & $59.0$ & $55.5$ & $54.0$ & $50.5$ & $52.5$ & $74.0$ & $57.0$ & $57.5$ \\
\textsc{OPT-125m} Fine-tuning & $0$ & $71.0$ & $67.0$ & $79.5$ & $71.5$ & $70.0$ & $68.0$ & $85.5$ & $73.2$ \\
\rowcolor{gray!10}\textsc{OPT-125m TinT} & $0$ & $67.5$ & $66.0$ & $76.5$ & $69.0$ & $76.0$ & $70.5$ & $78.5$ & $72.0$ \\
 \midrule
\textsc{OPT-125m} & $32$ & $58.7_{(4.9)}$ & $33.7_{(8.4)}$ & $50.8_{(1.2)}$ & $51.3_{(1.9)}$ & $50.0_{(0.0)}$ & $54.3_{(2.5)}$ & $55.0_{(6.7)}$ & $50.5_{(1.9)}$ \\
\textsc{OPT-1.3b} & $32$ & $74.2_{(6.1)}$ & $71.3_{(5.3)}$ & $89.8_{(3.6)}$ & $71.5_{(4.5)}$ & $68.3_{(6.1)}$ & $81.7_{(3.3)}$ & $70.3_{(9.9)}$ & $75.3_{(0.4)}$  \\ 
\textsc{OPT-125m} Fine-tuning & $32$ & $78.0_{(1.4)}$ & $66.7_{(1.6)}$ & $71.5_{(1.4)}$ & $73.7_{(3.3)}$ & $72.0_{(0.0)}$ & $80.7_{(0.6)}$ & $79.8_{(0.2)}$ & $74.6_{(2.7)}$ \\
\rowcolor{gray!10}\textsc{OPT-125m TinT} & $32$ & 
$82.3_{(2.7)}$ & $69.3_{(0.9)}$ & $73.7_{(0.8)}$ & $75.7_{(1.9)}$ & $72.3_{(1.2)}$ & $83.2_{(1.0)}$ & $78.2_{(0.2)}$ & $76.4_{(0.7)}$ \\ 
\midrule
\multicolumn{2}{c}{} & \multicolumn{8}{c}{\textbf{\textit{With Calibration}}} \\ \midrule
\textsc{OPT-125m} & $0$ & $64.0$ & $66.0$ & $53.0$ & $54.5$ & $52.5$ & $55.5$ & $58.0$ & $57.6$ \\
\textsc{OPT-1.3b} & $0$ & $73.5$ & $61.5$ & $57.5$ & $53.0$ & $54.5$ & $79.5$ & $61.0$ & $62.9$ \\
\textsc{OPT-125m} Fine-tuning & $0$ & $62.5$ & $66.0$ & $60.5$ & $53.5$ & $54.0$ & $56.5$ & $74.5$ & $61.1$ \\
\rowcolor{gray!10}\textsc{OPT-125m TinT} & $0$ & $64.0$ & $66.0$ & $56.5$ & $59.0$ & $53.5$ & $62.0$ & $66.5$ & $61.1$ \\ \midrule
\textsc{OPT-125m} & $32$ & $83.5_{(2.4)}$ & $40.7_{(10.4)}$ & $50.8_{(0.8)}$ & $67.7_{(4.1)}$ & $57.7_{(10.8)}$ & $79.2_{(8.4)}$ & $56.0_{(8.1)}$ & $62.2_{(2.7)}$ \\
\textsc{OPT-1.3b} & $32$ & $51.8_{(1.9)}$ & $66.2_{(3.1)}$ & $93.7_{(1.0)}$ & $82.8_{(2.8)}$ & $91.3_{(1.9)}$ & $83.5_{(2.5)}$ & $92.0_{(2.9)}$ & $80.2_{(0.7)}$ \\ 
\textsc{OPT-125m} Fine-tuning & $32$ & $87.2_{(0.2)}$ & $67.2_{(0.6)}$ & $72.8_{(5.9)}$ & $73.3_{(2.6)}$ & $66.7_{(7.4)}$ & $81.5_{(3.7)}$ & $70.3_{(2.1)}$ & $74.1_{(2.9)}$ \\
\rowcolor{gray!10}\textsc{OPT-125m TinT} & $32$ & $85.3_{(1.9)}$ & $67.3_{(0.6)}$ & $71.8_{(3.8)}$ & $70.7_{(1.9)}$ & $63.7_{(0.2)}$ & $83.5_{(1.6)}$ & $77.5_{(1.2)}$ & $74.3_{(1.4)}$ \\ 
    \bottomrule
  \end{tabular}}
\end{table*}

\textbf{Computing environment}: All the experiments are conducted on a single A100 80G GPU. 

\textbf{Hyperparameters:}
In the few-shot setting, we employ three different random seeds to select distinct sets of training examples. Grid search is performed for each seed to determine the optimal learning rate for both constructed models and dynamic evaluation. The learning rates considered for the learning rate hyperparameter in the descent update operations in $\textsc{TinT}$ are $1e-3, 1e-4, 1e-5$.~\footnote{When utilizing the full-context loss, the learning rates considered are ${1e-5, 1e-6}$, and $1e-7$ due to gradient summations in \textsc{TinT}.} Additionally, we explore various layer-step combinations to allocate a fixed budget for one full forward pass. Specifically, we update the top 3 layers for 4 steps, the top 6 layers for 3 steps, or 12 layers for 1 step.

\paragraph{Calibration:} Recall from \cref{sec:experiments} that given a downstream task input (e.g., a movie review), the model's predicted label is computed as follows. First, we design a simple task-specific prompt (e.g., ``Sentiment:'') and select label words $c_1,...,c_n$ to serve as surrogates for each class (e.g., ``positive'' and ``negative''). Then, we provide the input along with the prompt to the model, and the label word assigned the highest probability is treated as the model's prediction. We compare \simulator{} to its baselines in two settings: no calibration (reported in \cref{tab:main_result} in the main paper), and with calibration. If using calibration, then the probabilities are normalized using just the prompt as input.\footnote{Calibration is not applied to the language modeling evaluation.}  
$$
\text{No Calibration:} \argmax_{c_i} \Pr[c_i\mid\text{input, prompt}] \qquad
\text{Calibration:} \arg\max_{c_i} \frac{\Pr[c_i\mid\text{input, prompt}]}{\Pr[c_i\mid \text{prompt}]}
$$
This is a widely used calibration technique~\citep{holtzman2021surface} for prompting language models.

\paragraph{Additional observations from \cref{tab:main_result_app}, compared to \cref{tab:main_result}:} In \cref{tab:main_result_app}, we have reported the comparisons with calibration in addition to the non calibration results reported in \cref{tab:main_result}. We observe that calibration may not always be beneficial in every setting.\footnote{Such inconsistencies in the calibration method have been observed in previous works~\citep{brown2020language}.} However, even with calibration, \simulator{} remains competitive to fine-tuning of OPT models. The performance of OPT-1.3B improves with calibration. In this case, \simulator{} lags behind OPT-1.3B in the few-shot setting.

\paragraph{Results of different settings.}
Table \ref{tab:setting_result} displays the results of few-shot learning with calibration across various settings, encompassing different loss types, input formats, and layer-step configurations. Our analysis reveals that employing a label-only loss, utilizing a single-example input format, and updating all layers of the internal model for a single step yield the most favorable average result. The performance of the multi-example format is disadvantaged when dealing with tasks of long sequences such as Amazon Polarity. In general, we observe that calibrated results tend to be more consistent and stable.
\begin{table}[hbt]
\centering
\caption{Few-shot ($k=32$) results with different loss types, input formats, and layer-step configurations with a fixed compute budget, with calibration.}
\label{tab:setting_result}
\resizebox{\textwidth}{!}{
\begin{tabular}{lccc|cccccccc}
\toprule
\textbf{Loss Type}                & \textbf{Format} & \textbf{Layer} & \textbf{Step} & \textbf{Subj} & \textbf{AGNews} & \textbf{SST2} & \textbf{CR}   & \textbf{MR}   & \textbf{MPQA} & \textbf{Amazon} & \textbf{Avg.} \\ \midrule
\textbf{Label} & Single & 12 & 1 & $66.0_{(1.9)}$ & $64.7_{(0.2)}$ & $68.7_{(1.3)}$ & $69.0_{(0.7)}$ & $63.7_{(0.2)}$ & $82.8_{(0.5)}$ & $73.7_{(0.6)}$ & $69.8_{(0.1)}$ \\
 & Single & 6 & 2 & $62.7_{(0.2)}$ & $66.3_{(0.2)}$ & $68.3_{(6.1)}$ & $67.2_{(0.2)}$ & $61.8_{(1.6)}$ & $81.0_{(3.6)}$ & $74.3_{(0.5)}$ & $68.8_{(1.4)}$ \\
 & Single & 3 & 4 & $63.5_{(0.0)}$ & $67.2_{(0.8)}$ & $62.5_{(0.4)}$ & $68.7_{(1.4)}$ & $61.7_{(0.6)}$ & $76.8_{(3.3)}$ & $75.2_{(0.8)}$ & $67.9_{(0.8)}$ \\ \cmidrule{2-12}
& Multi.  & 12 & 1 & $83.2_{(2.5)}$ & $43.7_{(6.6)}$ & $60.7_{(5.7)}$ & $70.3_{(6.1)}$ & $62.8_{(8.9)}$ & $84.2_{(1.6)}$ & $66.3_{(12.3)}$ & $67.3_{(0.9)}$ \\
 & Multi. & 6 & 2 & $83.5_{(2.9)}$ & $43.2_{(8.4)}$ & $52.0_{(1.5)}$ & $70.5_{(6.0)}$ & $58.5_{(11.3)}$ & $82.0_{(0.4)}$ & $55.8_{(7.6)}$ & $63.6_{(2.7)}$ \\
 & Multi. & 3 & 4 & $84.0_{(2.3)}$ & $42.3_{(8.4)}$ & $51.5_{(1.8)}$ & $68.2_{(4.6)}$ & $58.5_{(12.0)}$ & $80.2_{(2.1)}$ & $58.5_{(7.9)}$ & $63.3_{(3.0)}$\\ \midrule
\textbf{Full-context}  & Single & 12 & 1 & $64.5_{(0.4)}$ & $65.8_{(0.2)}$ & $63.2_{(0.9)}$ & $67.3_{(0.5)}$ & $60.8_{(1.4)}$ & $73.5_{(0.8)}$ & $75.0_{(0.4)}$ & $67.2_{(0.1)}$ \\
 & Single & 6 & 2 & $66.7_{(2.0)}$ & $66.0_{(0.4)}$ & $62.7_{(0.6)}$ & $70.5_{(2.1)}$ & $59.7_{(0.9)}$ & $77.7_{(2.2)}$ & $76.0_{(0.0)}$ & $68.5_{(0.4)}$ \\
 & Single & 3 & 4 & $64.0_{(0.0)}$ & $65.8_{(0.6)}$ & $65.0_{(1.9)}$ & $67.3_{(0.2)}$ & $59.5_{(0.4)}$ & $74.2_{(1.3)}$ & $77.0_{(1.9)}$ & $67.5_{(0.8)}$\\ \cmidrule{2-12}
 & Multi. & 12 & 1 & $83.8_{(2.9)}$ & $41.0_{(10.6)}$ & $51.2_{(0.8)}$ & $68.0_{(4.5)}$ & $58.3_{(11.1)}$ & $79.0_{(3.6)}$ & $56.0_{(8.1)}$ & $62.5_{(2.8)}$ \\
 & Multi. & 6 & 2 & $85.3_{(1.9)}$ & $41.2_{(10.7)}$ & $51.2_{(1.3)}$ & $67.7_{(4.5)}$ & $57.7_{(10.8)}$ & $79.2_{(3.7)}$ & $55.8_{(7.9)}$ & $62.6_{(2.6)}$ \\
 & Multi. & 3 & 4 & $83.3_{(2.5)}$ & $41.7_{(11.3)}$ & $51.0_{(1.1)}$ & $68.2_{(4.7)}$ & $57.7_{(10.8)}$ & $79.0_{(3.2)}$ & $56.0_{(8.1)}$ & $62.4_{(2.8)}$ \\ \bottomrule
\end{tabular}}
\end{table}

\begin{table}[hbt]
\centering
\caption{Few-shot ($k=32$) results with different loss types, input formats, and layer-step configurations with a fixed compute budget, without calibration.}
\label{tab:setting_result_plain}
\resizebox{\textwidth}{!}{
\begin{tabular}{lccc|cccccccc}
\toprule
\textbf{Loss Type}                & \textbf{Format} & \textbf{Layer} & \textbf{Step} & \textbf{Subj} & \textbf{AGNews} & \textbf{SST2} & \textbf{CR}   & \textbf{MR}   & \textbf{MPQA} & \textbf{Amazon} & \textbf{Avg.} \\ \midrule
\textbf{Label} & Single & 12 & 1 & $63.3_{(0.2)}$ & $65.7_{(0.2)}$ & $71.3_{(0.6)}$ & $65.0_{(1.4)}$ & $70.7_{(0.9)}$ & $65.0_{(0.0)}$ & $76.7_{(0.2)}$ & $68.2_{(0.1)}$ \\
& Single & 6 & 2 & $63.5_{(0.0)}$ & $65.2_{(0.5)}$ & $73.3_{(1.3)}$ & $68.5_{(3.7)}$ & $71.3_{(0.2)}$ & $66.0_{(0.0)}$ & $77.5_{(0.4)}$ & $69.3_{(0.3)}$ \\
& Single & 3 & 4 & $64.2_{(0.2)}$ & $66.5_{(1.1)}$ & $73.2_{(0.6)}$ & $75.7_{(0.5)}$ & $72.0_{(0.0)}$ & $83.2_{(1.0)}$ & $78.0_{(0.4)}$ & $73.2_{(0.1)}$ \\\cmidrule{2-12}
& Multi. & 12 & 1 & $64.5_{(7.8)}$ & $35.5_{(7.4)}$ & $56.8_{(9.7)}$ & $63.0_{(6.7)}$ & $58.7_{(8.9)}$ & $75.2_{(10.8)}$ & $62.2_{(8.3)}$ & $59.4_{(0.6)}$ \\
& Multi. & 6 & 2 & $77.7_{(7.0)}$ & $35.5_{(7.4)}$ & $57.0_{(9.9)}$ & $60.0_{(6.3)}$ & $52.3_{(2.1)}$ & $58.5_{(6.1)}$ & $55.8_{(7.9)}$ & $56.7_{(2.6)}$ \\
& Multi. & 3 & 4 & $67.5_{(11.5)}$ & $38.5_{(8.2)}$ & $55.3_{(5.2)}$ & $67.0_{(3.5)}$ & $61.0_{(8.0)}$ & $65.2_{(11.2)}$ & $62.5_{(8.9)}$ & $59.6_{(1.3)}$ \\ \midrule
\textbf{Full-context}  & Single & 12 & 1 & $65.5_{(1.1)}$ & $66.5_{(0.0)}$ & $70.7_{(0.2)}$ & $64.8_{(0.5)}$ & $72.0_{(1.4)}$ & $67.0_{(0.0)}$ & $76.5_{(0.0)}$ & $69.0_{(0.3)}$ \\
& Single & 6 & 2 & $64.7_{(0.6)}$ & $66.2_{(0.2)}$ & $71.2_{(0.2)}$ & $65.3_{(0.6)}$ & $71.5_{(0.4)}$ & $67.0_{(0.0)}$ & $76.7_{(0.2)}$ & $68.9_{(0.0)}$ \\
& Single & 3 & 4 & $64.2_{(0.2)}$ & $66.2_{(0.2)}$ & $71.3_{(0.2)}$ & $64.7_{(0.2)}$ & $71.0_{(0.0)}$ & $67.0_{(0.0)}$ & $76.5_{(0.0)}$ & $68.7_{(0.0)}$ \\\cmidrule{2-12}
& Multi. & 12 & 1 & $62.2_{(7.5)}$ & $33.8_{(8.3)}$ & $52.2_{(3.1)}$ & $52.8_{(4.0)}$ & $50.8_{(1.2)}$ & $55.8_{(4.3)}$ & $55.3_{(7.2)}$ & $51.9_{(2.2)}$ \\
& Multi. & 6 & 2 & $60.0_{(5.5)}$ & $33.7_{(8.4)}$ & $50.8_{(1.2)}$ & $52.2_{(2.4)}$ & $50.2_{(0.2)}$ & $54.3_{(2.5)}$ & $55.0_{(6.7)}$ & $50.9_{(1.8)}$ \\
& Multi. & 3 & 4 & $58.7_{(4.9)}$ & $33.7_{(8.4)}$ & $50.8_{(1.2)}$ & $51.3_{(1.9)}$ & $50.0_{(0.0)}$ & $54.3_{(2.5)}$ & $55.3_{(7.2)}$ & $50.6_{(2.0)}$ \\\bottomrule
\end{tabular}}
\end{table}

\end{document}
